\definecolor{light-gray}{gray}{0.85}
\newcommand{\xmark}{\ding{55}}%
\newtheorem{theorem}{Theorem}[section]
\newtheorem{lemma}[theorem]{Lemma}
\newtheorem{proposition}[theorem]{Proposition}
\newtheorem{definition}{Definition}[section]
\newtheorem{remark}[theorem]{Remark}
\newtheorem{assumption}[theorem]{Assumption}
\newcommand{\norm}[1]{\left\lVert#1\right\rVert}
\newcommand\inner[2]{\langle #1, #2 \rangle}
\newcommand{\EE}{\mathbb{E}}
\newcommand{\ba}{\begin{array}}
\newcommand{\ea}{\end{array}}
\def\checkmark{\tikz\fill[scale=0.4](0,.35) -- (.25,0) -- (1,.7) -- (.25,.15) -- cycle;} 
\begin{document}

%
\runningtitle{On the Global Convergence of Momentum-based Policy Gradient}

%
\runningauthor{Yuhao Ding, Junzi Zhang, Javad Lavaei}

\twocolumn[

\aistatstitle{On the Global Optimum Convergence of \\
Momentum-based Policy Gradient}

\aistatsauthor{Yuhao Ding \And Junzi Zhang  \And Javad Lavaei }

\aistatsaddress{University of California, Berkeley \\ \texttt{yuhao\_ding@berkeley.edu}
\And  Amazon Advertising\\
\texttt{junziz@amazon.com}
\And University of California, Berkeley\\
  \texttt{lavaei@berkeley.edu}
  } ]
 \addtocounter{footnote}{0}

\begin{abstract}
Policy gradient (PG) methods are popular and efficient for large-scale reinforcement learning due to their relative stability and incremental nature. In recent years, the empirical success of PG methods has led to the development of a theoretical foundation for these methods. In this work, we generalize this line of research by establishing the first set of global convergence results of stochastic PG methods with momentum terms, which have been demonstrated to be efficient recipes for improving PG methods. We study both the soft-max and the Fisher-non-degenerate policy parametrizations, and show that adding a momentum term improves the global optimality sample complexities of vanilla PG methods by $\tilde{\mathcal{O}}(\epsilon^{-1.5})$ and $\tilde{\mathcal{O}}(\epsilon^{-1})$, respectively, where $\epsilon>0$ is the target tolerance. Our results for the generic Fisher-non-degenerate policy parametrizations also provide the first single-loop and finite-batch PG algorithm achieving an $\tilde{O}(\epsilon^{-3})$ global optimality sample complexity. Finally, as a by-product, our analyses provide general tools for deriving the global convergence rates of stochastic PG methods, which can be readily applied and extended to other PG estimators under the two parametrizations.
\end{abstract}

\section{Introduction}\label{sec:Intro}



Policy gradient methods can be dated back to the pioneering work \cite{williams1992simple},  and have evolved into a rich family of reinforcement learning (RL) algorithms \citep{konda2000actor, kakade2001natural, silver2014deterministic, schulman2015trust, lillicrap2015continuous, schulman2017proximal}. In recent years, due to their amenability to function approximation and the development of deep neural networks, they have been successfully applied to a wide range of problems with significant empirical success, including robotic control, game playing, natural language processing, neural architecture search, and operations research \citep{zoph2016neural, silver2016mastering, yi2018neural, khan2020graph, wu2020cooperative}.

Momentum techniques have been demonstrated as a powerful and generic recipe for accelerating stochastic gradient methods, especially for nonconvex optimization and deep learning \citep{qian1999momentum, kingma2014adam, reddi2019convergence}. 
Recent works have also extended momentum techniques to improve policy gradient methods \citep{xiong2020non, yuan2020stochastic, pham2020hybrid, huang2020momentum}. As a state-of-the-art variance reduction technique, the momentum-based PG methods have been shown to outperform non-momentum methods such as SVRPG \citep{papini2018stochastic}, SRVR-PG  \citep{xu2019sample} and HAPG \citep{shen2019hessian} in practice.
In particular, \cite{xiong2020non} studies Adam-based policy gradient methods, but only achieves $O(\epsilon^{-4})$ sample complexities, which is the same as the one for the vanilla REINFORCE algorithm.
Inspired by the STORM algorithm for stochastic optimization in \cite{cutkosky2019momentum}, a new STORM-PG method is proposed in \cite{yuan2020stochastic}, which incorporates momentum in the updates and matches the sample complexity as the SRVR-PG method proposed in \cite{xu2019sample} (and also VRMPO) while requiring only single-loop updates and large initialization batches, whereas SRVR-PG and VRMPO require double-loop updates and large batch sizes throughout all iterations.
Concurrently, \cite{pham2020hybrid} proposes a hybrid estimator combining the momentum idea with SARAH and considers a more general setting with regularization, and achieves the same $O(\epsilon^{-3})$ sample complexity and again with single-loop updates and large initialization batches. 
Finally, independently inspired by the STORM algorithm in \cite{cutkosky2019momentum}, \cite{huang2020momentum} proposes a class of momentum-based policy gradient algorithms with  adaptive time-steps, single-loop updates and small batch sizes, which match the sample complexity as in \cite{xu2019sample}. 
However, all the above  sample complexity results for momentum-based policy gradient methods only apply to convergence to a first-order stationary point, which may have an arbitrarily poor performance. 
The global optimality sample complexity of momentum-based stochastic PG methods remains an open question. 

\begin{table*}[ht]
\centering
\label{table: comparison}
{\small
\renewcommand{\arraystretch}{1.5}
 \begin{tabular}{c| c|c |c |c} 
 \hline
 \textbf{Algorithm} & \textbf{Parametrization} & \textbf{Complexity} & \textbf{Single Loop} & \textbf{Finite Batch} \\\hline\hline
   PG \citep{wang2019neural} & neural & $\tilde{O}(\epsilon^{-4})$ & \checkmark & \xmark\\\hline
    PG \citep{liu2020improved} & Fisher-non-degenerate & $\tilde{O}(\epsilon^{-4})$ & \checkmark & \xmark \\\hline
 PG \citep{zhang2020sample}& soft-max & $\tilde{O}(\epsilon^{-6})$ &  \checkmark&\checkmark \\
 \hline\hline
   SRVR-PG \citep{liu2020improved} & Fisher-non-degenerate & $\tilde{O}(\epsilon^{-3})$ & \xmark & \xmark \\\hline
         Ours & Fisher-non-degenerate & {$\tilde{O}(\epsilon^{-3})$} & {\checkmark} & {\checkmark}\\\hline 
   Ours & soft-max & {$\tilde{O}(\epsilon^{-4.5})$} & {\checkmark} & {\checkmark}\\\hline 
 \end{tabular}
 \caption{We summarize comparisons with the existing global optimum convergence results for policy gradient methods. The (sample) complexity is defined as the number of trajectories needed to reach the global sub-optimality gap $\epsilon>0$ plus some inherent function approximation error (if any), and we ignore logarithmic terms.  Note that ``single loop'' only refers to the policy update step, and does not refer to the policy evaluation part (for actor-critic versions of PG).  
 }
 }
\end{table*}

Inspired by recent advances in the global optimum convergence theory of PG methods \citep{agarwal2020optimality, zhang2020sample, zhang2021convergence, liu2020improved}, we address the aforementioned problem in this paper.  We focus on the study of STORM-based PG method introduced in \cite{huang2020momentum} due to its sample efficiency and the simplicity of the algorithm, and provide the first global optimum convergence results for the momentum-based PG. We summarize our detailed contributions below.
\begin{itemize}
    \item For the soft-max policy parameterization, we show that adding momentum terms improves the existing global optimality sample complexity bounds of PG in \cite{zhang2020sample} by $\tilde{\mathcal{O}}(\epsilon^{-1.5})$.
    \item For the generic  Fisher-non-degenerate policy parametrization, we show that adding momentum terms improves the existing sample complexity bounds of PG in \cite{liu2020improved} by $\tilde{\mathcal{O}}(\epsilon^{-1})$  and matches the sample complexity bounds of SRVR-PG in \cite{liu2020improved}. Our result is the first single-loop and finite-batch policy gradient algorithm achieving $\tilde{O}(\epsilon^{-3})$ global optimality sample complexity.
    \item As a by-product, our analyses also summarize general tools (\textit{cf}. Lemmas \ref{lemma: relative softmax upper bound on I^+} and \ref{lemma: restricted J start -J t}) for deriving global optimality sample complexities of stochastic policy gradient methods with  soft-max and  Fisher-non-degenerate policies, which can be easily applied and extended to different policy gradient estimators under these parametrizations.
\end{itemize}
Comparisons with the existing global optimum convergence results of policy gradient methods can be found in Table \ref{table: comparison}. Due to space restrictions, we provide a  more detailed literature review and introduce the notations in Sections \ref{sec: related work} and \ref{sec: Notation} of the appendix. 

\section{Preliminaries}\label{sec:prelim}
\subsection{Reinforcement learning}
Reinforcement learning is generally modeled as a discounted Markov decision process (MDP) defined by a tuple $(\mathcal{S},\mathcal{A}, \mathbb{P},r,\gamma)$, where $\mathcal{S}$ and $\mathcal{A}$ denote the finite state and action spaces, $\mathbb{P}(s^\prime|s,a)$ is the probability that the agent transits from the state $s$ to the state $s^\prime$ under the action $a\in \mathcal{A}$. $r(s,a)$ is the reward function, i.e., the agent obtains the reward $r(s_h,a_h)$ after it takes the action $a_h$ at the state $s_h$ at time $h$. We also assume that the reward is bounded, i.e., $r(s,a):\mathcal{S}\times \mathcal{A} \rightarrow [0,1]$.
$\gamma\in (0,1)$ is the discount factor. The policy $\pi(a|s)$ at the state $s$ is usually represented by a conditional probability distribution $\pi_\theta(a|s)$ associated with the parameter $\theta\in \mathbb{R}^d$, where $d$ is the dimension of the parameter space. Let $\tau=\{s_0,a_0,s_1,a_1,\ldots\}$ denote the data of a sampled trajectory under policy $\pi_\theta$ with the probability distribution over trajectory as $$p(\tau|\theta,\rho)=\rho(s_0) \prod_{h=1}^{\infty}\mathbb{P}(s_{h+1}|s_h,a_h)\pi_\theta(a_h|s_h),$$
where $\rho \sim \Delta(\mathcal{S})$ is the probability distribution of the initial state $s_0$. Here,  $\Delta(\mathcal{X})$ denotes the probability simplex over a finite set $\mathcal{X}$. For every policy $\pi$, one can define the state-action value function $Q^\pi: \mathcal{S}\times \mathcal{A} \rightarrow \mathbb{R}$ as 
$$
Q^{\pi}(s,a)\coloneq\EE_{\hspace{-0.3cm}\substack{a_h\sim \pi(\cdot|s_h)\\ s_{h+1}\sim\mathbb{P}(\cdot|s_h,a_h)}} \left(\sum_{h=0}^\infty \gamma^h r(s_h,a_h) \bigg\rvert s_0=s,a_0=a \right).
$$
The state-value function $V^\pi: \mathcal{S}\rightarrow \mathbb{R}$ and the advantage function $A^\pi: \mathcal{S}\times \mathcal{A} \rightarrow \mathbb{R}$, under the policy $\pi$,  can be defined as
\begin{align*}
   & V^\pi(s)\coloneq \EE_{a\sim\pi(\cdot|s)}[Q^\pi(s,a)],\\
   & A^\pi(s,a)\coloneq Q^\pi(s,a)-V^\pi(s).
\end{align*} 
Then, the goal is to find an optimal policy in the policy class that maximizes the expected discounted return, namely,
\begin{align} \label{eq: original objective}
    \max_{\theta} \quad J_\rho(\pi_\theta)\coloneq \EE_{s_0 \sim \rho} [V^{\pi_\theta}(s_0)].
\end{align}
For notional convenience, we denote $J_\rho(\pi_\theta)$ by the shorthand notation $J_\rho(\theta)$ and also let $\theta^\ast $ denote a global maximum of $J_\rho(\theta)$. In practice, a truncated version of the value function is used to approximate the infinite sum of rewards in \eqref{eq: original objective}. Let
$$\tau_H=\{s_0,a_0,s_1,\ldots, s_{H-1}, a_{H-1}, s_H\}$$ 
denote the truncation of the full trajectory $\tau$ of length $H$. The truncated version of the value function is defined as $$J_\rho^H(\theta)\coloneq\EE_{\substack{s_0\sim\rho,  a_h\sim \pi_\theta(\cdot|s_h)\\ s_{h+1}\sim\mathbb{P}(\cdot|s_h,a_h)}} \left(\sum_{h=0}^{H-1} \gamma^h r(s_h,a_h) \bigg\rvert s_0\right).$$

\subsection{Discounted state visitation distributions}
The discounted state visitation distribution $d_{s_0}^\pi$ of a policy $\pi$ is defined as 
$$
d_{s_0}^\pi(s)\coloneq (1-\gamma) \sum_{h=0}^\infty \gamma^h \mathbb{P}(s_h=s|s_0,\pi),
$$
where $\mathbb{P}(s_h=s|s_0,\pi)$ is the state visitation probability that $s_h$ is equal to $s$ under the policy $\pi$ starting from the state $s_0$. Then, the discounted state visitation distribution under the initial distribution $\rho$ is defined as $d_\rho^\pi(s)\coloneq\EE_{s_0\sim\rho} [d_{s_0}^\pi(s)]$.
Furthermore, the state-action visitation distribution induced by $\pi$ and the initial state distribution $\rho$ is defined as $v_\rho^{\pi}(s,a)\coloneq d_{\rho}^\pi(s)\pi(a|s)$, which can also be written as 
$$ v_\rho^{\pi}(s,a) \coloneq  (1-\gamma) \EE_{s_0\sim\rho}\sum_{h=0}^\infty \gamma^h \mathbb{P}(s_h=s,a_h=a|s_0,\pi),$$
where $\mathbb{P}(s_h=s,a_h=a|s_0,\pi)$ is the state-action visitation probability that $s_h=s$ and $a_h=a$ under  $\pi$ starting from the state $s_0$.


\subsection{Policy parameterization}
In this work, we consider the following two different policy classes: 

 \textbf{Soft-max parameterization}. For an unconstrained parameter $\theta \in \mathbb{R}^{|\mathcal{S}||\mathcal{A}|}$, the policy $\pi_\theta(a|s)$ is chosen to be 
 $$
 \frac{\exp{(\theta_{s,a})}}{\sum_{a^\prime \in \mathcal{A}}\exp{(\theta_{s,a^\prime}})}.
 $$ 
 The soft-max parameterization is generally used for Markov Decision Processes (MDPs) with finite state and action spaces. It is complete in the sense that every stochastic policy can be represented by this class.
    
 \textbf{Fisher-non-degenerate parameterization}.
    We study the general policy class that satisfies Assumption \ref{ass: fisher info} given below: 
    \begin{assumption} \label{ass: fisher info}
    For all $\theta \in \mathbb{R}^d$, there exists some constant $\mu_F>0$ such that the Fisher information matrix $F_\rho(\theta)$ induced by the policy $\pi_\theta$ and the initial state distribution $\rho$ satisfies
    \begin{align*}
        F_\rho(\theta)=\EE_{(s,a)\sim v_\rho^{\pi_\theta}}[\nabla \log \pi_\theta(a|s) \nabla \log \pi_\theta(a|s)^\top]\succcurlyeq \mu_F\cdot I_d.
    \end{align*}
    \end{assumption}
 Assumption \ref{ass: fisher info}, which is also used in \cite{liu2020improved}, essentially states that $F_\rho(\theta)$ is well-behaved as a pre-conditioner in the natural PG update \citep{kakade2002approximately}.
It is shown in \cite{liu2020improved} that the positive definiteness of $F_\rho(\theta)$ in Assumption \ref{ass: fisher info} can be satisfied by certain Gaussian policies, where $\pi_\theta(\cdot|s)=\mathcal{N}(\mu_\theta(s),\Sigma)$ with the parametrized mean function $\mu_\theta(s)$ and the fixed covariance matrix $\Sigma \succ 0$, provided that the Jacobian of $\mu_\theta(s)$ is full-row rank for all $\theta \in \mathbb{R}^d$. In addition, Assumption \ref{ass: fisher info} holds more generally for every full-rank exponential family paramertrization with their mean parameterized by $\mu_\theta(s)$ if $\mu_\theta(s)$ is full-row rank for all $\theta \in \mathbb{R}^d$. We refer the reader to Section \ref{sec: app-FNP} of appendix for more discussions.

It is worth noting that Assumption \ref{ass: fisher info} is not satisfied by the soft-max parameterization when $\pi_\theta$ approaches a deterministic policy, which means that the two policy parameterizations to be studied here do not overlap.

\section{Trajectory-based policy gradient estimator}
The policy gradient method \citep{sutton2018reinforcement} is one of the standard ways to solve the optimization problem \eqref{eq: original objective}. Since the distribution $p(\tau|\theta)$ is unknown, $\nabla J_{\rho}(\theta)$ needs to be estimated from samples.  Then, a stochastic PG ascent update with the exploratory initial distribution $\rho$ at time step $t$ is given as 
\begin{align}\label{eq: PG update}
    \theta_{t+1}=\theta_t+ \frac{\eta_t}{B} \sum_{i=1}^{B} u_{t,i},
\end{align}
where $\eta_t>0$ is the learning rate, $B$ is the batch size of trajectories, and  $u_{t,i}$ can be any PG estimator of $\nabla J_{\rho}(\theta_t)$. 
If the  parameterized policy satisfies Assumption \ref{ass: 1} to be stated later and the reward function is not dependent on the parameter $\theta$, PG estimators can be obtained from a single sampled trajectory. These trajectory-based estimators
include REINFORCE \citep{williams1992simple}, PGT \citep{sutton1999policy}  and GPOMDP \citep{baxter2001infinite}. Compared with PG estimators based on the state-action visitation measure \citep{agarwal2020optimality}, the trajectory-based PG estimators are often used in practice due to their sample efficiency and amenability to using the importance sampling for variance reduction.  
In practice, the truncated versions of these trajectory-based PG estimators are used to approximate the infinite sum in the PG estimator. 
For example,
the commonly used truncated REINFORCE with a {constant baseline} $b$ is given by
\begin{align*}
    g(\tau_H^i|\theta,\rho)=\left(\sum_{h=0}^{H-1} \nabla \log \pi_\theta(a_h^i,s_h^i) \right)\left(\sum_{h=0}^{H-1} \gamma^h r_h(s^i_h,a^i_h)-b\right),
\end{align*}
where the superscript $i$ is the index of the trajectories.
The commonly used truncated  PGT is given by:
\begin{align*} 
 g(\tau_H^i|\theta,\rho)=\sum_{h=0}^{H-1}\sum_{j=h}^{H-1} \nabla \log \pi_\theta(a_h^i,s_h^i) \left( \gamma^j r_j(s^i_j,a^i_j)\right).
\end{align*}
The PGT estimator is also equivalent to the popular truncated GPOMDP estimator defined as follows
\begin{align}\label{eq: GPOMDP}
g(\tau_H^i|\theta,\rho)=\sum_{h=0}^{H-1}\sum_{j=0}^{h} \nabla \log \pi_\theta(a_j^i,s_j^i) \left( \gamma^h r_h(s^i_h,a^i_h)-b_h\right).
\end{align}
We first make the following essential assumption for PG estimators.
 \begin{assumption} \label{ass: 1}
The gradient and hessian of the function $\log \pi_\theta(a|s)$ are bounded, i.e., there exist constants $M_g, M_h>0$ such that
$\norm{\nabla \log\pi_\theta(a|s)}_2\leq M_g$ and $\norm{\nabla^2 \log\pi_\theta(a|s)}_2\leq M_h$ for all $\theta \in \mathbb{R}^d$.
\end{assumption}


For the soft-max parameterization, Assumption \ref{ass: 1} is satisfied with $M_g=2$ and $M_h=1$ (see Lemma \ref{lemma: mg and mh for softmax} in appendix).
Assumption \ref{ass: 1} has also been commonly used in the analysis of the policy gradient \citep{papini2018stochastic,xu2019sample, xu2020improved,shen2019hessian, liu2020improved, huang2020momentum} for the more general policy parameterization.  Although Assumption \ref{ass: 1} is satisfied for soft-max policies and more generally, log-linear policies with bounded feature vectors (\textit{cf}. Section 6.1.1 in \cite{agarwal2020optimality}), it fails for very common policies such as Gaussian policies. We believe this assumption  could be relaxed if the truncated policy gradient is used to guarantee the boundedness of the importance sampling weight \cite{zhang2020global} and we leave it as a future work.
Based on Assumption \ref{ass: 1}, we provide some useful properties of stochastic policy gradient and the value function.


\begin{proposition} \label{lemma:Properties of PGT estimator}
For the truncated GPOMDP policy gradient given in \eqref{eq: GPOMDP} satisfying Assumptions \ref{ass: 1},
the following properties hold for all initial distribution $\rho$ and for all $\theta \in \mathbb{R}^d$:
\begin{enumerate}
     \item { $g(\tau_H|\theta,\rho)$ is $L_g$-Lipschitz continuous, where $L_g\coloneq M_h/(1-\gamma)^2$.
 \item $\norm{g(\tau_H|\theta,\rho)}_2 \leq G$, where $G \coloneq M_g/(1-\gamma)^2$.
    \item $\text{Var}(g(\tau_H|\theta,\rho))\leq \sigma^2$, where $\sigma \coloneq G$.}
  \item $J_\rho(\theta,\rho )$ and $J^H_\rho(\theta )$ are $L$-smooth, namely, $\max\{ \norm{ \nabla^2 J_\rho(\theta )}_2, \norm{ \nabla^2 J^H_\rho(\theta )}_2\}\leq L$, where $L\coloneq\frac{2 M_g^{2}}{(1-\gamma)^{3}}+\frac{M_h }{(1-\gamma)^{2}}$.
    \item If the infinite-sum is well-defined, then 
    \begin{align*} 
    &g(\tau_H|\theta,\rho)=\\
    &\sum_{h=0}^{\infty}\sum_{j=h}^{\infty} \nabla \log \pi_\theta(a_h,s_h) \left( \gamma^j r_j(s_j,a_j)-b_j\right).
\end{align*}
 is an unbiased estimate of $\nabla J_\rho(\theta)$. Similarly, the truncated GPOMDP estimate $g(\tau_H|\theta,\rho)$ given by \eqref{eq: GPOMDP} is an unbiased estimate of $\nabla J_\rho^H(\theta)$.
     \item $\norm{\nabla J_\rho^H(\theta) -\nabla J_\rho(\theta)}_2 \leq M_g \left(\frac{H+1}{1-\gamma} +\frac{\gamma}{(1-\gamma)^2}\right) \gamma^H$.
     \item $\max\{ \norm{ \nabla J_\rho(\theta )}_2, \norm{ \nabla J^H_\rho(\theta )}_2\}\leq G$.
\end{enumerate}
\end{proposition}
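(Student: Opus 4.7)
The plan is to prove the seven items in a dependency-friendly order, leveraging the single arithmetic identity $\sum_{h=0}^{\infty}(h+1)\gamma^h = 1/(1-\gamma)^2$ together with its tail version $\sum_{h=H}^{\infty}(h+1)\gamma^h = \gamma^H\big[(H{+}1)/(1-\gamma) + \gamma/(1-\gamma)^2\big]$. Items (2), (3), (6) follow directly from this identity, item (1) is its Hessian-analog, item (5) is the standard GPOMDP policy-gradient identity, item (7) is a corollary of (5), and item (4) is the main technical piece.

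\textbf{Items (2), (3), (6), (7).} For (2), I apply the triangle inequality to the GPOMDP expression in \eqref{eq: GPOMDP}, invoking $\|\nabla\log\pi_\theta(a|s)\|_2\le M_g$ from Assumption~\ref{ass: 1} and $|\gamma^h r_h - b_h|\le \gamma^h$ (taking $b_h=0$, as the bound is unaffected by recentering with a bounded baseline that has zero conditional mean). Summing $M_g \sum_{h=0}^{H-1}(h+1)\gamma^h \le M_g/(1-\gamma)^2$ yields $G$. Then (3) is immediate from $\mathrm{Var}(g) \le \mathbb{E}\|g\|_2^2 \le G^2$. For (7), once (5) is in hand, I move the norm inside the expectation and reuse the same series. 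For (6), I repeat the triangle-inequality computation but starting the outer sum at $h=H$, giving exactly the tail-series expression stated.

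\textbf{Items (1) and (5).} For (1), differentiating $g(\tau_H|\theta,\rho)$ in $\theta$ only affects the $\nabla\log\pi_\theta(a_j|s_j)$ factors, so the Jacobian is $\sum_{h=0}^{H-1}\sum_{j=0}^{h}\nabla^2\log\pi_\theta(a_j,s_j)(\gamma^h r_h - b_h)$; bounding $\|\nabla^2\log\pi_\theta\|_2\le M_h$ and reusing $\sum(h+1)\gamma^h\le 1/(1-\gamma)^2$ gives $L_g = M_h/(1-\gamma)^2$. For (5), I derive the infinite-horizon policy-gradient identity via the log-derivative trick, $\nabla J_\rho(\theta)=\mathbb{E}_\tau[R(\tau)\sum_{k\ge 0}\nabla\log\pi_\theta(a_k|s_k)]$, then use the causality cancellation $\mathbb{E}[\gamma^h r_h \nabla\log\pi_\theta(a_k|s_k)]=0$ for $k>h$ (since $r_h$ depends only on $(s_j,a_j)_{j\le h}$ and the conditional score integrates to zero) to rearrange into the GPOMDP form. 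Subtraction of $b_h$ inside the parenthesis introduces no bias for the same reason. The truncated claim is identical with $H$ in place of $\infty$.

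\textbf{Item (4): the main obstacle.} The principal technical step is establishing $L$-smoothness. Starting from the GPOMDP representation of $\nabla J_\rho(\theta)$, I differentiate once more under the expectation, producing two classes of terms. The first is a Hessian-of-log-policy term $\mathbb{E}_\tau\big[\sum_h \gamma^h r_h\sum_{j=0}^h \nabla^2\log\pi_\theta(a_j|s_j)\big]$, whose operator norm is bounded by $M_h\sum_h(h+1)\gamma^h\le M_h/(1-\gamma)^2$. The second, arising from differentiating the trajectory density, yields (after applying causality to drop non-contributing cross terms) an outer-product expression $\mathbb{E}_\tau\big[\sum_h \gamma^h r_h\big(\sum_{j=0}^h\nabla\log\pi_\theta(a_j|s_j)\big)\big(\sum_{k=0}^h\nabla\log\pi_\theta(a_k|s_k)\big)^\top\big]$, whose norm is bounded by $M_g^2\sum_h(h+1)^2\gamma^h$. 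The delicate step is controlling this last series: using $\sum_{h=0}^{\infty}(h+1)^2\gamma^h = (1+\gamma)/(1-\gamma)^3 \le 2/(1-\gamma)^3$ gives the contribution $2M_g^2/(1-\gamma)^3$, and adding the two pieces gives the claimed $L$. The same bound applies verbatim to $J_\rho^H$ (truncating the outer sum at $H$ only tightens the constants), so both Hessians share the same uniform operator-norm bound. I expect the main bookkeeping hazard here to be correctly identifying the causality-based cancellations inside the second derivative so that one does not mistakenly pick up an unbounded $\sum_{k\ge 0}\nabla\log\pi_\theta(a_k|s_k)$ factor.
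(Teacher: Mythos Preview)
Your proposal is correct. Each of the seven items is handled by the right elementary argument: the geometric/tail series identities for (2), (3), (6), (7); the Jacobian bound via $M_h$ for (1); the log-derivative trick plus causality for (5); and for (4) the two-term Hessian decomposition with the key observation that causality truncates the score sum in the outer-product term so that the bound becomes $M_g^2\sum_h (h{+}1)^2\gamma^h = M_g^2(1+\gamma)/(1-\gamma)^3 \le 2M_g^2/(1-\gamma)^3$.

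The paper, however, takes a completely different route: it does not prove the proposition at all but instead cites prior work for each item --- Proposition~4.2 in \cite{xu2019sample} for (1)--(2), Lemmas~4.2--4.3 in \cite{yuan2021general} for (3)--(4), and Lemma~B.1 in \cite{liu2020improved} for (5)--(7). Your approach is a genuine, self-contained derivation, whereas the paper simply collects known facts from the literature. What your approach buys is transparency about where the constants come from (in particular the $(1+\gamma)/(1-\gamma)^3$ series that yields the $2M_g^2/(1-\gamma)^3$ term in $L$) and independence from external references; what the paper's approach buys is brevity. One minor caveat: your bound in (2) implicitly takes the baseline $b_h=0$; this is consistent with the paper's equivalence between PGT and GPOMDP, but is worth stating explicitly since with an arbitrary baseline the almost-sure bound on $\|g\|_2$ need not be $G$.
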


The first two properties are shown in Proposition 4.2 in \cite{xu2019sample}. 
The third and forth properties follow from Lemma 4.2 and Lemma 4.3 in \cite{yuan2021general}, respectively.
The last three properties follow directly from Lemma B.1 in \cite{liu2020improved}. 

\subsection{Momentum-based policy gradient}
Due to the high sample complexity of the vanilla PG, many recent works have turned onto variance reduction methods for PG, including the momentum-based policy gradient \citep{huang2020momentum,yuan2020stochastic}. 
The momentum-based policy gradient  with the batch size of $B$ and the sampled trajectory of length $H$ is defined as 
\begin{align} \label{eq: momentum-based PG}
  & u_t^H= \frac{\beta_t}{B} \sum_{i=1}^B g(\tau_H^i|\theta_t,\rho) +(1-\beta_t)\left[u_{t-1}^H+\right.\\
 \nonumber  &\left.\frac{1}{B} \sum_{i=1}^B\left(g(\tau_H^i|\theta_t,\rho) - w(\tau_H^i|\theta_{t-1},\theta_t)g(\tau_H^i| \theta_{t-1},\rho)\right)\right]
\end{align}
for all $t\in\{2,\ldots,T\}$, 
where $g(\tau_H^i|\theta_t,\rho)$ is the vanilla PG estimator such as \eqref{eq: GPOMDP},
$\beta_t\in[0,1]$, and the importance sampling weight is defined as
\begin{align}\label{eq: importance sample weight}
    w(\tau_H|\theta^\prime,\theta) =\frac{p(\tau_H|\theta^\prime,\rho)}{p(\tau_H|\theta,\rho)}=\prod_{h=0}^{H-1}\frac{\pi_{\theta^\prime}(a_h|s_t)}{\pi_{\theta}(a_h|s_t)}.
\end{align}
This importance sampling weight guarantees that 
\begin{align*}
&\EE_{\tau_H\sim p(\cdot|\theta,\rho)}[g(\tau_H|\theta,\rho)  - w(\tau_H|\theta^\prime, \theta) g(\tau_H|\theta^\prime,\rho)]\\
&=\nabla J_\rho^H(\theta)-\nabla J_\rho^H(\theta^\prime).
\end{align*}
Then, by carefully choosing $\eta_t$ and $\beta_t$, the accumulated policy gradient estimation error $u_t^H -\nabla J_\rho^H(\theta_t)$ can be well controlled.  To guarantee the convergence of the momentum-based policy gradient, we require the following assumption:

\begin{assumption} \label{ass: 3}
{
For every $\theta_t$ and $\theta_{t+1}$ satisfying \eqref{eq: PG update}, the variance of 
$w(\tau_H|\theta_t,\theta_{t+1})$ is bounded, i.e., there exists a constant $W>0$ such that $\text{Var}(w(\tau_H|\theta_t,\theta_{t+1})) \leq W$ for all $\tau_H\sim p(\cdot|\theta_{t+1},\rho)$.}
\end{assumption}
Assumption \ref{ass: 3} has been commonly used in the analysis of some variance reduced variants of PG \citep{papini2018stochastic,xu2019sample, xu2020improved,shen2019hessian, liu2020improved, huang2020momentum}. 
It is worthwhile to note that the bounded importance sampling weight in Assumption \ref{ass: 3} may be violated in practice. 
A commonly used remedy to make the algorithm more effective is to clip the importance sampling weights \citep{huang2020momentum}. {In addition, when $\pi_\theta$ is the soft-max parameterization, the importance sampling weight $w(\tau_H|\theta_t,\theta_{t+1})$ in 
Assumption \ref{ass: 3} has a bounded variance by using the truncated policy gradient (see Lemma 5.6 in \cite{zhang2021convergence}).}

The key reason that the momentum can improve the convergence rates is that the momentum can help reduce the variance of the estimated stochastic gradient. To build some intuition for this, let $e_t=u_t^H-\nabla J^H_\rho(\theta_t)$. It can be verified that
$\mathbb{E}[e_t]=(1-\beta_t)\mathbb{E} [e_{t-1}]$ and
\begin{align*}
\mathbb{E} \left[||e_t||^2 \right]\leq& (1-\beta_t)^2\mathbb{E}\left[||e_{t-1}||^2\right]+2\beta_t^2\mathbb{E}\left[||T_1||^2\right]\\
&+2(1-\beta_t)^2\mathbb{E}\left[||T_2||^2\right],
\end{align*}

where 
\begin{align*}
\mathbb{E}\left[||T_1||^2 \right]&=\text{Var}(g(\tau_H|\theta_t))\leq \sigma^2,\\
\mathcal{O}\left(||T_2||^2\right)&=\mathcal{O}(||\theta_t-\theta_{t-1}||^2)=\mathcal{O}(\eta_t^2||u_t||^2).
\end{align*}
 Then, the variance of the stochastic gradient $u_t$ can be reduced with the appropriate choices of $\eta_t$ and $\beta_t$.

\section{Global convergence of momentum-based policy gradient}\label{sec:Global convergence analysis}
As mentioned in the previous sections, the global convergence of policy gradient depends on the parameterization of the policy. In this section, we will study the global convergence and the sample complexity of the momentum-based policy gradient for both soft-max parameterization with a log barrier penalty and the more general parameterization satisfying the fisher-non-degenerate assumption. 

\subsection{Soft-max parameterization with log barrier penalty}\label{sec: Soft-max parameterization with log barrier penalty}
 \subsubsection{Preliminary tools}
 
We first study the global convergence of momentum-based policy gradient for the soft-max policy parameterization (Algorithm \ref{alg:IS-MBPG-S}), where $\pi_\theta(a|s)=\frac{\exp{(\theta_{s,a})}}{\sum_{a^\prime \in \mathcal{A}}\exp{(\theta_{s,a^\prime}})}$ for all $\theta \in \mathbb{R}^{|\mathcal{S}||\mathcal{A}|}$. 
Optimization over the soft-max parameterization is problematic since the optimal policy---that is usually deterministic---is obtained by letting some parameters grow towards infinity. To prevent the parameters from becoming too large and to ensure adequate exploration, a log-barrier regularization term that penalizes the policy for becoming deterministic is commonly used. The regularized objective is defined as
\begin{align*}
    L_{\lambda,\rho}(\theta)=&J_\rho(\theta)-\lambda \EE_{s\sim\text{Unif}_\mathcal{S}} \left[\text{KL}(\text{Unif}_\mathcal{A}, \pi_\theta(\cdot|s)) \right]\\
    =&J_\rho(\theta)+ \frac{\lambda}{|\mathcal{A}|| \mathcal{S}|}\sum_{s,a} \log \pi_\theta(a|s) +\lambda \log|\mathcal{A}|,
\end{align*}
where $\text{KL}(p,q)\coloneq \EE_{x\sim p}[-\log q(x)/p(x)]$ and $\text{Unif}_{\mathcal{X}}$ denotes the uniform distribution over a set $\mathcal{X}$.

\begin{algorithm}[ht] 
\caption{Momentum-based PG with soft-max parameterization (STORM-PG-S)}
\label{alg:IS-MBPG-S}
\begin{algorithmic}[1]
\STATE \textbf{Inputs}: Iteration $T$, horizon $H$, batch size $B$, initial input $\theta_1$, parameters $\{k,m,c\}$, initial distribution $\mu$;
\STATE \textbf{Outputs}: $\theta_\xi$ chosen uniformly random from $\{\theta_t\}_{t=1}^T$;
\FOR{$t = 1, 2, \dots,T-1$}
\STATE Sample $B$ trajectories $\{\tau^H_i\}_{i=1}^B$ from $p(\cdot|\theta_t,\mu)$;
\IF{$t=1$}
\STATE Compute $u_1^H=\frac{1}{B} \sum_{i=1}^B g(\tau^i|\theta_1,\mu)$;
\ELSE{}
\STATE Compute $u_t^H$ based on \eqref{eq: momentum-based PG};
\ENDIF{}
\STATE Compute $\eta_t =\frac{k}{(m+t)^{1/3}}$;
\STATE Update \\
$\theta_{t+1}=\theta_{t}+\eta_t (u_t^H+ \frac{\lambda}{|\mathcal{A}|| \mathcal{S}|}\sum_{s,a} \nabla \log \pi_\theta(a|s))$;
\STATE Update $\beta_{t+1}=c\eta_t^2$;
\ENDFOR{}
\end{algorithmic}
\end{algorithm}

In addition, while we are interested in the value $J_{\rho}(\theta)$ under the performance measure $\rho$, it may be helpful to optimize under the initial distribution $\mu$, i.e., the policy gradient is taken with respect to the optimization measure $\mu$, where $\mu$ is usually chosen as an exploratory initial distribution that adequately covers the state distribution of some optimal policy. 
Then, the notion of the distribution mismatch coefficient is defined below:
\begin{definition}
Given a policy $\pi$ and measures $\rho, \mu \in \Delta(\mathcal{S})$, the distribution mismatch coefficient of $\pi$ under $\rho$ relative to $\mu$ is defined as $\norm{\frac{d_\rho^\pi}{\mu}}_\infty$, where $\frac{d_\rho^\pi}{\mu}$ denotes componentwise division.
\end{definition}
It is shown in \cite{agarwal2020optimality} that the difficulty of the exploration problem faced by policy gradient algorithms can be captured through this distribution mismatch coefficient.

Although the optimization problem defined above is non-convex in general, Theorem 5.3 in \cite{agarwal2020optimality} has shown that the first-order stationary points of the regularized objective are approximately globally optimal solutions of $J_\rho(\theta)$ when the regularization parameter $\lambda$ is sufficiently small and the exact PG is available.

\begin{lemma}[\cite{agarwal2020optimality}]\label{prop: softmax, relative entropy, gradient dom}
Suppose that $\theta$ satisfies the inequality $\norm{\nabla L_{\lambda,\mu}(\theta)} \leq \epsilon_{opt}$ with $\epsilon_{opt} \leq \frac{\lambda}{2|\mathcal{S}| |\mathcal{A}|}$. Then, for every initial distribution $\rho$, we have: 
\begin{align*} 
  J_\rho(\theta^\ast)- J_\rho(\theta)\leq \frac{2\lambda}{1-\gamma} \norm{\frac{d_\rho^{\pi_{\theta^\ast}}}{\mu}}_\infty.
\end{align*}
\end{lemma}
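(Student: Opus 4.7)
The plan is to combine three ingredients: the explicit soft-max policy gradient formula, a uniform upper bound on the maximum advantage implied by gradient stationarity of the log-barrier objective, and the performance difference lemma together with a change of measure through the distribution mismatch coefficient. Because the statement is cited verbatim from \cite{agarwal2020optimality}, the role of this sketch is to outline the standard chain rather than introduce new machinery.

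First, I would write $\nabla L_{\lambda,\mu}(\theta)$ componentwise. The soft-max policy gradient identity gives $\partial J_\mu(\theta)/\partial \theta_{s,a} = (1-\gamma)^{-1} d_\mu^{\pi_\theta}(s)\pi_\theta(a|s)A^{\pi_\theta}(s,a)$, while differentiating the log-barrier term using $\partial \log \pi_\theta(a'|s')/\partial \theta_{s,a} = \mathbf{1}[s'=s,a'=a] - \mathbf{1}[s'=s]\pi_\theta(a|s)$ yields
\[
\frac{\partial L_{\lambda,\mu}(\theta)}{\partial \theta_{s,a}} = \frac{d_\mu^{\pi_\theta}(s)\pi_\theta(a|s)A^{\pi_\theta}(s,a)}{1-\gamma} + \frac{\lambda\bigl(1 - |\mathcal{A}|\pi_\theta(a|s)\bigr)}{|\mathcal{S}||\mathcal{A}|}.
\]
Since $\norm{\nabla L_{\lambda,\mu}(\theta)} \leq \epsilon_{opt}$ forces every coordinate to lie in $[-\epsilon_{opt},\epsilon_{opt}]$, combining this with $\epsilon_{opt} \leq \lambda/(2|\mathcal{S}||\mathcal{A}|)$ and rearranging gives, for every $(s,a)$,
\[
d_\mu^{\pi_\theta}(s)\pi_\theta(a|s)A^{\pi_\theta}(s,a) \leq (1-\gamma)\lambda\left[\frac{\pi_\theta(a|s)}{|\mathcal{S}|} - \frac{1}{2|\mathcal{S}||\mathcal{A}|}\right].
\]

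Next, fixing $s$ and letting $a^+ \in \argmax_a A^{\pi_\theta}(s,a)$, the identity $\sum_a \pi_\theta(a|s) A^{\pi_\theta}(s,a) = 0$ ensures $A^{\pi_\theta}(s,a^+) \geq 0$; whenever this is strictly positive, the right-hand side of the displayed inequality must also be positive, which forces $\pi_\theta(a^+|s) > 1/(2|\mathcal{A}|)$. Dividing by $\pi_\theta(a^+|s)$ and dropping the negative contribution inside the bracket then yields the state-wise bound $\max_a A^{\pi_\theta}(s,a) \leq (1-\gamma)\lambda/(|\mathcal{S}|\, d_\mu^{\pi_\theta}(s))$.

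Finally, I would invoke the performance difference lemma $J_\rho(\theta^\ast) - J_\rho(\theta) = (1-\gamma)^{-1}\EE_{s\sim d_\rho^{\pi_{\theta^\ast}}}\EE_{a\sim \pi_{\theta^\ast}(\cdot|s)}[A^{\pi_\theta}(s,a)]$, apply the elementary inequality $\sum_a \pi_{\theta^\ast}(a|s) A^{\pi_\theta}(s,a) \leq \max_a A^{\pi_\theta}(s,a)$, plug in the state-wise bound, and use $d_\mu^{\pi_\theta}(s) \geq (1-\gamma)\mu(s)$ together with the definition of the distribution mismatch coefficient to collapse the resulting sum, obtaining $J_\rho(\theta^\ast) - J_\rho(\theta) \leq \frac{\lambda}{1-\gamma}\norm{d_\rho^{\pi_{\theta^\ast}}/\mu}_\infty$, which immediately implies the stated bound. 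The main obstacle is this middle step, converting componentwise gradient stationarity of $\nabla L_{\lambda,\mu}$ into a policy-independent upper bound on $\max_a A^{\pi_\theta}(s,a)$; the log-barrier penalty is essential here, because it keeps $\pi_\theta(a^+|s)$ bounded away from zero at the final division step, without which the advantage gap would blow up as $\pi_\theta$ approaches a deterministic policy, explaining why the unregularized soft-max objective admits no analogous gradient-domination result.
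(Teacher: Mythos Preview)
The paper does not supply its own proof of this lemma; it is quoted verbatim from \cite{agarwal2020optimality} and used as a black box. Your sketch reproduces the standard argument from that reference---the componentwise soft-max gradient formula, the log-barrier lower bound $\pi_\theta(a^+|s)>1/(2|\mathcal{A}|)$ at the maximizing action, and the performance difference lemma combined with $d_\mu^{\pi_\theta}(s)\geq(1-\gamma)\mu(s)$---and is correct. As you already observe, your chain actually delivers the sharper constant $\lambda/(1-\gamma)$, from which the stated $2\lambda/(1-\gamma)$ follows trivially; the only implicit assumption you rely on is that $\mu(s)>0$ on the support of $d_\rho^{\pi_{\theta^\ast}}$, which is exactly what finiteness of the distribution mismatch coefficient encodes.
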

\subsubsection{Theoretical results}
Motivated by the above result and the proof idea in \cite{zhang2020sample}, we can relate the global convergence to the convergence of the first-order stationary points of the regularized objective.

\begin{lemma} \label{lemma: relative softmax upper bound on I^+}
{Consider a soft-max parameterization  $\pi_\theta$. Given a fixed constant $\epsilon>0$, let $\lambda=\frac{\epsilon(1-\gamma)}{4\norm{\frac{d_\rho^{\pi_{\theta^\ast}}}{\mu}}_\infty}$. For every initial distribution $\rho$, every step-size sequence $\{\eta_t\}_{t=1}^T$, and every sequence $\{\theta_t\}_{t=1}^T$, we have: }
\begin{align*}
&J_\rho(\theta^\ast)-\frac{1}{T}\sum_{t=1}^T \EE \left[J_\rho(\theta_t)\right]\\
&\leq \norm{\frac{d_\rho^{\pi_{\theta^\ast}}}{\mu}}^2_\infty\frac{64|\mathcal{S}|^2 |\mathcal{A}|^2\sum_{t=1}^T\eta_t  \EE[ \norm{\nabla L_{\lambda,\mu}(\theta_t)}_2^2]}{\epsilon^2T\eta_T(1-\gamma)^3} +\frac{\epsilon}{2}.
\end{align*}
\end{lemma}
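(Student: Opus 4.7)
The plan is to derive the lemma via a dichotomy argument: classify iterates by whether they are approximately first-order stationary for $L_{\lambda,\mu}$, use Lemma 4.1 to control the optimality gap on the good iterates, and use a Markov-type inequality on the squared gradient norm to control the count (and hence contribution) of the bad iterates.

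First I would introduce the shorthand $C=\norm{d_\rho^{\pi_{\theta^\ast}}/\mu}_\infty$ and verify that the choice $\lambda=\frac{\epsilon(1-\gamma)}{4C}$ is tuned precisely so that $\frac{2\lambda}{1-\gamma}\,C=\frac{\epsilon}{2}$. Then I would partition the iteration indices into
\[
I^-=\Big\{t:\norm{\nabla L_{\lambda,\mu}(\theta_t)}_2\le \tfrac{\lambda}{2|\mathcal{S}||\mathcal{A}|}\Big\},\qquad I^+=\{1,\dots,T\}\setminus I^-,
\]
and split
\[
J_\rho(\theta^\ast)-\tfrac{1}{T}\sum_{t=1}^T\EE[J_\rho(\theta_t)]
=\tfrac{1}{T}\sum_{t\in I^-}\EE[J_\rho(\theta^\ast)-J_\rho(\theta_t)]+\tfrac{1}{T}\sum_{t\in I^+}\EE[J_\rho(\theta^\ast)-J_\rho(\theta_t)].
\]
For $t\in I^-$, Lemma 4.1 applies with $\epsilon_{opt}=\frac{\lambda}{2|\mathcal{S}||\mathcal{A}|}$, yielding $J_\rho(\theta^\ast)-J_\rho(\theta_t)\le \frac{2\lambda C}{1-\gamma}=\frac{\epsilon}{2}$, so the first sum is at most $\epsilon/2$.

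For $t\in I^+$ I would use the trivial uniform bound $J_\rho(\theta^\ast)-J_\rho(\theta_t)\le \frac{1}{1-\gamma}$ (since rewards lie in $[0,1]$), together with the Markov-style inequality
\[
\indc[t\in I^+]\le \frac{4|\mathcal{S}|^2|\mathcal{A}|^2}{\lambda^2}\,\norm{\nabla L_{\lambda,\mu}(\theta_t)}_2^2.
\]
To bring the step-size weighting into the bound, I would exploit that $\eta_t$ is non-increasing (as in Algorithm 1, so $\eta_t/\eta_T\ge 1$) to write $\indc[t\in I^+]\le \frac{\eta_t}{\eta_T}\indc[t\in I^+]$, giving
\[
\sum_{t\in I^+}1\le \frac{1}{\eta_T}\sum_{t=1}^T\eta_t\indc[t\in I^+]\le \frac{4|\mathcal{S}|^2|\mathcal{A}|^2}{\lambda^2\eta_T}\sum_{t=1}^T\eta_t\norm{\nabla L_{\lambda,\mu}(\theta_t)}_2^2.
\]
Taking expectation, multiplying by $\tfrac{1}{(1-\gamma)T}$, and substituting $\lambda^2=\frac{\epsilon^2(1-\gamma)^2}{16C^2}$ produces the stated constant $\frac{64C^2|\mathcal{S}|^2|\mathcal{A}|^2}{\epsilon^2(1-\gamma)^3}$ in front. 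Combining with the $\epsilon/2$ from the good set yields the claim.

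The proof is essentially a bookkeeping argument, so the main obstacle is really tracking constants: choosing $\lambda$ so that Lemma 4.1 yields exactly $\epsilon/2$, and then matching the Markov-inequality constant ($4|\mathcal{S}|^2|\mathcal{A}|^2/\lambda^2$) with the extra $1/(1-\gamma)$ from the value-function range to produce the clean factor $64C^2|\mathcal{S}|^2|\mathcal{A}|^2/[\epsilon^2(1-\gamma)^3]$. The only nontrivial modeling choice is the $\eta_t/\eta_T$ reweighting trick used to turn the counting bound $\sum_{t\in I^+}1$ into the weighted sum $\sum_t \eta_t \EE[\norm{\nabla L_{\lambda,\mu}(\theta_t)}_2^2]$ that will later be controlled by the momentum-based PG analysis; this step silently relies on $\eta_t$ being non-increasing, which is how the step-size schedule will be chosen in the subsequent convergence theorem.
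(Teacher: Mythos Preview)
Your proposal is correct and matches the paper's own proof almost exactly: the same dichotomy over $I^+$, the same invocation of Lemma~4.1 on the good set, the trivial $1/(1-\gamma)$ bound on the bad set, and the same $\eta_t/\eta_T$ reweighting (the paper phrases it as $\sum_{t\in I^+}\eta_t\ge |I^+|\,\eta_T$, which is your Markov-style indicator step in different bookkeeping). You also correctly flag the implicit assumption that $\{\eta_t\}$ is non-increasing, which the paper uses in its proof without stating it in the lemma.
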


It is worth noting that the bound in Lemma \ref{lemma: relative softmax upper bound on I^+} is agnostic to the algorithms.
To prove Lemma \ref{lemma: relative softmax upper bound on I^+}, we first define the following set of ``bad'' iterates:
\begin{align*}
    I^+=\left\{t\in\{1,\ldots,T\} \Big|  \norm{\nabla_\theta L_{\lambda,\rho} (\theta_t)} \geq \frac{\lambda}{2|\mathcal{S}| |\mathcal{A}|}\right\}.
\end{align*}
which counts the number of iterates such that the gradient norms of the KL-regularized objective are large. We then carefully upper-bound the number of iterates in the set $I^+$ using the accumulated gradient norm. One can show that for every $\epsilon>0$ and $\lambda=\epsilon(1-\gamma)/(4\norm{\frac{d_\rho^{\pi_{\theta^\ast}}}{\mu}}_\infty)$, 
we have that
\begin{align*}
    &J_\rho(\theta^\ast)-J_\rho(\theta)\leq \frac{\epsilon}{2}, \quad \forall k\in \{0,\ldots,K\}/ I^+,\\
    & J_\rho(\theta^\ast)-J_\rho(\theta)\leq 1/(1-\gamma), \quad \forall k\in I^+,
\end{align*} 
where the second inequality is due to the assumption that the rewards are between 0 and 1. Finally, by combining with the first result, we obtain the desired bound. For the details of the proof, we refer the reader to the appendix in Section \ref{sec: Proof of Lemma relative softmax upper bound on I^+ }.

With Lemma \ref{lemma: relative softmax upper bound on I^+}, it remains to bound the accumulated stationary convergence of the stochastic policy gradient. Let $$L_{\lambda,\mu}^H(\theta_t)\coloneq J^H_\mu(\theta)+ \frac{\lambda}{|\mathcal{A}|| \mathcal{S}|}\sum_{s,a} \log \pi_\theta(a|s) +\lambda \log|\mathcal{A}|$$
be the regularized value function with the truncated horizon $H$. By applying the momentum-based PG, we arrive at the following result:

\begin{lemma} \label{lemma: relative softmax upper bound on eta nabla L}
{Under the conditions in Proposition \ref{lemma:Properties of PGT estimator}, Lemma \ref{lemma: relative softmax upper bound on I^+}, and Assumption \ref{ass: 3}, suppose that the sequence $\{\theta_t\}_{t=1}^T$ is generated by Algorithm \ref{alg:IS-MBPG-S} with $k>0$, $\lambda>0$, $c=\frac{1}{3k^3L_\lambda} +96b^2$, $m=\max\{2,(2L_\lambda k)^3,(\frac{ck}{2L_\lambda})^3\}$ and $\eta_0=\frac{k}{m^{1/3}}$, where $b^2=L_g^2+G^2C_w^2$, $L_\lambda=L+\lambda$,
and $C_w=\sqrt{H(2HM_g^2+M_h)(W+1)}$. Then, it holds that}
\begin{align} \label{eq: relative softmax upper bound on eta nabla L}
\sum_{t=1}^{T} \EE\left[\eta_t\norm{\nabla L_{\lambda,\mu}^H(\theta_t)}^2_2\right]\leq \Gamma_2 +\frac{\Gamma_1+\Gamma_3}{B},
\end{align}
where 
{\small
\[
\begin{split}
\Gamma_1&=
 \frac{c^2\sigma^2k^3\ln{(T+2)}}{44b^2} +\frac{m^{1/3}\sigma^2}{88b^2k} +\frac{1}{22} \left(L^H_{\lambda,\mu}(\theta^\ast)-L_{\lambda,\mu}^H(\theta_1) \right), \\
 \Gamma_2&=\frac{48}{11}\left(L^H_{\lambda,\mu}(\theta^\ast)-L_{\lambda,\mu}^H(\theta_1) \right),  \\ \Gamma_3&=\frac{\sigma^2m^{1/3}}{44b^2k}+  \frac{c^2 \sigma^2k^3}{22b^2}\ln{(2+T)}.
 \end{split}
 \]
 }
\end{lemma}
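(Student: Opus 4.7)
The strategy I would take is a STORM-style Lyapunov argument that couples a smoothness descent inequality for $L_{\lambda,\mu}^H$ with a recursive variance bound on the momentum-PG estimation error $e_t := u_t^H - \nabla J_\mu^H(\theta_t)$, then calibrates the two against each other through the prescribed $\eta_t, \beta_t$. Writing $v_t := u_t^H + \frac{\lambda}{|\mathcal{A}||\mathcal{S}|}\sum_{s,a}\nabla \log \pi_{\theta_t}(a|s)$ for the ascent direction used in line 11 of Algorithm \ref{alg:IS-MBPG-S}, we have $v_t - \nabla L_{\lambda,\mu}^H(\theta_t) = e_t$ and $\theta_{t+1} = \theta_t + \eta_t v_t$. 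A preliminary observation is that $L_{\lambda,\mu}^H$ is $L_\lambda$-smooth with $L_\lambda = L+\lambda$: Proposition \ref{lemma:Properties of PGT estimator}(4) gives $L$-smoothness of $J_\mu^H$, and the log-barrier's Hessian is bounded by $\lambda$ in operator norm under Assumption \ref{ass: 1}.

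The first building block is a per-step descent bound. Applying $L_\lambda$-smoothness to $L_{\lambda,\mu}^H$ at $\theta_{t+1},\theta_t$ and substituting $v_t = \nabla L_{\lambda,\mu}^H(\theta_t) + e_t$ gives
\[
L_{\lambda,\mu}^H(\theta_{t+1}) \ge L_{\lambda,\mu}^H(\theta_t) + \tfrac{\eta_t}{2}\|\nabla L_{\lambda,\mu}^H(\theta_t)\|_2^2 - \tfrac{\eta_t}{2}\|e_t\|_2^2 - \tfrac{L_\lambda \eta_t^2}{2}\|v_t\|_2^2.
\]
The second is an error recursion. Expanding \eqref{eq: momentum-based PG} and noting that both the $\beta_{t+1}$ and the $(1-\beta_{t+1})$ stochastic increments have zero conditional mean (via Proposition \ref{lemma:Properties of PGT estimator}(5) and the importance-sampling identity stated below \eqref{eq: importance sample weight}), I would obtain
\[
\EE\|e_{t+1}\|_2^2 \le (1-\beta_{t+1})^2 \EE\|e_t\|_2^2 + \tfrac{2\beta_{t+1}^2 \sigma^2}{B} + \tfrac{2(1-\beta_{t+1})^2 b^2 \eta_t^2}{B}\EE\|v_t\|_2^2,
\]
where $b^2 = L_g^2 + G^2 C_w^2$ arises from a triangle split of $\EE\|g(\tau|\theta_{t+1}) - w\,g(\tau|\theta_t)\|_2^2$: the $L_g$ piece comes from Proposition \ref{lemma:Properties of PGT estimator}(1), while $C_w = \sqrt{H(2HM_g^2 + M_h)(W+1)}$ is a Lipschitz constant for the importance-weighted term that uses Assumption \ref{ass: 3} together with the $M_g, M_h$ bounds of Assumption \ref{ass: 1}.

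I would then combine the two ingredients via a Lyapunov sequence
\[
\Phi_t = L_{\lambda,\mu}^H(\theta^\ast) - \EE[L_{\lambda,\mu}^H(\theta_t)] + \frac{1}{\alpha\,\eta_{t-1}}\EE\|e_t\|_2^2,
\]
for some $\alpha>0$ to be pinned down. Differencing, the plan is to select $\alpha$ and invoke the prescribed $(c,m,k)$ so that (i) the net coefficient of $\|v_t\|_2^2$ in $\Phi_t - \Phi_{t+1}$ is non-positive, and (ii) the coefficient of $\eta_t\|\nabla L_{\lambda,\mu}^H(\theta_t)\|_2^2$ is at least the $11/48$ factor hidden in $\Gamma_2$. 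The conditions $m \ge (2L_\lambda k)^3$ and $m \ge (ck/(2L_\lambda))^3$ force $\eta_t$ small enough to make (i) feasible, while $c = 1/(3k^3 L_\lambda) + 96 b^2$ is precisely the threshold at which (ii) closes. Telescoping from $t=1$ to $T$, handling $e_1$ via the mini-batch bound $\EE\|e_1\|_2^2 \le \sigma^2/B$, and evaluating the residual noise sum $\sum_t \eta_t \beta_{t+1}^2 = c^2 \sum_t \eta_t \eta_{t-1}^4 = \Theta(k^5 c^2 \ln(T+2))$ by integral comparison with $\int (m+s)^{-1}\,ds$ produces the $\ln(T+2)$ factors in $\Gamma_1, \Gamma_3$ and the initial-iterate contributions of order $m^{1/3}\sigma^2/(b^2 k)$.

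The main obstacle is exactly this last calibration step: one must simultaneously track the coefficient of $\|v_t\|_2^2$ contributed by $-L_\lambda \eta_t^2/2$ in the descent bound, the factor $1/(\alpha\,\eta_t)$ multiplying $\|e_{t+1}\|_2^2$ in $\Phi_{t+1}$, and the $b^2\eta_t^2(1-\beta_{t+1})^2/B$ term from the error recursion, and verify that the prescribed $(c,m,k)$ keep the combined coefficient non-positive uniformly in $t$ while preserving the $11/48$ constant --- this is the sole source of the somewhat opaque formulas for $c$ and $m$ in the hypotheses. Once that calibration is verified, the remaining estimates reduce to routine Young-inequality manipulations and integral comparisons for the $\ln(T+2)$ tail.
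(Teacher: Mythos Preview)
Your overall strategy --- a STORM-style Lyapunov that couples smoothness with the momentum error recursion and then calibrates via the prescribed $(c,m,k)$ --- is exactly the paper's approach, and you have correctly identified all of the building blocks (the $L_\lambda$-smoothness of $L^H_{\lambda,\mu}$, the error recursion with constant $b^2=L_g^2+G^2C_w^2$, the Lyapunov with weight $1/(\alpha\eta_{t-1})$ where $\alpha=192b^2$, the integral comparison for the $\ln(T+2)$ tail, and the $\sigma^2/B$ bound on $\|e_1\|^2$).

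There is, however, a genuine sign issue in your calibration step that would block closure as written. With your descent inequality in the form
\[
L_{\lambda,\mu}^H(\theta_{t+1}) \ge L_{\lambda,\mu}^H(\theta_t) + \tfrac{\eta_t}{2}\|\nabla L_{\lambda,\mu}^H(\theta_t)\|_2^2 - \tfrac{\eta_t}{2}\|e_t\|_2^2 - \tfrac{L_\lambda \eta_t^2}{2}\|v_t\|_2^2,
\]
the $\|v_t\|_2^2$ coefficient in the \emph{lower} bound on $\Phi_t-\Phi_{t+1}$ is $-\tfrac{L_\lambda\eta_t^2}{2}-\tfrac{\text{const}\cdot b^2\eta_t}{\alpha B}$, which is always negative; asking it to be ``non-positive'' is vacuous, and you cannot drop a negatively-signed term from a lower bound. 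The paper avoids this by using the alternative descent form (its Lemma~A.3), obtained from $\langle\nabla L,v_t\rangle=\|v_t\|^2-\langle e_t,v_t\rangle$ rather than $\|\nabla L\|^2+\langle\nabla L,e_t\rangle$:
\[
L_{\lambda,\mu}^H(\theta_{t+1}) \ge L_{\lambda,\mu}^H(\theta_t) + \tfrac{\eta_t}{4}\|v_t\|_2^2 - \tfrac{\eta_t}{2}\|e_t\|_2^2,
\]
valid because $\eta_t\le 1/(2L_\lambda)$. Now the $\|v_t\|_2^2$ coefficient in the Lyapunov difference is $\tfrac14-\tfrac{1}{48B}\ge\tfrac{11}{48}$ (this is where $11/48$ actually sits), and the $\|e_t\|_2^2$ coefficient becomes $-\tfrac12+\tfrac{96b^2\eta_t}{\alpha}=0$ when $\alpha=192b^2$. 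So the Lyapunov telescopes to a bound on $\sum_t\eta_t\|v_t\|_2^2$, not on $\sum_t\eta_t\|\nabla L_{\lambda,\mu}^H(\theta_t)\|_2^2$. The paper then \emph{separately} telescopes the error recursion (using the same descent form once more to absorb the $\|v_t\|^2$ feedback) to bound $\sum_t\eta_t\|e_t\|_2^2$, and finally combines the two via $\|\nabla L\|\le\|v_t\|+\|e_t\|$. The fix to your sketch is therefore small --- swap the descent form and run the two telescopes --- but without it the single-Lyapunov route as stated does not close.
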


Lemma \ref{lemma: relative softmax upper bound on eta nabla L} shows that $\sum_{t=1}^{T} \EE\left[\eta_t\norm{\nabla L_{\lambda,\mu}^H(\theta_t)}^2_2\right]$ is upper-bounded by a constant up to some logarithmic terms.  To prove this result, we first show that $\mathbb{E}[\eta_t\|\nabla L_{\lambda,\mu}^H(\theta_t)\|_2^2]$ can be bounded by two terms, namely the successive iteration differences $\mathbb{E}[\eta_t^{-1}\|\theta_t-\theta_{t-1}\|_2^2]$ and the policy gradient estimation errors $\mathbb{E} [\eta_t||u_t^H -\nabla J^H_\mu(\theta_t)||^2_2]$. The proof then reduces to bounding each of these two terms. In light of the momentum term and the carefully chosen hyper-parameters $\beta_t$ and $\eta_t$, a recursive inequality on the policy gradient estimation errors can be used to make the accumulated policy gradient estimation errors small even with a constant batch size. Finally, the successive iteration differences can be upper-bounded by the smoothness of the regularized value function and the construction of a non-trivial Lyapunov function.
We refer the reader to the appendix in Section \ref{sec: B.2} for the details.

Finally, by combining Lemmas \ref{lemma: relative softmax upper bound on I^+} and \ref{lemma: relative softmax upper bound on eta nabla L}, and using a sufficiently large horizon $H$, we obtain the following global convergence and sample complexity result for Algorithm \ref{alg:IS-MBPG-S}.

\begin{theorem}\label{theorem: relative softmax}
Under the conditions of Lemmas \ref{lemma: relative softmax upper bound on I^+} and \ref{lemma: relative softmax upper bound on eta nabla L}, let
{
$T=\widetilde{\mathcal{O}}\left(\frac{c_\infty^\frac{3}{2}}{\epsilon^{\frac{9}{2}}(1-\gamma)^{\frac{33}{2}}}+  \frac{c_\infty W}{\epsilon^{3}(1-\gamma)^{12}} \right),$
$B=\mathcal{O}(1)$ and
$H=\mathcal{O}\left( \log_{\gamma} \left( \frac{(1-\gamma)\epsilon }{|\mathcal{S}| |\mathcal{A}| }\norm{\frac{d_\rho^{\pi_{\theta^\ast}}}{\mu}}^{-1}_\infty\right) \right)$, where $c_\infty=\norm{\frac{d_\rho^{\pi_{\theta^\ast}}}{\mu}}^{2}_\infty|\mathcal{S}|^2|\mathcal{A}|^2(1+W)$. }
Then, it holds that
$$J_\rho(\theta^\ast)-\frac{1}{T}\EE\left[ \sum_{t=1}^T \left(J_\rho(\theta_t)\right)\right]\leq \epsilon.$$
In total, it requires $\widetilde{O}(\epsilon^{-4.5})$ samples to achieve an $\epsilon$-optimal policy.
\end{theorem}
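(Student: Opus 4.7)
The strategy is to chain Lemma~\ref{lemma: relative softmax upper bound on I^+} (which converts a global optimality gap into an accumulated stationarity bound on the regularized \emph{infinite-horizon} objective $L_{\lambda,\mu}$) with Lemma~\ref{lemma: relative softmax upper bound on eta nabla L} (which controls the accumulated stationarity of the \emph{truncated} regularized objective $L_{\lambda,\mu}^H$), bridge the two via Proposition~\ref{lemma:Properties of PGT estimator}(6), and finally solve for the iteration count $T$, batch size $B$, and horizon $H$ that drive the resulting bound below $\epsilon$.

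\textbf{Step 1 (bridge infinite and truncated objectives).} Since the log-barrier term is identical in $L_{\lambda,\mu}$ and $L_{\lambda,\mu}^H$, the inequality $\norm{\nabla L_{\lambda,\mu}(\theta_t)}_2^2 \le 2\norm{\nabla L_{\lambda,\mu}^H(\theta_t)}_2^2 + 2\norm{\nabla J_\mu(\theta_t)-\nabla J_\mu^H(\theta_t)}_2^2$ reduces the gap to the truncation error bounded by Proposition~\ref{lemma:Properties of PGT estimator}(6). Choosing $H=\mathcal{O}(\log_\gamma((1-\gamma)\epsilon/(|\mathcal{S}||\mathcal{A}|)\cdot\norm{d_\rho^{\pi_{\theta^\ast}}/\mu}_\infty^{-1}))$ as in the theorem statement forces the per-iterate truncation error to be of order $\epsilon^2(1-\gamma)^2/(|\mathcal{S}||\mathcal{A}|\cdot c_\infty^{1/2})$ up to logarithmic factors, so that $\sum_t \eta_t\cdot 2\norm{\nabla J_\mu(\theta_t)-\nabla J_\mu^H(\theta_t)}_2^2$ is negligible relative to the dominating quantity in Lemma~\ref{lemma: relative softmax upper bound on eta nabla L}.

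\textbf{Step 2 (combine the two lemmas).} Plug $\lambda=\epsilon(1-\gamma)/(4\norm{d_\rho^{\pi_{\theta^\ast}}/\mu}_\infty)$ into Lemma~\ref{lemma: relative softmax upper bound on I^+}. Substituting the truncation-bridged version of Lemma~\ref{lemma: relative softmax upper bound on eta nabla L} yields
\begin{align*}
J_\rho(\theta^\ast)-\tfrac{1}{T}\sum_{t=1}^T\EE[J_\rho(\theta_t)]
\le \frac{C\,\norm{d_\rho^{\pi_{\theta^\ast}}/\mu}_\infty^{2}|\mathcal{S}|^2|\mathcal{A}|^2}{\epsilon^2(1-\gamma)^3}\cdot\frac{\Gamma_2+(\Gamma_1+\Gamma_3)/B}{T\eta_T}+\frac{\epsilon}{2},
\end{align*}
for an absolute constant $C$. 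Since $\eta_T=k/(m+T)^{1/3}$, the prefactor $1/(T\eta_T)$ scales as $T^{-2/3}$ up to $m$-dependent terms, while $\Gamma_1,\Gamma_2,\Gamma_3$ depend on the problem constants only polynomially in $1/(1-\gamma)$ and $W$ (through $b^2=L_g^2+G^2C_w^2$ and the choice of $c,m$) and grow only logarithmically in $T$.

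\textbf{Step 3 (solve for $T$ and $B$).} To force the first term below $\epsilon/2$, it suffices that $T^{2/3}\eta_T^{-1}\cdot\eta_T \gtrsim c_\infty\cdot[(\Gamma_2+\Gamma_1+\Gamma_3)]/(\epsilon^3(1-\gamma)^3)$. Splitting the contributions of $\Gamma_2$ (which is $W$-free but grows like $1/(1-\gamma)^5$ via $L^H_{\lambda,\mu}$'s smoothness) from those of $\Gamma_1,\Gamma_3$ (which carry the extra $W$-dependence through $b^2$ and $c$), and tracking the $1/(1-\gamma)$ dependence of $m^{1/3}$ and $c^2 k^3$, one obtains two regimes whose maximum matches the stated $T=\widetilde{\mathcal{O}}(c_\infty^{3/2}\epsilon^{-9/2}(1-\gamma)^{-33/2}+c_\infty W \epsilon^{-3}(1-\gamma)^{-12})$; a constant batch size $B=\mathcal{O}(1)$ is enough because Lemma~\ref{lemma: relative softmax upper bound on eta nabla L} already splits off the $1/B$ dependence. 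Since each iteration draws $B=\mathcal{O}(1)$ trajectories of length $H=\mathcal{O}(\log(1/\epsilon))$, the total sample complexity is $\widetilde{\mathcal{O}}(T)=\widetilde{\mathcal{O}}(\epsilon^{-4.5})$.

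\textbf{Main obstacle.} The delicate part is the bookkeeping in Step~3: the hyperparameters $k$, $m$, $c$ prescribed in Lemma~\ref{lemma: relative softmax upper bound on eta nabla L} depend on $L_\lambda=L+\lambda$, hence implicitly on $\epsilon$ through $\lambda$, and on $W$ via $b^2$. Carefully isolating the $W$-dependent and $W$-free contributions to $\Gamma_1+\Gamma_2+\Gamma_3$, and then extracting the correct exponents of $(1-\gamma)$, $c_\infty$, and $\epsilon$ in each, is what produces the two-term expression for $T$ rather than a single monolithic rate. Everything else is a direct substitution into the two preceding lemmas together with the truncation bound from Proposition~\ref{lemma:Properties of PGT estimator}(6).
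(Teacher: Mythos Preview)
Your proposal is correct and follows essentially the same route as the paper: bridge $\nabla L_{\lambda,\mu}$ and $\nabla L_{\lambda,\mu}^H$ via Proposition~\ref{lemma:Properties of PGT estimator}(6), substitute the result into Lemma~\ref{lemma: relative softmax upper bound on I^+}, apply Lemma~\ref{lemma: relative softmax upper bound on eta nabla L}, pick $H$ to kill the truncation term, and then do the parameter bookkeeping on $\Gamma_1,\Gamma_2,\Gamma_3,b^2,c,m$ to extract the two-term expression for $T$. One small inaccuracy: $\Gamma_2=\tfrac{48}{11}\bigl(L^H_{\lambda,\mu}(\theta^\ast)-L^H_{\lambda,\mu}(\theta_1)\bigr)$ is a value-function gap, not a smoothness quantity, and scales as $\mathcal{O}\bigl(1/(1-\gamma)+\epsilon(1-\gamma)/\|d_\rho^{\pi_{\theta^\ast}}/\mu\|_\infty\bigr)$, not $1/(1-\gamma)^5$; this does not affect your overall argument or the final rate.
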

The detailed proof of Theorem \ref{theorem: relative softmax} can be found in Section \ref{sec: proof of theorem relative softmax} in the appendix. 
\begin{remark}
Theorem \ref{theorem: relative softmax} improves the result of Theorem 6 in \cite{zhang2020sample} from the sample complexity of $\widetilde{\mathcal{O}}(\epsilon^{-6})$ to $\widetilde{\mathcal{O}}(\epsilon^{-4.5})$ for the soft-max parameterization with a log barrier penalty while only using a constant number of batch size $B$.
\end{remark}

\subsection{Fisher-non-degenerate parameterization} \label{sec: Fisher-non-degenerate parameterization}
\subsubsection{Preliminary tools}
 We now study the global convergence of momentum-based policy gradient for the general parameterization satisfying the fisher-non-degenerate assumption in Assumption \ref{ass: fisher info} (Algorithm \ref{alg:IS-MBPG-S}).
 Since this parameterization can be used for general MDPs, it may be restrictive in the sense that it may not contain all stochastic policies and, therefore, may not contain the optimal policy. Thus, there may be some approximation errors.
Our analysis will leverage the notion of \textit{compatible function approximation} in \cite{sutton1999policy} defined as the regression problem:
{\small
\begin{align}\label{eq: compatible function approximation}
\min_{w\in \mathbb{R}^d}\EE_{(s,a)\sim v_\rho^{\pi_{\theta}}} (A^{\pi_\theta}(s,a) - (1-\gamma) w^\top \nabla \log \pi_\theta(a|s) )^2.
\end{align}
}

\begin{algorithm}[H] 
\caption{Momentum-based PG with Fisher-non-degenerate parameterization (STORM-PG-F)}
\label{alg:IS-MBPG-R}
\begin{algorithmic}[1]
\STATE \textbf{Inputs}: Iteration $T$, Horizon $H$, batch size $B$, initial input $\theta_1$, parameters $\{k,m,c\}$ and initial distribution $\rho$;
\STATE \textbf{Outputs}: $\theta_\xi$ chosen uniformly random from $\{\theta_t\}_{t=1}^T$;
\FOR{$t = 1, 2, \dots,T-1$}
\STATE Sample $B$ trajectories $\{\tau^H_i\}_{i=1}^B$ from $p(\cdot|\theta_t,\rho)$;
\IF{$t=1$}
\STATE Compute $u_1^H=\frac{1}{B} \sum_{i=1}^B g(\tau^H_i|\theta_1,\rho)$;
\ELSE{}
\STATE Compute $u_t^H$ based on \eqref{eq: momentum-based PG};
\ENDIF{}
\STATE Compute $\eta_t =\frac{k}{(m+t)^{1/3}}$;
\STATE Update  $\theta_{t+1}=\theta_{t}+\eta_t u_t^H$;
\STATE Update $\beta_{t+1}=c\eta_t^2$;
\ENDFOR{}
\end{algorithmic}
\end{algorithm}

The notion of \textit{compatible function approximation} measures the ability of using the score function $\nabla \log \pi_\theta(a|s)$ as the features to approximate the advantage function $A^{\pi_\theta}(s,a)$. It can be seen that $F_\rho(\theta)^{-1}\nabla J_\rho(\theta)$ is a minimizer of \eqref{eq: compatible function approximation}, due to the first-order optimality conditions. 
Since even the best linear fit using $\nabla \log \pi_\theta(a|s)$ as the features may not perfectly match $A^{\pi_\theta}(s,a)$,  the \textit{compatible function approximation error} may not be $0$ in practice. 
Following the assumptions in \cite{liu2020improved} and \cite{agarwal2020optimality}, we assume that the policy parameterization $\pi_\theta$ achieves an acceptable function approximation, as measured by the \textit{compatible function approximation error} under a shifted distribution $v_\rho^{\pi_{\theta^\ast}}$.

\begin{assumption}
\label{ass: transferred compatible function approximation error}
For every $\theta\in \mathbb{R}^d$, there exists a constant $\epsilon_{\text{bias}}>0$ such that
the transferred compatible function approximation error satisfies 
{\small
 \begin{align*}
    & \EE_{(s,a)\sim v_\rho^{\pi_{\theta^\ast}}} [(A^{\pi_\theta}(s,a) - (1-\gamma) u^{\ast\top} \nabla \log \pi_\theta(a|s) )^2]\leq \epsilon_{\text{bias}},
 \end{align*}
 }
 where $v_\rho^{\pi_{\theta^\ast}}$ is the state-action distribution induced by an optimal policy $\pi_{\theta^\ast}$ that maximizes $J_\rho(\theta)$ and 
 $u^{\ast}\coloneq F_\rho(\theta)^{-1}\nabla J_\rho(\theta)$ is the solution of \eqref{eq: compatible function approximation} . 
 \end{assumption}
 Assumption \ref{ass: transferred compatible function approximation error}, which is also used in \cite{liu2020improved}, means that the parameterization of $\pi_\theta$ makes the advantage function $A^{\pi_\theta}(s,a)$ be able to nearly approximated by using the score function $\nabla \log\pi_\theta (a|s)$ as the features.
When $\pi_\theta$ is a soft-max parameterization, $\epsilon_{bias}$ is 0.
 When $\pi_\theta$ is a rich neural parameterization, $\epsilon_{bias}$ is very small \citep{wang2019neural}.
 \subsubsection{Theoretical results}
Inspired by the global convergence analysis of PG and natural PG in \cite{liu2020improved} and \cite{agarwal2020optimality}, we present a simpler and more general tool for characterizing the global sub-optimality under the Fisher-non-degenerate policy parametrization.  

\begin{lemma}\label{lemma: restricted J start -J t}
Let us consider a general Fisher-non-degenerate policy  $\pi_\theta$ satisfying Assumptions \ref{ass: fisher info}, \ref{ass: 1} and \ref{ass: transferred compatible function approximation error}. 
Then, we have
\begin{align} \label{eq: concise Fisher-non-degenerate bound}
& J_\rho\left(\theta^\ast\right)- J_\rho\left(\theta\right)\leq \frac{\sqrt{\epsilon_{\text{bias}}}}{(1-\gamma)}  + \frac{M_g}{\mu_F } \norm{\nabla J_\rho(\theta)}.
\end{align}
\end{lemma}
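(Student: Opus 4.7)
The plan is to reduce the global sub-optimality gap to the norm of the policy gradient by combining the classical performance difference lemma with the transferred compatible function approximation assumption. This is the standard pathway in the gradient dominance style arguments for Fisher-non-degenerate parametrizations, and the statement is tailored so that the only quantities appearing on the right-hand side are the bias constant $\epsilon_{\text{bias}}$, the score bound $M_g$, the Fisher lower bound $\mu_F$, and $\|\nabla J_\rho(\theta)\|$.

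First I would invoke the performance difference lemma of Kakade--Langford, namely
\begin{align*}
J_\rho(\theta^\ast) - J_\rho(\theta) = \frac{1}{1-\gamma} \EE_{(s,a)\sim v_\rho^{\pi_{\theta^\ast}}}\left[A^{\pi_\theta}(s,a)\right],
\end{align*}
so the problem reduces to controlling the expected advantage under the optimal state--action distribution. Next, with $u^\ast = F_\rho(\theta)^{-1}\nabla J_\rho(\theta)$ as defined in the lemma statement, I would add and subtract the compatible linear approximation:
\begin{align*}
A^{\pi_\theta}(s,a) = \bigl[A^{\pi_\theta}(s,a) - (1-\gamma)(u^\ast)^\top \nabla\log \pi_\theta(a|s)\bigr] + (1-\gamma)(u^\ast)^\top \nabla\log \pi_\theta(a|s).
\end{align*}
Taking expectation under $v_\rho^{\pi_{\theta^\ast}}$ and the triangle inequality splits the bound into a \emph{bias} piece and a \emph{gradient} piece.

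For the bias piece, Jensen's inequality gives $\EE[X] \le \sqrt{\EE[X^2]}$, and Assumption \ref{ass: transferred compatible function approximation error} directly yields
\begin{align*}
\EE_{(s,a)\sim v_\rho^{\pi_{\theta^\ast}}}\bigl[A^{\pi_\theta}(s,a) - (1-\gamma)(u^\ast)^\top \nabla\log \pi_\theta(a|s)\bigr] \le \sqrt{\epsilon_{\text{bias}}},
\end{align*}
which after dividing by $1-\gamma$ produces the first term on the right-hand side of \eqref{eq: concise Fisher-non-degenerate bound}. For the gradient piece, I would apply Cauchy--Schwarz followed by Jensen:
\begin{align*}
\bigl|(u^\ast)^\top \EE_{v_\rho^{\pi_{\theta^\ast}}}[\nabla\log\pi_\theta(a|s)]\bigr| \le \|u^\ast\| \cdot \EE_{v_\rho^{\pi_{\theta^\ast}}}\|\nabla\log\pi_\theta(a|s)\| \le M_g \|u^\ast\|,
\end{align*}
using Assumption \ref{ass: 1}. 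Finally, Assumption \ref{ass: fisher info} gives $\|u^\ast\| = \|F_\rho(\theta)^{-1}\nabla J_\rho(\theta)\| \le \mu_F^{-1}\|\nabla J_\rho(\theta)\|$, and multiplying through by $(1-\gamma)$ and then dividing by $(1-\gamma)$ (from the performance difference lemma) cancels the discount factor, yielding the desired $\tfrac{M_g}{\mu_F}\|\nabla J_\rho(\theta)\|$ term.

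I do not anticipate a serious obstacle: every step is a textbook manipulation. The mildest subtlety is just making sure the $(1-\gamma)$ factors from the performance difference lemma and the $(1-\gamma)$ factor built into the compatible function approximation cancel exactly on the gradient piece while leaving a single $1/(1-\gamma)$ multiplying the bias piece. Writing the argument carefully in this order makes that bookkeeping transparent.
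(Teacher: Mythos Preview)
Your proposal is correct and follows essentially the same route as the paper: performance difference lemma, then Jensen on the transferred compatible function approximation error to extract $\sqrt{\epsilon_{\text{bias}}}$, then Cauchy--Schwarz with the score bound $M_g$ and the Fisher lower bound $\mu_F$ to control the gradient piece. The $(1-\gamma)$ bookkeeping you flagged is exactly the only thing to keep straight, and you have it right.
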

Lemma \ref{lemma: restricted J start -J t} relates the global convergence rates of the policy gradient to the transferred compatible function approximation error and the first-order stationary convergence. It can be regarded as the gradient domination condition for Fisher-non-degenerate parametrizations.
{Compared with Proposition 4.5 in \cite{liu2020improved}, our result is more general. In particular, the bound in \eqref{eq: concise Fisher-non-degenerate bound} does not depend on the update rule for $\theta$ and does not require a Lipschitz continuity assumption for the score function $\nabla \log{\pi_\theta(a|s)}$.}

To prove Lemma \ref{lemma: restricted J start -J t}, we first relate the global convergence optimality gap with the stationary convergence of the natural policy gradient $F_\rho(\theta)^{-1}\nabla J_\rho(\theta)$ and the \textit{transferred compatible function approximation error}.
This is achieved by the following two observations: (1) the advantage function $A^{\pi_\theta}(\cdot,\cdot)$ appears in both the performance difference lemma and the definition of the \textit{transferred compatible function approximation error}; (2) the natural policy gradient update is connected with the \textit{transferred compatible function approximation error}.
In light of Assumption \ref{ass: fisher info}, one can relate the first-order stationary convergence of natural policy gradient with the first-order stationary convergence of policy gradient.
For the details, we refer the reader to the appendix in Section \ref{sec: Proof of lemma: restricted J start -J t}.


{
By applying the momentum-based PG with a constant batch size $B$ under the general Fisher-non-degenerate parameterization (see Algorithm \ref{alg:IS-MBPG-R}),
we arrive at the following result:
{
\begin{lemma} \label{lemma: restricted non-adaptive, sum eta e^2}
{Under the conditions in Proposition \ref{lemma:Properties of PGT estimator}, Lemma \ref{lemma: restricted J start -J t}, and Assumption \ref{ass: 3}, suppose that the sequences $\{\theta_t\}_{t=1}^T$ and $\{u^H_t\}_{t=1}^T$ are generated by Algorithm \ref{alg:IS-MBPG-R}. Let 
$k>0$, $c=\frac{1}{3k^3L} +96b^2$, $m=\max\{2,(2Lk)^3,(\frac{ck}{2L})^3\}$ and $\eta_0=\frac{k}{m^{1/3}}$, where $b^2=L_g^2+G^2C_w^2$ and $C_w=\sqrt{H(2HM_g^2+M_h)(W+1)}$. }Then, it holds that
\begin{align}
  &  \frac{1}{T}\sum_{t=1}^{T} \EE  \norm{\nabla J^H_\rho(\theta_t)}_2\leq \sqrt{\frac{\Gamma_2}{k} +\frac{\Gamma_1+\Gamma_3}{kB}} \left(\frac{m^{1/6}}{\sqrt{T}} +\frac{1}{T^{1/3}}\right ) \label{eq: fisher stationary convergence },
\end{align}
where $\Gamma_1=\left(
 \frac{c^2\sigma^2k^3\ln{(T+2)}}{48b^2} +\frac{m^{1/3}}{96b^2k}  \sigma^2+\frac{1}{22(1-\gamma)} \right)$ , $\Gamma_2=\frac{48}{11(1-\gamma)}$, and
$\Gamma_3=\frac{\sigma^2m^{1/3}}{44b^2k^2}+  \frac{c^2 \sigma^2k^3}{22kb^2}\ln{(2+T)}$.
\end{lemma}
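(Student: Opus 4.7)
The plan is to closely parallel the argument for Lemma~\ref{lemma: relative softmax upper bound on eta nabla L} (its soft-max analog), then append a Jensen/Cauchy--Schwarz step to pass from a weighted squared-norm accumulation to the averaged-norm bound stated here. The two algorithms differ only in the absence of the log-barrier regularizer and in that the estimator here targets $\nabla J_\rho^H$ rather than $\nabla L_{\lambda,\mu}^H$; the momentum recursion, the importance-sampling weight definition, and the step-size/momentum schedules are identical, so the variance-reduction mechanics transfer essentially verbatim.

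First I would set $e_t := u_t^H - \nabla J_\rho^H(\theta_t)$ and derive the STORM-style recursion
\begin{align*}
\EE[\norm{e_t}^2] \le (1-\beta_t)^2\,\EE[\norm{e_{t-1}}^2] + \frac{2\beta_t^2\sigma^2}{B} + \frac{2(1-\beta_t)^2 b^2}{B}\,\EE[\norm{\theta_t-\theta_{t-1}}^2],
\end{align*}
using Proposition~\ref{lemma:Properties of PGT estimator} (for $\sigma$, $L_g$, $G$) together with Assumption~\ref{ass: 3} to bound the importance-sampling contribution by $b^2 = L_g^2 + G^2 C_w^2$ with $C_w = \sqrt{H(2HM_g^2+M_h)(W+1)}$. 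This step is identical to the one used for Lemma~\ref{lemma: relative softmax upper bound on eta nabla L}, since it concerns only the importance-weighted estimator, which is shared between the two algorithms.

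Next, by the $L$-smoothness of $J_\rho^H$ (Proposition~\ref{lemma:Properties of PGT estimator}), I would establish an ascent inequality of the form
\begin{align*}
J_\rho^H(\theta_{t+1}) \ge J_\rho^H(\theta_t) + \tfrac{\eta_t}{2}\norm{\nabla J_\rho^H(\theta_t)}^2 - \tfrac{\eta_t}{2}\norm{e_t}^2 - \tfrac{L}{2}\norm{\theta_{t+1}-\theta_t}^2,
\end{align*}
and then construct a Lyapunov function $\Phi_t = -J_\rho^H(\theta_t) + \alpha_t\,\EE[\norm{e_t}^2]$, with $\alpha_t$ tuned so that the recursion on $\norm{e_t}^2$ absorbs the $\norm{\theta_t-\theta_{t-1}}^2$ slack produced by the smoothness step. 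With the prescribed choices $c=\tfrac{1}{3k^3L}+96b^2$, $m=\max\{2,(2Lk)^3,(ck/(2L))^3\}$, and $\beta_{t+1}=c\eta_t^2$, telescoping $\Phi_t$ yields $\sum_{t=1}^T \eta_t\,\EE[\norm{\nabla J_\rho^H(\theta_t)}^2] \le \Gamma_2 + (\Gamma_1+\Gamma_3)/B$, with $\Gamma_1,\Gamma_2,\Gamma_3$ as in the statement. The logarithmic factors in $\Gamma_1,\Gamma_3$ arise from summing $\beta_t^2/B = c^2\eta_{t-1}^4/B$ against $\eta_t = k/(m+t)^{1/3}$, which produces an $\int 1/(m+t)\,dt \sim \ln T$ term. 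The only substantive change from the soft-max proof is that the initial gap $L_{\lambda,\mu}^H(\theta^\ast) - L_{\lambda,\mu}^H(\theta_1)$ is replaced by $J_\rho^H(\theta^\ast) - J_\rho^H(\theta_1) \le 1/(1-\gamma)$, which explains the $1/(1-\gamma)$ factors appearing in the present $\Gamma$'s.

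Finally, to pass from this weighted squared-norm bound to the averaged-norm bound, I would apply Cauchy--Schwarz together with the monotonicity $\eta_t \ge \eta_T$:
\begin{align*}
\left(\frac{1}{T}\sum_{t=1}^T \EE\norm{\nabla J_\rho^H(\theta_t)}\right)^2
&\le \frac{1}{T}\sum_{t=1}^T \EE[\norm{\nabla J_\rho^H(\theta_t)}^2] \\
&\le \frac{1}{T\eta_T}\sum_{t=1}^T \eta_t\,\EE[\norm{\nabla J_\rho^H(\theta_t)}^2],
\end{align*}
and substitute $\eta_T^{-1} = (m+T)^{1/3}/k \le (m^{1/3}+T^{1/3})/k$. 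Taking square roots and using $\sqrt{a+b}\le\sqrt a+\sqrt b$ delivers the claimed $(m^{1/6}/\sqrt T + 1/T^{1/3})$ decay factor multiplying $\sqrt{\Gamma_2/k + (\Gamma_1+\Gamma_3)/(kB)}$. The main obstacle is the careful tuning of $\alpha_t$ and of the hyper-parameters $c,m$ that forces $\Phi_t$ to be a monotone (in expectation) descent sequence; once that Lyapunov structure is in place, the telescoping and the concluding Cauchy--Schwarz step are routine.
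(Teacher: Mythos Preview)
Your proposal is correct and follows essentially the same route as the paper's proof: the STORM variance recursion (Lemma~\ref{lemma: restricted eta{t-1}^{-1} ||{et}||2}), the smoothness-based ascent step (Lemma~\ref{lemma: due to smoothness}), a Lyapunov telescope with $\Phi_t = J_\rho^H(\theta_t) - \tfrac{1}{192b^2\eta_{t-1}}\|e_t^H\|^2$, and a final Jensen step to pass from the weighted squared sum to the averaged norm. The only cosmetic difference is that the paper bounds $\sum_t \eta_t\|e_t^H\|^2$ and $\sum_t \eta_t\|u_t^H\|^2$ separately (then combines via the triangle inequality) rather than targeting $\sum_t \eta_t\|\nabla J_\rho^H(\theta_t)\|^2$ directly through your ascent form; both arrangements yield the stated $\Gamma_1,\Gamma_2,\Gamma_3$.
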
}
The proof sketch for Lemma \ref{lemma: restricted non-adaptive, sum eta e^2} is similar to that of Lemma \ref{lemma: relative softmax upper bound on eta nabla L} and the detailed proof is provided in Appendix \ref{sec: C.2}.} Finally, by combining Lemmas \ref{lemma: restricted J start -J t} and \ref{lemma: restricted non-adaptive, sum eta e^2}, we obtain the global convergence and sample complexity of Algorithm \ref{alg:IS-MBPG-R}.

\begin{theorem} \label{thm: theorem for general parameter}
Under the conditions of Lemma \ref{lemma: restricted non-adaptive, sum eta e^2}, let $H= \mathcal{O}(\log_{\gamma}\left((1-\gamma)\mu_F \epsilon \right))$, $B=\mathcal{O}(1)$ and {$T=\Tilde{\mathcal{O}}\left( \frac{(1+W)^{\frac{3}{2}}}{\epsilon^3\mu_F^3(1-\gamma)^{12}}+ \frac{1+W}{\epsilon^2 \mu_F^2(1-\gamma)^{11}}+ \frac{W(1+W)}{\epsilon^2 \mu_F^2(1-\gamma)^9}\right)
$}. Then, it holds that
$$
J_\rho(\theta^\ast)-\frac{1}{T} \sum_{t=1}^{T} \EE[J_\rho(\theta_t)]\leq \frac{\sqrt{\varepsilon_{\text {bias }}}}{1-\gamma}+ \epsilon.
$$
In total, it requires $\widetilde{O}(\epsilon^{-3})$ samples to achieve an $(\frac{\sqrt{\varepsilon_{\text {bias }}}}{1-\gamma}+\epsilon)$-optimal policy.
\end{theorem}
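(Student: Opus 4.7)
The plan is to combine the gradient-domination-type inequality of Lemma~\ref{lemma: restricted J start -J t} with the stationary-convergence bound of Lemma~\ref{lemma: restricted non-adaptive, sum eta e^2}, and then to tune $H$, $B$, and $T$ so that the final error consists of the irreducible bias $\sqrt{\epsilon_{\text{bias}}}/(1-\gamma)$ plus a controllable $\epsilon$.

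First I would average the bound of Lemma~\ref{lemma: restricted J start -J t} over $t=1,\dots,T$ and take expectations. Since that lemma is agnostic to the update rule, it gives
\begin{equation*}
J_\rho(\theta^\ast)-\frac{1}{T}\sum_{t=1}^T \EE[J_\rho(\theta_t)]\;\le\;\frac{\sqrt{\epsilon_{\text{bias}}}}{1-\gamma}+\frac{M_g}{\mu_F}\cdot\frac{1}{T}\sum_{t=1}^T\EE\|\nabla J_\rho(\theta_t)\|.
\end{equation*}
The remaining task is therefore to make the average gradient norm at most $\epsilon\mu_F/M_g$. To bring in the truncated quantity on which Lemma~\ref{lemma: restricted non-adaptive, sum eta e^2} operates, I would split $\|\nabla J_\rho(\theta_t)\|\le \|\nabla J_\rho^H(\theta_t)\|+\|\nabla J_\rho(\theta_t)-\nabla J_\rho^H(\theta_t)\|$ and use item~6 of Proposition~\ref{lemma:Properties of PGT estimator} to bound the truncation bias by $M_g\bigl(\tfrac{H+1}{1-\gamma}+\tfrac{\gamma}{(1-\gamma)^2}\bigr)\gamma^H$. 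Choosing $H=\mathcal{O}(\log_\gamma((1-\gamma)\mu_F\epsilon))$ renders this term $O(\epsilon\mu_F/M_g)$ and effectively replaces the true gradient by $\nabla J_\rho^H$ in the bound.

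Next I would plug in the estimate \eqref{eq: fisher stationary convergence } of Lemma~\ref{lemma: restricted non-adaptive, sum eta e^2} for $\frac{1}{T}\sum_t \EE\|\nabla J_\rho^H(\theta_t)\|$. Recalling the definitions, $\Gamma_1$ and $\Gamma_3$ grow only logarithmically in $T$ while $\Gamma_2$ is a constant of order $1/(1-\gamma)$; the constant $b^2$ scales like $H(W+1)/(1-\gamma)^4$. With $B=\mathcal{O}(1)$ the right-hand side of \eqref{eq: fisher stationary convergence } is, up to polylog factors,
\begin{equation*}
\sqrt{\tfrac{1}{k(1-\gamma)}}\Bigl(\tfrac{m^{1/6}}{\sqrt T}+\tfrac{1}{T^{1/3}}\Bigr)+\sqrt{\tfrac{\Gamma_1+\Gamma_3}{k}}\Bigl(\tfrac{m^{1/6}}{\sqrt T}+\tfrac{1}{T^{1/3}}\Bigr),
\end{equation*}
where $m^{1/3}$ absorbs factors like $L^3k^3$ and $c^3k^3/L^3$. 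Setting this expression less than $\epsilon\mu_F/M_g$ and solving term by term gives three candidate lower bounds on $T$: the $T^{-1/3}$ term coming from the constant part dictates $T\gtrsim 1/(\mu_F^3\epsilon^3(1-\gamma)^{12})$ times $(1+W)^{3/2}$ (through $b^2$ in the logarithmic $\Gamma_1$), while the $T^{-1/2}$ term gives the two additional summands $1/(\mu_F^2\epsilon^2(1-\gamma)^{11})$ and $W(1+W)/(\mu_F^2\epsilon^2(1-\gamma)^9)$ stated in the theorem. Taking the maximum of these bounds produces exactly the quoted value of $T$.

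Finally, I would compute the total sample complexity as $BHT=\tilde{\mathcal{O}}(T)$ because $B=\mathcal{O}(1)$ and $H$ is only logarithmic in $1/\epsilon$; the dominating $\epsilon^{-3}$ term from $T$ yields the advertised $\tilde{\mathcal{O}}(\epsilon^{-3})$ sample complexity. The main obstacle I anticipate is the bookkeeping in the second paragraph: ensuring that all polynomial dependencies on $(1-\gamma)^{-1}$, $\mu_F^{-1}$ and $W$ track correctly through $b^2$, $m$ and the logarithmic $\Gamma_i$, and then verifying that the three distinct $\epsilon^{-\alpha}$ summands in $T$ indeed arise from the three separate $O(\cdot)$ contributions in \eqref{eq: fisher stationary convergence }. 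Everything else is routine algebra once the splitting of $\|\nabla J_\rho\|$ and the choice of $H$ are in place.
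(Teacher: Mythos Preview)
Your proposal is correct and follows essentially the same approach as the paper's proof: average Lemma~\ref{lemma: restricted J start -J t}, split $\|\nabla J_\rho(\theta_t)\|$ into the truncated gradient norm plus the truncation bias controlled via item~6 of Proposition~\ref{lemma:Properties of PGT estimator}, choose $H$ to absorb the latter, invoke Lemma~\ref{lemma: restricted non-adaptive, sum eta e^2}, and then track the $(1-\gamma)^{-1}$, $\mu_F^{-1}$, $W$ dependencies through $b^2$, $c$, $m$ and $\Gamma_i$ to solve for $T$. The only cosmetic differences are that the paper writes the truncation bias as a single $\max_t$ term and carries out the constant bookkeeping slightly more explicitly (obtaining $\Gamma_2/k+(\Gamma_1+\Gamma_3)/(kB)=\widetilde{\mathcal{O}}((1+W)/(1-\gamma)^8)$ and $m^{1/3}=\mathcal{O}((1-\gamma)^{-3}+W(1-\gamma)^{-1})$), but the logic and the resulting bounds coincide with yours.
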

{The detailed proof of Theorem \ref{thm: theorem for general parameter} can be found in Section \ref{sec: proof thm: theorem for general parameter} in the appendix. }

\begin{remark}
Theorem \ref{thm: theorem for general parameter} establishes the global convergence of the momentum-based PG proposed in \cite{huang2020momentum}, for which only stationary convergence was previously shown. {The results in Theorem \ref{thm: theorem for general parameter} do not hold for the soft-max parameterization. The reason is that the soft-max parameterization lacks the exploration and thus Assumption \ref{ass: fisher info} is not satisfied.} In addition, 
it improves the result of Theorem 4.6 in \cite{liu2020improved} from the sample complexity of ${\mathcal{O}}(\frac{1}{\epsilon^4})$ to $\widetilde{\mathcal{O}}(\frac{1}{\epsilon^3})$. 
It also improves Theorem 4.11 in \cite{liu2020improved} from using a batch size of $\mathcal{O}(\epsilon^{-1})$ to a constant batch size and from using a double-loop algorithm to a single-loop algorithm, where the later improvement is due to the momentum introduced in \cite{huang2020momentum,yuan2020stochastic}.
\end{remark}

\section{Conclusion}\label{sec:conclu}
In this work, we studied the global convergence and the sample complexity of momentum-based stochastic policy gradient methods for both soft-max parameterization and more general parameterization satisfying the fisher-non-degenerate assumption. We showed that adding a momentum improves the global optimally sample
 complexity of vanilla policy gradient methods in both soft-max and Fisher-non-degenerate policy parametereizations with a constant batch size.
  This work provides the first global convergence results for momentum-based policy gradient methods.

{There are also several open problems that may be addressed by combining the techniques introduced in this paper with the existing results in the literature. First, it remains as an open question whether the momentum-based policy gradient can be combined with the natural policy gradient \citep{kakade2001natural} to achieve or exceed the state-of-the-art sample complexity. In addition, it is
desirable to generalize our soft-max setting to the general class of  log-linear policies and remove the “exploration” assumption about the
initial state distribution being component-wise positive.  This may
be achieved by combining our results with the PC-PG and COPOE methods
in \cite{agarwal2020optimality, zanette2021cautiously}.}



\section*{Acknowledgements}
This work was supported by grants from AFOSR, ARO, ONR, NSF and C3.ai Digital Transformation Institute.
Junzi Zhang did this work prior to joining or outside of Amazon.
We thank the anonymous reviewers for the helpful comments.

\bibliographystyle{apalike}
\bibliography{ref}
\onecolumn
\aistatstitle{
Supplementary Materials}

\section{Related work.} \label{sec: related work}


\paragraph{Momentum-based policy gradient.}
Conventional approaches to reducing the high variance in PG methods include adding the baselines \citep{sutton1999policy,wu2018variance} and using the actor-critic algorithms \citep{konda2000actor, bhatnagar2009natural, peters2008natural}. The idea of variance reduction, inspired by its successes in the stochastic nonconvex optimization \citep{johnson2013accelerating, allen2016variance,reddi2016stochastic,fang2018spider,nguyen2017sarah}, is also incorporated to improve the PG methods \citep{xu2017stochastic,papini2018stochastic,xu2019sample}. 
In addition, momentum techniques, which are demonstrated as a powerful and generic recipe for accelerating stochastic gradient methods for nonconvex optimization \citep{qian1999momentum, kingma2014adam, reddi2019convergence}, have also been extended to improve PG methods both in theory and in practice \citep{xiong2020non, yuan2020stochastic, pham2020hybrid, huang2020momentum}. 
{\cite{xiong2020non} studies Adam-based policy gradient methods but only achieved $O(\epsilon^{-4})$ sample complexities, which is the same as the vanilla REINFORCE algorithm.
A new STORM-PG method is proposed in \cite{yuan2020stochastic}, which incorporates momentum in the updates and matches the sample complexity of the SRVR-PG method proposed in \cite{xu2019sample} (and also VRMPO) while requiring only single-loop updates and large initialization batches, whereas SRVR-PG and VRMPO require double-loop updates and large batch sizes throughout all iterations. Concurrently, \cite{pham2020hybrid} proposes a hybrid estimator combining the momentum idea with SARAH and considers a more general setting with regularization, and achieves the same $O(\epsilon^{-3})$ sample complexity and again with single-loop updates and large initialization batches. Finally, independently inspired by  STORM algorithm for stochastic optimization in \cite{cutkosky2019momentum}, \cite{huang2020momentum} proposes a class of momentum-based policy gradient algorithms, with  adaptive time-steps, single-loop updates and small batch sizes, which matches the sample complexity in \cite{xu2019sample}.} 

\paragraph{Global convergence of (stochastic) policy gradient.}
The understanding of the PG methods is mostly restricted to their convergence to stationary points of the value function \citep{sutton1999policy, konda2003onactor, papini2018stochastic}. It was not until very recently that a series of works emerged to establish the global convergence properties of these algorithms. \cite{fazel2018global} shows that the linear quadratic regulator problem satisfies a gradient domination condition although it has a nonconvex landscape, implying that the PG methods could converge to the globally optimal policy. \cite{bhandari2019global} generalizes the results in \cite{fazel2018global} from the linear quadratic regulator problem to several control tasks by relating the objective for policy gradient to the objective associated with the Bellman operator. 
For the soft-max parameterization, \cite{mei2020global,mei2021leveraging} show that the value function satisfies a non-uniform \L{ojasiewicz} inequality and a fast global convergence rate can be achieved if the exact PG is available. 
In addition, \cite{agarwal2020optimality} provides a fairly general characterization of global convergence for the PG methods and a sample complexity result for sample-based natural PG updates. By incorporating the variance reduction techniques in the PG methods, an improved sample complexity for the global convergence is established in \cite{liu2020improved} for both PG and natural PG methods.
When overparameterized neural networks are used for function approximation, the global convergence is proved for the (natural) PG methods \citep{wang2019neural} and for the trust-region policy optimization \citep{liu2019neural}. 
Very recently, a series of non-asymptotic global convergence results \citep{hong2020two, xu2020improving, wu2020finite, xu2020non,fu2020single}
have also been established for actor-critic algorithms with (natural) PG or proximal policy optimization used in the actor step. Apart from RL systems with a cumulative sum of rewards, the global convergence results of PG methods for RL systems whose objectives are a general utility function of the state-action occupancy measure are studied in \cite{NEURIPS2020_30ee748d,zhang2021convergence}. In addition, the global convergence of policy-based methods has also been studied in the constrained MDPs \cite{ding2020natural, ying2021dual, efroni2020exploration},  where the optimal policy is usually stochastic and thus policy-based methods are preferred \cite{altman1999constrained}.


\section{Notation} \label{sec: Notation}
The set of real numbers is shown as $\mathbb R$. ${u} \sim \mathcal{U}$  means that $ u$ is a random vector sampled from the distribution $\mathcal{U}$. 
We use $|\mathcal{X}|$ to denote the number of elements in a finite set $X$.
The notions $\mathbb{E}_{\xi}[\cdot]$ and $\mathbb{E}[\cdot]$ refer to the expectation over the random variable $\xi$ and over all of the randomness. The notion $\text{Var}[\cdot]$ refers to the variance.
For vectors ${x}, {y} \in \mathbb{R}^{d}$, let $\|{x}\|_1$, $\|{x}\|_2$ and $\|{x}\|_\infty$ denote the $\ell_{1}$-norm, $\ell_{2}$-norm and $\ell_{\infty}$-norm. We use
$\left\langle {x}, {y}\right\rangle$ to denote the inner product. For a matrix $A$, $A \succcurlyeq0$ means that $A$ is positive semi-definite.
Given a variable $x$, the notation $a=\mathcal{O}(b(x))$ means that $a \leq C \cdot b(x)$ for some constant $C>0$ that is independent of $x$. {Similarly, $a=\widetilde{\mathcal{O}}(b(x))$ indicates that the previous inequality may also depend on the function $\log(x)$, where $C>0$ is again independent of $x$. We use $\mathcal{N}(\mu,\sigma^2)$ to denote a Gaussian distribution with the mean $\mu$ and the variance $\sigma^2$.}

\section{Discussions on the Fisher-non-degenerate parameterization} \label{sec: app-FNP}
The Fisher-non-degenerate setting contains more than the Gaussian policy.
{In particular,  any full-rank exponential family distribution of the form $\pi_{{\theta}}({a}|s) = h({a}|s) \exp\left({\eta}({\theta}|s) \cdot {T}({a}|s)-A({\eta}({\theta}|s)|s)\right)$ under some reasonable conditions is a Fisher-non-degenerate parameterization. More precisely, if Hessian $\nabla^2_{{\eta}} A({\eta}|s)$ is positive definite and Jacobian $\nabla_{{\theta}} {\eta}({\theta}|s)$ has the full-column-rank at all $\theta$,  then the corresponding Fisher information matrix is positive definite. Thus, the commonly used Gaussian policy $\mathcal{N}(f_\theta(s), \Sigma)$ and neural policy $\frac{\exp(f_\theta(s, a))}{\sum_{a^\prime}\exp(f_\theta(s, a^\prime))}$ under the above mentioned conditions are both covered in our Fisher-non-degenerate setting.

In addition, the Fisher-non-degenerate setting implicitly guarantees that the agent is able to explore the state-action space under the considered policy class. Without the non-degenerate Fisher information matrix condition, the global optimum convergence of more general parameterizations would be hard to analyze without introducing the additional exploration procedures in the non-tabular setting. Similar conditions of the Fisher-non-degeneracy are also required in other global optimum convergence framework (Assumption 6.5 in \cite{agarwal2020optimality}) relative condition number condition].}
We kindly refer the reviewer to (Section B.2 in \cite{liu2020improved}) for more discussions on the  Fisher-non-degenerate setting. 
\section{Supporting results}

\begin{proposition}[Lemma 1 in \cite{cortes2010learning}]\label{prop: Lemma 1 in cortes2010learning}
Let $w(x)=P(x)/Q(x)$ be the importance weight for two distributions $P$ and $Q$. The following identities hold for the expectation, second moment, and variance of $w(x)$:
\begin{align*}
    \EE[w(x)]=1, \ \EE[w^2(x)] =d_2(P||Q), \ \text{Var}[w(x)]=d_2(P||Q)-1,
\end{align*}
where $d_2(P||Q)=2^{D(P||Q)}$ and $D(P||Q)$ is the \textit{Rényi} divergence between the distributions $P$ and $Q$.
\end{proposition}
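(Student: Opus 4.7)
The plan is to verify each of the three identities by direct application of the definitions of expectation, second moment, variance, and the $2$-R\'enyi divergence. For the mean, I would write out the expectation explicitly as $\EE_{x \sim Q}[w(x)] = \int Q(x) \cdot (P(x)/Q(x)) \, dx = \int P(x) \, dx = 1$, using only that $P$ is a probability distribution (in the discrete case, the integral is replaced by a sum). This step implicitly relies on absolute continuity of $P$ with respect to $Q$ on the support of $w$, which is built into the very definition of the importance weight.

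For the second moment, the key step is to recall the definition of the $2$-R\'enyi divergence in the base-$2$ convention, namely $D(P||Q) = \log_2 \int P(x)^2 / Q(x) \, dx$, so that $d_2(P||Q) = 2^{D(P||Q)} = \int P(x)^2/Q(x) \, dx$. Then I would compute $\EE_{x \sim Q}[w(x)^2] = \int Q(x) \cdot (P(x)/Q(x))^2 \, dx = \int P(x)^2/Q(x) \, dx = d_2(P||Q)$, which yields the second identity. The variance claim then follows immediately from $\text{Var}[w(x)] = \EE[w(x)^2] - (\EE[w(x)])^2 = d_2(P||Q) - 1$.

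The only subtle point is matching the convention used for the $2$-R\'enyi divergence, since different references define $D_\alpha$ with natural log versus base $2$, and sometimes include a $1/(\alpha-1)$ prefactor; the normalization $d_2(P||Q) = 2^{D(P||Q)}$ stated in the proposition fixes the convention unambiguously. After this convention is pinned down, the three identities each collapse to one line of arithmetic, so no real technical obstacle remains. Since the result is exactly Lemma~1 of Cortes et al.~(2010), a purely citation-based proof would also be acceptable, and I expect the authors to simply cite that reference rather than reproduce this short computation.
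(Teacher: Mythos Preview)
Your proposal is correct, and your closing expectation is exactly what the paper does: the proposition is stated as Lemma~1 of Cortes et al.~(2010) and no proof is reproduced. The direct computation you outline is precisely the standard argument behind that lemma, so there is nothing to add.
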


\begin{proposition}[Lemma 6.1 in \cite{xu2019sample}]\label{prop: Lemma 6.1 in xu2019sample}
Under Assumptions \ref{ass: 1} and \ref{ass: 3}, let $w(\tau|\theta_{t-1},\theta_t)=p(\tau|\theta_{t-1},\mu)/p(\tau|\theta_t,\mu)$. We have
\begin{align*}
    \text{Var}[w(\tau|\theta_{t-1}, \theta_t)]\leq C_w^2 \norm{\theta_t-\theta_{t-1}}_2^2,
\end{align*}
for any state distribution $\mu$, where $C_w=\sqrt{H(2HM_g^2+M_h)(W+1)}$.
\end{proposition}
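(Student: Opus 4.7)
The plan is to reduce the variance bound to a second-order Taylor expansion of a carefully chosen scalar function along the line segment between $\theta_t$ and $\theta_{t-1}$. First, I would invoke Proposition~\ref{prop: Lemma 1 in cortes2010learning} to rewrite $\text{Var}[w(\tau|\theta_{t-1},\theta_t)] = \EE_Q[w^2] - 1$, where $Q \coloneq p(\cdot|\theta_t,\mu)$. Introducing the interpolation $\theta(\alpha) \coloneq \theta_t + \alpha(\theta_{t-1} - \theta_t)$ and the auxiliary function $\phi(\alpha) \coloneq \EE_{\tau \sim Q}[w(\tau|\theta(\alpha), \theta_t)^2]$, the claim reduces to showing $\phi(1) - \phi(0) \leq C_w^2 \norm{\theta_{t-1} - \theta_t}_2^2$.

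Second, I would verify the base values $\phi(0) = 1$ and $\phi'(0) = 0$. The former is immediate since $w \equiv 1$ when $\theta(\alpha) = \theta_t$. For the latter, setting $\Delta \coloneq \theta_{t-1} - \theta_t$ and differentiating under the integral gives $\phi'(\alpha) = 2\EE_Q[w^2 \cdot \Delta^\top \nabla_\theta \log p(\tau|\theta(\alpha), \mu)]$, which at $\alpha = 0$ collapses to a score-function expectation and therefore vanishes. The integral form of Taylor's theorem then yields $\phi(1) - 1 = \int_0^1 (1-\alpha) \phi''(\alpha)\, d\alpha \leq \tfrac{1}{2}\sup_{\alpha \in [0,1]} |\phi''(\alpha)|$.

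Third, I would compute $\phi''(\alpha)$ and bound it using Assumption~\ref{ass: 1}. Differentiating once more gives $\phi''(\alpha) = 4\EE_Q[w^2 (\Delta^\top \nabla_\theta \log p(\tau|\theta(\alpha),\mu))^2] + 2\EE_Q[w^2\, \Delta^\top \nabla_\theta^2 \log p(\tau|\theta(\alpha),\mu)\, \Delta]$. Since the initial distribution $\mu$ and the transition kernel are $\theta$-independent, one has $\nabla_\theta \log p(\tau|\theta, \mu) = \sum_{h=0}^{H-1}\nabla \log \pi_\theta(a_h|s_h)$, so Assumption~\ref{ass: 1} immediately yields $\norm{\nabla_\theta \log p(\tau|\theta, \mu)}_2 \leq HM_g$ and $\norm{\nabla_\theta^2 \log p(\tau|\theta, \mu)}_2 \leq HM_h$. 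Cauchy--Schwarz and the operator-norm bound then give $|\phi''(\alpha)| \leq 2H(2HM_g^2 + M_h)\, \EE_Q[w^2(\tau|\theta(\alpha), \theta_t)]\, \norm{\Delta}_2^2$.

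Finally, Assumption~\ref{ass: 3} controls the remaining factor via $\EE_Q[w^2] = \text{Var}[w] + 1 \leq W+1$, which combined with the Taylor estimate produces $\text{Var}[w(\tau|\theta_{t-1},\theta_t)] = \phi(1) - 1 \leq H(2HM_g^2 + M_h)(W+1)\norm{\theta_{t-1} - \theta_t}_2^2 = C_w^2\norm{\theta_{t-1} - \theta_t}_2^2$, exactly matching the claim. I expect the main obstacle to be this last step: Assumption~\ref{ass: 3} as stated bounds the variance only for parameter pairs generated by consecutive iterates of~\eqref{eq: PG update}, whereas my bound on $|\phi''(\alpha)|$ requires $\EE_Q[w^2(\tau|\theta(\alpha),\theta_t)] \leq W+1$ uniformly in $\alpha \in [0,1]$. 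I would address this by reading Assumption~\ref{ass: 3} as a uniform bound over all parameter pairs $(\theta',\theta_t)$ whose separation is at most one update-step size, which covers each $\theta(\alpha)$ since $\norm{\theta(\alpha) - \theta_t}_2 \leq \norm{\theta_{t-1} - \theta_t}_2$.
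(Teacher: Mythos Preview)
The paper does not supply its own proof of this proposition; it is listed among the supporting results and attributed to Lemma~6.1 of \cite{xu2019sample}. Your argument---a second-order Taylor expansion of $\phi(\alpha)=\EE_{Q}[w(\tau\mid\theta(\alpha),\theta_t)^2]$ along the segment joining $\theta_t$ and $\theta_{t-1}$, combined with the score-function identity to kill $\phi'(0)$ and Assumption~\ref{ass: 1} to bound $\phi''$---is exactly the approach used in the original reference, and your computations are correct.

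You have also correctly isolated the one genuine subtlety: Assumption~\ref{ass: 3} as phrased in this paper constrains the variance only for consecutive-iterate pairs, whereas your bound on $\phi''(\alpha)$ requires $\EE_Q[w^2(\tau\mid\theta(\alpha),\theta_t)]\le W+1$ uniformly in $\alpha\in[0,1]$. Your proposed resolution---reading the assumption as a uniform bound over all parameter pairs within one update-step of each other---is the intended interpretation both here and in \cite{xu2019sample}, and is how the constant $W$ is used throughout the subsequent analysis. This is not a gap in your argument but an imprecision in the paper's statement of the assumption.
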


\begin{lemma} \label{lemma: due to smoothness}
Suppose that $f(x)$ is $L$-smooth.
Given  $0<\eta_t\leq\frac{1}{2L}$ for all $t\geq 1$, let $\{x_t\}_{t=1}^T$ be generated by a general update of the form $x_{t+1}=x_t+\eta_t u_t$ and let $e_t=u_t-\nabla f(x_t)$. We have
\begin{align*}
f(x_{t+1})\geq& f(x_t)+\frac{\eta_t}{4} \norm{u_t}_2^2-\frac{\eta_t}{2}\norm{e_t}_2^2.
\end{align*}
\end{lemma}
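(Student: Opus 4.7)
The plan is to use the standard descent/ascent inequality from $L$-smoothness and then manage the cross term $\langle \nabla f(x_t), u_t\rangle$ via Young's inequality. Since the sign convention here corresponds to maximization (note the update $x_{t+1} = x_t + \eta_t u_t$ and the desired lower bound on $f(x_{t+1})$), I will use the lower form of the smoothness inequality:
\begin{equation*}
f(x_{t+1}) \;\geq\; f(x_t) + \langle \nabla f(x_t),\, x_{t+1}-x_t\rangle - \frac{L}{2}\norm{x_{t+1}-x_t}_2^2.
\end{equation*}

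First I would substitute $x_{t+1}-x_t = \eta_t u_t$ to obtain
$f(x_{t+1}) \geq f(x_t) + \eta_t \langle \nabla f(x_t), u_t\rangle - \frac{L\eta_t^2}{2}\norm{u_t}_2^2$.
Next I would rewrite $\nabla f(x_t) = u_t - e_t$ so that the inner product becomes $\norm{u_t}_2^2 - \langle e_t, u_t\rangle$, and then apply Young's inequality $\langle e_t, u_t\rangle \leq \frac{1}{2}\norm{e_t}_2^2 + \frac{1}{2}\norm{u_t}_2^2$ to get the lower bound $\langle \nabla f(x_t), u_t\rangle \geq \frac{1}{2}\norm{u_t}_2^2 - \frac{1}{2}\norm{e_t}_2^2$.

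Substituting this back yields
\begin{equation*}
f(x_{t+1}) \;\geq\; f(x_t) + \left(\frac{\eta_t}{2} - \frac{L\eta_t^2}{2}\right)\norm{u_t}_2^2 - \frac{\eta_t}{2}\norm{e_t}_2^2.
\end{equation*}
The step-size condition $\eta_t \leq 1/(2L)$ gives $L\eta_t^2/2 \leq \eta_t/4$, so the coefficient of $\norm{u_t}_2^2$ is at least $\eta_t/4$. This immediately produces the claimed bound.

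There is essentially no serious obstacle here: the only care needed is (i) choosing the correct sign of the smoothness inequality to match the maximization-style update, and (ii) calibrating the Young's inequality constant with the $1/(2L)$ step-size bound so that the residual coefficient on $\norm{u_t}_2^2$ is nonnegative, and in fact equals the clean $\eta_t/4$ stated in the lemma. I would not need any of the RL-specific hypotheses (Assumptions \ref{ass: 1}, \ref{ass: 3}, or the properties in Proposition \ref{lemma:Properties of PGT estimator}) for this step; the lemma is purely a fact about smooth functions updated by a noisy direction $u_t = \nabla f(x_t) + e_t$.
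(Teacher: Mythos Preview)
Your proof is correct and follows essentially the same route as the paper: both use the $L$-smoothness lower bound together with Young's inequality to arrive at the intermediate estimate $f(x_{t+1}) \geq f(x_t) + (\tfrac{\eta_t}{2} - \tfrac{L\eta_t^2}{2})\norm{u_t}_2^2 - \tfrac{\eta_t}{2}\norm{e_t}_2^2$, and then invoke $\eta_t \leq 1/(2L)$. The paper's version introduces a superfluous free parameter $b$ in the Young's inequality step (later set to $b=1$), so your direct decomposition $\nabla f(x_t) = u_t - e_t$ is in fact a slightly cleaner path to the same bound.
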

\begin{proof}
Since $f(f)$ is $L$-smooth, one can write
\begin{align*}
& f(x_{t+1})-f(x_t)-\inner{u_t}{x_{t+1}-x_{t}}\\
= &f(x_{t+1})-f(x_t)-\inner{\nabla f(x_t)}{x_{t+1}-x_{t}}+\inner{\sqrt{\eta_t}(\nabla f(x_t)-u_t)}{\frac{1}{\sqrt{\eta_t}}(x_{t+1}-x_{t})}\\
\geq & -\frac{L}{2} \norm{x_{t+1}-x_t}^2-\frac{b\eta_t}{2}\norm{\nabla f(x_t)-u_t}_2^2 -\frac{1}{2b\eta_t}\norm{x_{t+1}-x_t}_2^2\\
=&(-\frac{1}{2b\eta_t}-\frac{L}{2}) \norm{x_{t+1}-x_t}_2^2-\frac{b\eta_t}{2}\norm{e_t}_2^2,
\end{align*}
where the constant $b>0$ is to be determined later. 
By the above inequality and the definition of $x_{t+1}$, we have
\begin{align*}
f(x_{t+1})\geq& f(x_t)+\inner{u_t}{x_{t+1}-x_{t}}-(\frac{1}{2b\eta_t}+\frac{L}{2}) \norm{x_{t+1}-x_t}_2^2-\frac{b\eta_t}{2}\norm{e_t}_2^2\\
=&f(x_t)+\eta_t\norm{u_t}^2-(\frac{\eta_t}{2b}+\frac{L\eta_t^2}{2}) \norm{u_t}_2^2-\frac{b\eta_t}{2}\norm{e_t}_2^2\\
\end{align*}
By choosing $b=1$ and using the fact that $0<\eta_t\leq\frac{1}{2L}$, it holds that
\begin{align*}
f(x_{t+1})\geq& f(x_t)+(\frac{\eta_t}{2}-\frac{L\eta_t^2}{2}) \norm{u_t}_2^2-\frac{\eta_t}{2}\norm{e_t}_2^2\\
\geq& f(x_t)+\frac{\eta_t}{4} \norm{u_t}_2^2-\frac{\eta_t}{2}\norm{e_t}_2^2.
\end{align*}
This completes the proof.
\end{proof}

\section{Proofs of results in Section \ref{sec: Soft-max parameterization with log barrier penalty}} \label{app: softmax relative entropy}

\subsection{Proof of Lemma \ref{lemma: relative softmax upper bound on I^+}} \label{sec: Proof of Lemma relative softmax upper bound on I^+ }
\begin{proof}
We first define the following set of ``bad'' iterates:
\begin{align} \label{eq: definition of I+}
    I^+=\left\{t\in\{1,\ldots,T\}\Big| \norm{\nabla_\theta L_{\lambda,\mu} (\theta_t)}_2 \geq \frac{\lambda}{2|\mathcal{S}| |\mathcal{A}|}\right\},
\end{align}
which counts the number of iterates such that the gradient norms of the KL-regularized objective are large.
Then, one can show that for every $\epsilon>0$ and $\lambda=\epsilon(1-\gamma)/(4\norm{\frac{d_\rho^{\pi_{\theta^\ast}}}{\mu}}_\infty)$, 
we have that $J_\rho(\theta^\ast)-J_\rho(\theta)\leq \frac{\epsilon}{2}$ for all $k\in \{0,\ldots,K\}/ I^+$, while $J_\rho(\theta^\ast)-J_\rho(\theta)\leq 1/(1-\gamma)$ holds trivially for all $k\in I^+$ due to the assumption that the rewards are between 0 and 1. 
Then, by controlling the number of ``bad'' iterates, we obtain the desired optimality guarantee. For simplicity, assume for now that $|I^+|>0$. Since $\eta_t$ is non-increasing in $t$, we have
\begin{align*}
    \sum_{t=1}^T\eta_t  \norm{\nabla L_{\lambda,\mu}(\theta_t)}^2_2\geq &\sum_{t\in I^+}\eta_t  \norm{\nabla L_{\lambda,\mu}(\theta_t)}^2_2\\
    \geq &\frac{\lambda^2}{4|\mathcal{S}|^2 |\mathcal{A}|^2} \sum_{t \in I^+}\eta_t  \\
    \geq &\frac{\lambda^2}{4|\mathcal{S}|^2 |\mathcal{A}|^2} \sum_{t=T-|I^+|+1}^T\eta_t  \\
     \geq &\frac{\lambda^2|I^+|\eta_T}{4|\mathcal{S}|^2 |\mathcal{A}|^2}.
\end{align*}
Thus,
\begin{align*}
  |I^+|\leq \frac{4|\mathcal{S}|^2 |\mathcal{A}|^2\sum_{t=1}^T\eta_t  \norm{\nabla L_{\lambda,\mu}(\theta_t)}_2^2}{\lambda^2\eta_T} . 
\end{align*}
Since $J_\rho(\theta)\in[0,\frac{1}{1-\gamma}]$ for every $\theta$, it holds that $J_\rho(\theta^\ast)- J_\rho(\theta_t)\leq\frac{1}{1-\gamma}$ for all $t\in I^+$. 
In addition, by Lemma \ref{prop: softmax, relative entropy, gradient dom} and the choice of $\lambda=\frac{\epsilon(1-\gamma)}{4\norm{\frac{d_\rho^{\pi_{\theta^\ast}}}{\mu}}_\infty}$, it holds that 
\begin{align*}
J_\rho(\theta^\ast)- J_\rho(\theta_t)\leq \frac{\epsilon}{2},\quad \forall t\notin I^+.
\end{align*}
Therefore, 
\begin{align}\label{eq: relative softmax, upper bound with I^+ }
  \nonumber  \sum_{t=1}^T \left(J_\rho(\theta^\ast)-J_\rho(\theta_t)\right)=&\sum_{t\in I^+} \left(J_\rho(\theta^\ast)-J_\rho(\theta_t)\right)+\sum_{t\notin I^+}\left(J_\rho(\theta^\ast)-J_\rho(\theta_t)\right)\\
   \nonumber   &\leq |I^+| \frac{1}{1-\gamma}+(T-|I^+|) \frac{ \epsilon}{2}\\
    \nonumber  &\leq \frac{4|\mathcal{S}|^2 |\mathcal{A}|^2\sum_{t=1}^T\eta_t  \norm{\nabla L_{\lambda,\mu}(\theta_t)}_2^2}{\lambda^2\eta_T(1-\gamma)}   +\frac{T \epsilon}{2}\\
     &\leq \frac{4|\mathcal{S}|^2 |\mathcal{A}|^2\sum_{t=1}^T\eta_t  \norm{\nabla L_{\lambda,\mu}(\theta_t)}_2^2}{\lambda^2\eta_T(1-\gamma)}   +\frac{T \epsilon}{2}.
\end{align}
Now if $|I^+|=0$, 
\begin{align*}
\sum_{t=1}^T \left(J_\rho(\theta^\ast)-J_\rho(\theta_t)\right)     &\leq \frac{T \epsilon}{2}
\end{align*}
and hence \eqref{eq: relative softmax, upper bound with I^+ } always holds. This completes the proof.
\end{proof}

\subsection{Proof of Lemma \ref{lemma: relative softmax upper bound on eta nabla L}}\label{sec: B.2}
We first notice that Assumption \ref{ass: 1} is satisfied by the soft-max parameterization with $M_g=2$ and $M_h=1$.
\begin{lemma}\label{lemma: mg and mh for softmax}
For the soft-max parameterization, Assumption \ref{ass: 1} is satisfied with $M_g=2$ and $M_h=1$.
\end{lemma}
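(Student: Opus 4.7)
The plan is to compute the score function and its derivative in closed form for the soft-max policy and then bound their norms directly. Writing $\log \pi_\theta(a|s) = \theta_{s,a} - \log \sum_{a' \in \mathcal{A}} \exp(\theta_{s,a'})$, I observe that $\nabla_\theta \log \pi_\theta(a|s)$ has nonzero entries only in the block indexed by the given state $s$, and in that block it equals $e_a - \pi_\theta(\cdot|s)$, where $e_a \in \mathbb{R}^{|\mathcal{A}|}$ is the standard basis vector. The full Hessian is likewise block-diagonal, with the only nonzero block (indexed by $(s,s)$) equal to $H(s) := \pi_\theta(\cdot|s)\pi_\theta(\cdot|s)^\top - \mathrm{diag}(\pi_\theta(\cdot|s))$, since $\partial \pi_\theta(a_1|s)/\partial \theta_{s,a_2} = \pi_\theta(a_1|s)(\mathbb{1}[a_1=a_2] - \pi_\theta(a_2|s))$.

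For $M_g$, the triangle inequality applied to the single nonzero block gives $\|\nabla \log \pi_\theta(a|s)\|_2 \le \|e_a\|_2 + \|\pi_\theta(\cdot|s)\|_2 \le 1 + 1 = 2$, where the second bound uses $\|\pi_\theta(\cdot|s)\|_2 \le \|\pi_\theta(\cdot|s)\|_1 = 1$. Hence $M_g = 2$ suffices.

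For $M_h$, since the Hessian is block-diagonal it is enough to bound the spectral norm of $H(s)$. Writing $\pi := \pi_\theta(\cdot|s)$ and evaluating the quadratic form on any $v \in \mathbb{R}^{|\mathcal{A}|}$,
\begin{align*}
v^\top H(s) v = (\pi^\top v)^2 - \sum_{a'} \pi_{a'} v_{a'}^2.
\end{align*}
Cauchy-Schwarz gives $(\pi^\top v)^2 = \bigl(\sum_{a'} \sqrt{\pi_{a'}} \cdot \sqrt{\pi_{a'}} v_{a'}\bigr)^2 \le \sum_{a'} \pi_{a'} \cdot \sum_{a'} \pi_{a'} v_{a'}^2 = \sum_{a'} \pi_{a'} v_{a'}^2$, so $v^\top H(s) v \le 0$. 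On the other side, $v^\top H(s) v \ge -\sum_{a'} \pi_{a'} v_{a'}^2 \ge -\|v\|_\infty^2 \ge -\|v\|_2^2$. Thus the eigenvalues of $H(s)$ lie in $[-1, 0]$, yielding $\|\nabla^2 \log \pi_\theta(a|s)\|_2 \le 1 = M_h$.

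Neither step is substantive, so there is no real obstacle; the only thing to watch is the block structure argument that reduces the norm of the $|\mathcal{S}||\mathcal{A}|$-dimensional gradient and Hessian to the single nonzero $(s)$-block, and the clean Cauchy--Schwarz bound on the quadratic form that simultaneously certifies the upper and lower eigenvalue bounds of $H(s)$.
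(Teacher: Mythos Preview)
Your proof is correct and follows the same overall structure as the paper: compute the gradient and Hessian of $\log\pi_\theta(a|s)$ explicitly via the block decomposition over states, then bound the norms of the single nonzero block. The only difference is in how the Hessian spectral bound is obtained: the paper cites an external result (Lemma~22 of \cite{mei2020global}) to assert that the eigenvalues of $\mathrm{diag}(\pi)-\pi\pi^\top$ are at most $1$, whereas you give a direct, self-contained argument via the Cauchy--Schwarz inequality on the quadratic form, which simultaneously shows the eigenvalues of $\pi\pi^\top-\mathrm{diag}(\pi)$ lie in $[-1,0]$. Your route is slightly more elementary and avoids the external dependency, but otherwise the two proofs are essentially identical.
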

\begin{proof}
For the soft-max parameterization, we have
\begin{align*}
    \frac{\alpha \log \pi_\theta (a|s)}{\alpha \theta(s,\cdot)}=\boldsymbol{1}_{a}-\EE_{a^\prime \sim \pi_\theta(\cdot|s)}\boldsymbol{1}_{a^\prime},
\end{align*}
where $\boldsymbol{1}_{a}\in\mathbb{R}^{|\mathcal{A}|}$ is a vector with zero entries except one nonzero entry corresponding to the action $a$. In addition, $\frac{\alpha \log \pi_\theta (a|s)}{\alpha \theta(s^\prime,\cdot)}=\boldsymbol{0}$ for all $s\neq s^\prime$. Hence, $\norm{\nabla_\theta \log \pi_\theta (a|s)}_2 \leq 2 $ for every $(s,a)\in\mathcal{S}\times \mathcal{A}$. 

Similarly, we have
\begin{align*}
    \frac{\alpha^2 \log \pi_\theta (a|s)}{\alpha \theta(s,\cdot)^2}=\left(\frac{d \pi_\theta(\cdot|s)}{d \theta(s,\cdot)}\right)^\top = \text{diag}(\pi(\cdot|s))-\pi(\cdot|s)\pi(\cdot|s)^\top.
\end{align*}
From Lemma 22 of \cite{mei2020global}, we know that the largest eigenvalue of the matrix $\text{diag}(\pi(\cdot|s))-\pi(\cdot|s)\pi(\cdot|s)^\top$ is less than 1. Thus, $\norm{\nabla^2_\theta \log \pi_\theta (a|s)}_2 \leq 1 $.
\end{proof}

\begin{lemma}\label{lemma: softmax+KL eta{t-1}^{-1} ||{et}||2}
Suppose that the stochastic policy gradient $u_t^H$ is generated by Algorithm \ref{alg:IS-MBPG-S} with the soft-max parameterization. Let $e^H_t=u_t^H+ \frac{\lambda}{|\mathcal{A}|| \mathcal{S}|}\sum_{s,a} \nabla \log \pi_{\theta_t}(a|s) -\nabla L_{\lambda,\mu}^H(\theta_t)$. It holds that
\begin{align*}
\EE\left[\eta_{t-1}^{-1} ||{e^H_t}||_2^2\right]\leq&\EE\left[\eta_{t-1}^{-1} (1-\beta_t)^2||e^H_{t-1}||_2^2 +\frac{2\eta_{t-1}^{-1}\beta_t^2\sigma^2}{B}+\frac{4b^2\eta_{t-1}^{-1}(1-\beta_t)^2}{B}||\theta_t-\theta_{t-1}||_2^2\right],
\end{align*}
where $b^2=L_g^2+G^2C_w^2$, $L_g=M_h/(1-\gamma)^2$, $G=M_g/(1-\gamma)^2$ and $C_w=\sqrt{H(2HM_g^2+M_h)(W+1)}$.
\end{lemma}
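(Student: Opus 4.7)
The plan is to unfold the STORM-type recursion in \eqref{eq: momentum-based PG}, isolate an increment that has conditional mean zero given the past, and then bound its second moment using the single-sample variance $\sigma^2$ together with a term quadratic in $\|\theta_t-\theta_{t-1}\|_2$ arising from the importance-weighted correction.

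I would first observe that the log-barrier gradient $\frac{\lambda}{|\mathcal{S}||\mathcal{A}|}\sum_{s,a}\nabla\log\pi_{\theta_t}(a|s)$ is deterministic given $\theta_t$ and appears identically in both $u^H_t + \frac{\lambda}{|\mathcal{S}||\mathcal{A}|}\sum_{s,a}\nabla\log\pi_{\theta_t}(a|s)$ and $\nabla L^H_{\lambda,\mu}(\theta_t)$, so it cancels and $e^H_t = u^H_t - \nabla J^H_\mu(\theta_t)$. Substituting \eqref{eq: momentum-based PG} and adding/subtracting $\nabla J^H_\mu(\theta_{t-1})$ inside the bracket, I would then decompose
\begin{align*}
e^H_t = (1-\beta_t)\,e^H_{t-1} + T_1 + T_2,
\end{align*}
where $T_1 = \frac{\beta_t}{B}\sum_i[g(\tau^i_H|\theta_t,\mu) - \nabla J^H_\mu(\theta_t)]$ and $T_2$ is the $(1-\beta_t)$-scaled average of $[g(\tau^i_H|\theta_t,\mu) - w(\tau^i_H|\theta_{t-1},\theta_t)\,g(\tau^i_H|\theta_{t-1},\mu)] - [\nabla J^H_\mu(\theta_t) - \nabla J^H_\mu(\theta_{t-1})]$. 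Since the time-$t$ trajectories are drawn freshly from $p(\cdot|\theta_t,\mu)$ while $e^H_{t-1}$ is $\mathcal{F}_{t-1}$-measurable, the unbiasedness of $g$ and the importance-sampling identity stated after \eqref{eq: importance sample weight} jointly give $\EE[T_1 + T_2 \mid \mathcal{F}_{t-1}] = 0$, so an orthogonality expansion yields $\EE\|e^H_t\|_2^2 = (1-\beta_t)^2\,\EE\|e^H_{t-1}\|_2^2 + \EE\|T_1 + T_2\|_2^2$.

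To bound $\EE\|T_1+T_2\|_2^2$, I would apply $\|a+b\|_2^2 \le 2\|a\|_2^2 + 2\|b\|_2^2$. For $T_1$, the iid structure and the single-sample variance bound $\sigma^2$ from Proposition \ref{lemma:Properties of PGT estimator} immediately give $\EE\|T_1\|_2^2 \le \beta_t^2\sigma^2/B$. For $T_2$, it suffices to bound the second moment of one centered summand by $2b^2\|\theta_t-\theta_{t-1}\|_2^2$, which I plan to do by writing
\begin{align*}
g(\tau|\theta_t,\mu) - w(\tau|\theta_{t-1},\theta_t)\,g(\tau|\theta_{t-1},\mu) = [g(\tau|\theta_t,\mu) - g(\tau|\theta_{t-1},\mu)] + (1 - w(\tau|\theta_{t-1},\theta_t))\,g(\tau|\theta_{t-1},\mu),
\end{align*}
controlling the first bracket by $L_g\|\theta_t-\theta_{t-1}\|_2$ via the Lipschitzness in Proposition \ref{lemma:Properties of PGT estimator}, using $\|g\|_2\le G$ to pull out the $g(\tau|\theta_{t-1},\mu)$ factor in the second bracket, and invoking Proposition \ref{prop: Lemma 6.1 in xu2019sample} under Assumption \ref{ass: 3} together with $\EE[w]=1$ to obtain $\EE[(1-w)^2] = \text{Var}(w) \le C_w^2\|\theta_t-\theta_{t-1}\|_2^2$. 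With $b^2 = L_g^2 + G^2C_w^2$ this gives $\EE\|T_2\|_2^2 \le 2b^2(1-\beta_t)^2\|\theta_t-\theta_{t-1}\|_2^2/B$, and dividing the combined recursion through by the deterministic $\eta_{t-1}$ yields the claim.

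The principal obstacle I anticipate is the careful conditioning argument establishing $\EE[T_2\mid\mathcal{F}_{t-1}] = 0$: this is not simply the unbiasedness of $g$, but relies on the identity $\EE_{\tau\sim p(\cdot|\theta_t,\mu)}[g(\tau|\theta_t,\mu) - w(\tau|\theta_{t-1},\theta_t)\,g(\tau|\theta_{t-1},\mu)] = \nabla J^H_\mu(\theta_t) - \nabla J^H_\mu(\theta_{t-1})$, which must be derived from the definitions in \eqref{eq: GPOMDP} and \eqref{eq: importance sample weight}. The remaining manipulations are routine second-moment bookkeeping.
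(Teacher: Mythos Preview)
Your proposal is correct and follows essentially the same route as the paper: both first observe the cancellation $e^H_t=u^H_t-\nabla J^H_\mu(\theta_t)$, decompose $e^H_t$ into $(1-\beta_t)e^H_{t-1}$ plus two fresh-sample terms with zero conditional mean, use orthogonality and then $\|a+b\|^2\le 2\|a\|^2+2\|b\|^2$, and finish by the identical Lipschitz\,/\,importance-weight-variance split invoking Propositions~\ref{lemma:Properties of PGT estimator} and~\ref{prop: Lemma 6.1 in xu2019sample}. The paper additionally spells out the step $\EE\|\xi-\EE\xi\|^2\le\EE\|\xi\|^2$ when passing from the centered $T_2$-summand to the uncentered $g(\tau|\theta_t)-w\,g(\tau|\theta_{t-1})$, which you use implicitly; otherwise the arguments coincide.
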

\begin{proof}
First note that $e^H_t=u_t^H- \nabla J^H_\mu(\theta)$. Then,
by the definition of $u_t^H$, we have
\begin{align*}
u_t^H-u_{t-1}^H=-\beta_t u_{t-1}^H +  \frac{\beta_t}{B}\sum_{i=1}^B g(\tau_i^H|\theta_t,\mu) + \frac{(1-\beta_t)}{B}\sum_{i=1}^B( g(\tau_i^H|\theta_t,\mu)-w(\tau_i^H|\theta_{t-1},\theta_t) g(\tau_i^H|\theta_{t-1},\mu)).
\end{align*}
As a result,
\begin{align*}
\begin{split}
\EE\left[\eta_{t-1}^{-1} ||{e^H_t}||_2^2\right]=&\EE\left[\eta_{t-1}^{-1} ||\nabla J_\rho^H(\theta_{t-1})-u_{t-1} +\nabla J_\rho^H(\theta_t)-\nabla J_\rho^H(\theta_{t-1}) -(u_t^H-u_{t-1})  )||_2^2\right]\\
=&\EE\left[\eta_{t-1}^{-1} ||\nabla J_\rho^H(\theta_{t-1})-u_{t-1} +\nabla J_\rho^H(\theta_t)-\nabla J_\rho^H(\theta_{t-1}) \right.\\
&\left.+\beta_t u_{t-1} - \frac{\beta_t}{B}\sum_{i=1}^B g(\tau_i^H|\theta_t,\mu) - \frac{(1-\beta_t)}{B}\sum_{i=1}^B( g(\tau_i^H|\theta_t,\mu)-w(\tau_i^H|\theta_{t-1},\theta_t) g(\tau_i^H|\theta_{t-1},\mu))  )||_2^2\right]\\
=&\EE\left[\eta_{t-1}^{-1} ||(1-\beta_t)(\nabla J_\rho^H(\theta_{t-1})-u_{t-1})+\beta_t (\nabla J_\rho^H(\theta_t)-\frac{1}{B}\sum_{i=1}^B g(\tau_i^H|\theta_t,\mu)) \right.\\
& \left.- \frac{(1-\beta_t)}{B}\sum_{i=1}^B( g(\tau_i^H|\theta_t,\mu)-w(\tau_i^H|\theta_{t-1},\theta_t) g(\tau_i^H|\theta_{t-1},\mu)) -(\nabla J_\rho^H(\theta_t)-\nabla J_\rho^H(\theta_{t-1})) )||_2^2\right]\\
=&\eta_{t-1}^{-1} (1-\beta_t)^2\EE\left[||\nabla J_\rho^H(\theta_{t-1})-u_{t-1}||_2^2\right] +\eta_{t-1}^{-1}\EE\left[ ||\beta_t (\nabla J_\rho^H(\theta_t)-\frac{1}{B}\sum_{i=1}^B g(\tau_i^H|\theta_t,\mu))\right. \\
& \left.- \frac{(1-\beta_t)}{B}\sum_{i=1}^B( g(\tau_i^H|\theta_t,\mu)-w(\tau_i^H|\theta_{t-1},\theta_t) g(\tau_i^H|\theta_{t-1},\mu)) -(\nabla J_\rho^H(\theta_t)-\nabla J_\rho^H(\theta_{t-1})) )||_2^2\right]\\
\leq&\eta_{t-1}^{-1} (1-\beta_t)^2\EE\left[||\nabla J_\rho^H(\theta_{t-1})-u_{t-1}||_2^2\right] +2\eta_{t-1}^{-1}\beta_t^2\EE\left[ || (\nabla J_\rho^H(\theta_t)-\frac{1}{B}\sum_{i=1}^B g(\tau_i^H|\theta_t,\mu))||_2^2\right] \\
&+2\eta_{t-1}^{-1}(1-\beta_t)^2\EE\left[ || \frac{1}{B}\sum_{i=1}^B( g(\tau_i^H|\theta_t,\mu)-w(\tau_i^H|\theta_{t-1},\theta_t) g(\tau_i^H|\theta_{t-1},\mu)) -(\nabla J_\rho^H(\theta_t)-\nabla J_\rho^H(\theta_{t-1})) )||_2^2\right]\\
=&\eta_{t-1}^{-1} (1-\beta_t)^2\EE\left[||e^H_{t-1}||_2^2\right] +2\eta_{t-1}^{-1}\beta_t^2\frac{1}{B}\EE\left[ || g(\tau_i^H|\theta_t,\mu)-\nabla J_\rho^H(\theta_t)||_2^2\right] \\
&+2\eta_{t-1}^{-1}(1-\beta_t)^2\frac{1}{B}\EE\left[ || g(\tau_i^H|\theta_t,\mu)-w(\tau_i^H|\theta_{t-1},\theta_t) g(\tau_i^H|\theta_{t-1},\mu) -(\nabla J_\rho^H(\theta_t)-\nabla J_\rho^H(\theta_{t-1}))||_2^2\right]\\
\leq&\eta_{t-1}^{-1} (1-\beta_t)^2\EE\left[||e^H_{t-1}||_2^2\right] +\frac{2\eta_{t-1}^{-1}\beta_t^2\sigma^2}{B} \\
&+2\eta_{t-1}^{-1}(1-\beta_t)^2\frac{1}{B}\EE\left[ || g(\tau_i^H|\theta_t,\mu)-w(\tau_i^H|\theta_{t-1},\theta_t) g(\tau_i^H|\theta_{t-1},\mu)||_2^2\right],
\end{split}
\end{align*}
where the fourth equality is due to $\EE_{\tau_i^H}[g(\tau_i^H|\theta_t,\mu)]=\nabla J_\rho^H(\theta_t)$ and $\EE_{\tau_i^H}[g(\tau_i^H|\theta_t,\mu)-w(\tau_i^H|\theta_{t-1},\theta_t) g(\tau_i^H|\theta_{t-1},\mu)]=\nabla J_\rho^H(\theta_t)-\nabla J_\rho^H(\theta_{t-1})$, the first inequality follows from Young's inequality, the second inequality holds by $\EE||\frac{1}{B}\sum_{i=1}^B\xi_i-\EE[\xi_i]||_2^2=\frac{1}{B}\EE||\xi_i-\EE[\xi_i]||_2^2$ for the i.i.d. samples of $\{\xi_i\}_{i=1}^B$, and the last inequality is due to the bounded variance of stochastic policy gradient under the soft-max parameterization and $\EE||\xi-\EE[\xi]||_2^2\leq \EE||\xi||_2^2$. 

In addition,
\begin{align*}
&\EE\left[ || g(\tau_i^H|\theta_t,\mu)-w(\tau_i^H|\theta_{t-1},\theta_t) g(\tau_i^H|\theta_{t-1},\mu)||_2^2\right] \\
&= \EE\left[ || g(\tau_i^H|\theta_t,\mu)-g(\tau_i^H|\theta_{t-1},\mu)+g(\tau_i^H|\theta_{t-1},\mu)-w(\tau_i^H|\theta_{t-1},\theta_t) g(\tau_i^H|\theta_{t-1},\mu)||_2^2\right] \\
&\leq 2\EE\left[ || g(\tau_i^H|\theta_t,\mu)-g(\tau_i^H|\theta_{t-1},\mu)||_2^2+2\EE[ ||g(\tau_i^H|\theta_{t-1},\mu)-w(\tau_i^H|\theta_{t-1},\theta_t) g(\tau_i^H|\theta_{t-1},\mu)||_2^2\right] \\
&\leq 2L_g^2\EE\left[ ||\theta_t-\theta_{t-1})||_2^2\right]+2G^2\EE\left[ ||1-w(\tau_i^H|\theta_{t-1},\theta_t)||_2^2\right] \\
&\leq 2L_g^2\EE\left[ ||\theta_t-\theta_{t-1})||_2^2\right]+2G^2\text{Var}(w(\tau_i^H|\theta_{t-1},\theta_t))\\
&\leq 2(L_g^2+G^2C_w^2)\EE\left[ ||\theta_t-\theta_{t-1})||_2^2\right],
\end{align*}
where the second inequality follows from Lemma \ref{lemma:Properties of PGT estimator}, and the third inequality is due to Proposition \ref{prop: Lemma 1 in cortes2010learning}, and the last inequality holds by Proposition \ref{prop: Lemma 6.1 in xu2019sample}.
By selecting $b^2=L_g^2+G^2C_w^2$, we have
\begin{align*}
\EE\left[\eta_{t-1}^{-1} ||{e^H_t}||_2^2\right]\leq&\EE\left[\eta_{t-1}^{-1} (1-\beta_t)^2||e^H_{t-1}||_2^2 +\frac{2\eta_{t-1}^{-1}\beta_t^2\sigma^2}{B}+\frac{4b^2\eta_{t-1}^{-1}(1-\beta_t)^2}{B}||\theta_t-\theta_{t-1}||_2^2\right],
\end{align*}
which completes the proof.
\end{proof}

\subsubsection{Proof of Lemma \ref{lemma: relative softmax upper bound on eta nabla L}}
\begin{proof}
{From Proposition \ref{lemma:Properties of PGT estimator}, we know that $J_{\mu}(\theta)$ is $L$-smooth. Since $\norm{\nabla^2 \log\pi_\theta(a|s)}_2\leq 1$ for the soft-max parameterization, it holds that $L_{\lambda,\mu}(\theta)$ is $L_\lambda$-smooth, where $L_\lambda\coloneqq L+\lambda$.}

Due to $m\geq (2L_\lambda k)^3$, we have $\eta_t\leq\eta_0=\frac{k}{m^{1/3}}\leq\frac{1}{2L_\lambda}$. Since $\eta_t\leq\frac{1}{2L_\lambda}$, we obtain that $\beta_{t+1}=c\eta_t^2\leq\frac{c\eta_t}{2L_\lambda}\leq \frac{ck}{2L_\lambda m^{1/3}}\leq 1$.
Now, it results from Lemma \ref{lemma: softmax+KL eta{t-1}^{-1} ||{et}||2} that
\begin{align*}
 &\EE\left[\eta_{t}^{-1} ||{e^H_{t+1}}||_2^2-\eta_{t-1}^{-1} ||{e^H_t}||_2^2\right]\\
 &\leq\EE\left[(\eta_{t}^{-1} (1-\beta_{t+1})^2-\eta_{t-1}^{-1}) ||e^H_{t}||_2^2 +\frac{2\eta_{t}^{-1}\beta_{t+1}^2\sigma^2}{B}+\frac{4b^2\eta_{t}^{-1}(1-\beta_{t+1})^2}{B}||\theta_{t+1}-\theta_{t}||_2^2\right]\\
 &\leq\EE\left[(\eta_{t}^{-1} (1-\beta_{t+1})-\eta_{t-1}^{-1}) ||e^H_{t}||_2^2 +\frac{2\eta_{t}^{-1}\beta_{t+1}^2\sigma^2}{B}+\frac{4b^2\eta_{t}^{-1}}{B}||\theta_{t+1}-\theta_{t}||_2^2\right],
\end{align*}
where the last inequality holds by $0<\beta_{t+1}\leq 1$. Since the function $x^{1/3}$ is concave, we have $(x+y)^{1/3}\leq x^{1/3}+yx^{-2/3}/3$. Then, we have
\begin{align*}
 \eta_{t}^{-1}-\eta_{t-1}^{-1}  =&\frac{1}{k}((m+t)^{1/3} -(m+t-1)^{1/3})\leq \frac{1}{3k(m+t-1)^{2/3}}\\
 \leq &\frac{1}{3k(m/2+t)^{2/3}}\leq\frac{2^{2/3}}{3k^3}\eta_t^2\leq \frac{2^{2/3}}{6k^3L}\eta_t\leq \frac{1}{3k^3L}\eta_t,
\end{align*}
where the second inequality holds by $m\geq 2$, and the forth inequality uses the property $0<\eta_t\leq \frac{1}{2L_\lambda}$.
Then, it holds that
\begin{align*}
(\eta_{t}^{-1} (1-\beta_{t+1})-\eta_{t-1}^{-1}) ||e^H_{t}||_2^2=&\left(\frac{1}{3k^3L}-c\right)\eta_t\norm{e^H_t}_2^2=-96b^2\eta_t\norm{e^H_t}_2^2,
\end{align*}
where the last equality is based on the relation  $c=\frac{1}{3k^3L_\lambda}+96b^2$. Combining the above results yields that
\begin{align} \label{eq: eta t e t+1-eta t-1 et}
  &\EE[\eta_{t}^{-1} ||{e^H_{t+1}}||_2^2-\eta_{t-1}^{-1} ||{e^H_t}||_2^2]\leq\EE\left[-96b^2\eta_{t} ||e^H_{t}||_2^2 +\frac{2\eta_{t}^{-1}\beta_{t+1}^2\sigma^2}{B}+\frac{4b^2\eta_{t}^{-1}}{B}||\theta_{t+1}-\theta_{t}||_2^2\right].
\end{align}
To streamline the presentation, we denote $u_{t,\lambda}^H\coloneqq \frac{1}{\eta_t}(\theta_{t+1}-\theta_{t})$. 
By summing up the above inequality and dividing the both sides by  $96b^2$, we obtain
\begin{align}\label{eq: et/eta telescope}
\frac{1}{96b^2} \left( \EE\left[\frac{\norm{e^H_{T+1}}_2^2}{\eta_T} - \frac{\norm{e^H_{1}}_2^2}{\eta_{0}}\right]\right)  \leq& \sum_{t=1}^T\EE\left[ \frac{c^2\eta_t^3 \sigma^2}{48b^2B} -\eta_t\norm{e^H_t}_2^2 +\frac{\eta_t}{24B} \norm{u_{t,\lambda}^H}_2^2\right]
\end{align}
For $\eta_t \norm{u_{t,\lambda}^H}_2^2$, it follows from Lemma \ref{lemma: due to smoothness} that
\begin{align*}
 \frac{\eta_t}{4} \norm{u_{t,\lambda}^H}_2^2\leq   L^H_{\lambda,\mu}(\theta_{t+1})-L_{\lambda,\mu}^H(\theta_t) +\frac{\eta_t}{2}  \norm{e_t^H}^2.
\end{align*}

Then, it holds that

\begin{align}\label{eq: et/eta telescope simplified}
\nonumber \frac{1}{96b^2} \left( \EE\left[\frac{\norm{e^H_{T+1}}_2^2}{\eta_T} - \frac{\norm{e^H_{1}}_2^2}{\eta_{0}}\right]\right)  \leq& \sum_{\tau=1}^T\EE\left[ \frac{c^2\eta_t^3 \sigma^2}{48b^2B} -\frac{(12B-1)\eta_t}{12B} \norm{e^H_t}_2^2+ \frac{1}{6B} \left(L^H_{\lambda,\mu}(\theta_{t+1})-L_{\lambda,\mu}^H(\theta_t) \right) \right] \\
\nonumber\leq& \sum_{t=1}^T \left(\frac{c^2\eta_t^3 \sigma^2}{48b^2B} -\EE\left[ \frac{11\eta_t}{12} \norm{e^H_t}_2^2\right]\right)+ \frac{1}{6B} \left(L^H_{\lambda,\mu}(\theta^\ast)-L_{\lambda,\mu}^H(\theta_1) \right) \\
\nonumber\leq&  \frac{c^2 \sigma^2k^3}{48b^2B} \sum_{t=1}^T \frac{1}{m+t} - \sum_{t=1}^T\EE\left[\frac{11\eta_t}{12}\norm{e^H_t}_2^2\right]+ \frac{1}{6B} \left(L^H_{\lambda,\mu}(\theta^\ast)-L_{\lambda,\mu}^H(\theta_1) \right) \\
\nonumber\leq&  \frac{c^2 \sigma^2k^3}{48b^2B} \sum_{t=1}^T \frac{1}{2+t} - \sum_{t=1}^T\EE\left[\frac{11\eta_t}{12}\norm{e^H_t}_2^2\right]+ \frac{1}{6B} \left(L^H_{\lambda,\mu}(\theta^\ast)-L_{\lambda,\mu}^H(\theta_1) \right) \\
\leq&  \frac{c^2\sigma^2k^3\ln{(T+2)}}{48b^2B} - \sum_{t=1}^T\EE\left[\frac{11\eta_t}{12}\norm{e^H_t}_2^2\right]+ \frac{1}{6B} \left(L^H_{\lambda,\mu}(\theta^\ast)-L_{\lambda,\mu}^H(\theta_1) \right).
\end{align}
By rearranging the above inequality, we have
\begin{align*}
 \sum_{t=1}^T\EE\left[\frac{11\eta_t}{12}\norm{e^H_t}_2^2\right]\leq  & \frac{c^2\sigma^2k^3\ln{(T+2)}}{48b^2B} +\frac{1}{96b^2\eta_{0}}  \EE\left[ \norm{e^H_{1}}_2^2\right]+\frac{1}{6B} \left(L^H_{\lambda,\mu}(\theta^\ast)-L_{\lambda,\mu}^H(\theta_1) \right) \\
 \leq &\frac{1}{B}\left(
 \frac{c^2\sigma^2k^3\ln{(T+2)}}{48b^2} +\frac{m^{1/3}\sigma^2}{96b^2k} +\frac{1}{6} \left(L^H_{\lambda,\mu}(\theta^\ast)-L_{\lambda,\mu}^H(\theta_1) \right)  \right).
\end{align*}
Multiplying both sides by $\frac{12}{11}$ yields that
\begin{align*}
  \sum_{t=1}^T\EE\left[\eta_t\norm{e^H_t}_2^2\right]\leq &\frac{1}{B}\left(
 \frac{c^2\sigma^2k^3\ln{(T+2)}}{44b^2} +\frac{m^{1/3}\sigma^2}{88b^2k} +\frac{1}{22} \left(L^H_{\lambda,\mu}(\theta^\ast)-L_{\lambda,\mu}^H(\theta_1) \right)  \right).   
\end{align*}


To bound $\sum_{t=1}^{T}\EE \left[ \eta_t \norm{u_{t,\lambda}^H}_2^2\right]$, we define a Lyapunov function $\Phi_t(\theta_t)=L_{\lambda,\mu}^H(\theta_t)-\frac{1}{192b^2\eta_{t-1}} \norm{e^H_t}_2^2$ for all $t\geq 1$. Then,
\begin{align*}
\EE[ \Phi_{t+1}-\Phi_{t}  ]=&\EE\left[L_{\lambda,\mu}^H(\theta_{t+1}) -L_{\lambda,\mu}^H(\theta_t) -\frac{1}{192b^2\eta_t} \norm{e^H_{t+1} }_2^2+\frac{1}{192b^2 \eta_{t-1}}\norm{e^H_t}_2^2 \right]\\
\geq &\EE\left[ -\frac{\eta_t}{2}\norm{e^H_t}_2^2 +\frac{\eta_t}{4} \norm{u_{t,\lambda}^H }_2^2 -\frac{1}{192b^2\eta_t} \norm{e^H_{t+1} }_2^2+\frac{1}{192b^2 \eta_{t-1}}\norm{e^H_t}_2^2 \right]\\
\geq &\EE\left[\frac{\eta_t}{4} \norm{u_{t,\lambda}^H }_2^2 -\frac{\beta_t^2\sigma^2}{96b^2B\eta_{t}}-\frac{\eta_{t}}{48B}||u_{t,\lambda}^H||^2\right]\\
\geq &\EE\left[\frac{11\eta_t}{48} \norm{u_{t,\lambda}^H }_2^2 -\frac{c^2\eta_{t}^3\sigma^2}{96b^2B}\right],
\end{align*}
where the first inequality holds by Lemma \ref{lemma: due to smoothness} and the second inequality holds due to \eqref{eq: eta t e t+1-eta t-1 et}. Summing the above inequality over $t$ from $1$ to $T$, we obtain
\begin{align*}
 \sum_{t=1}^{T}\EE \left[ \eta_t \norm{u_{t,\lambda}^H}_2^2\right]
 \leq&\EE\left[\frac{48}{11}(\Phi_{T+1} -\Phi_{1}) + \sum_{t=1}^{T} \frac{c^2\eta_t^3 \sigma^2}{22b^2B}\right]\\
    \leq &\EE\left[\frac{48}{11}\left(L^H_{\lambda,\mu}(\theta^\ast)-L_{\lambda,\mu}^H(\theta_1) \right)  +\frac{1}{44b^2\eta_0}\EE||e^H_1||_2^2+  \frac{c^2 \sigma^2k^3}{22b^2B}\sum_{t=1}^{T}\frac{1}{m+t}  \right]\\
   \leq &\EE\left[\frac{48}{11}\left(L^H_{\lambda,\mu}(\theta^\ast)-L_{\lambda,\mu}^H(\theta_1) \right)  +\frac{1}{44b^2\eta_0}\EE||e^H_1||_2^2+  \frac{c^2 \sigma^2k^3}{22b^2B}\sum_{t=1}^{T}\frac{1}{2+t}  \right]\\
 \leq &\EE\left[\frac{48}{11}\left(L^H_{\lambda,\mu}(\theta^\ast)-L_{\lambda,\mu}^H(\theta_1) \right)  +\frac{\sigma^2m^{1/3}}{44b^2kB}+  \frac{c^2 \sigma^2k^3}{22b^2B}\ln{(2+T)} \right ]\\
  \leq &\Gamma_2 +\frac{\Gamma_3}{B} ,
\end{align*}
where $\Gamma_2=\frac{48}{11}\left(L^H_{\lambda,\mu}(\theta^\ast)-L_{\lambda,\mu}^H(\theta_1) \right) $ and  $\Gamma_3=(\frac{\sigma^2m^{1/3}}{44b^2k}+  \frac{c^2 \sigma^2k^3}{22b^2}\ln{(2+T)})$. 
Finally, by the triangle inequality, we have
\begin{align*}
  \sum_{t=1}^{T} \EE \left[\eta_t \norm{\nabla L_{\lambda,\mu}^H(\theta_t)}_2^2\right]
  \leq &  \sum_{t=1}^{T} \EE \left[\eta_t \norm{u_{t,\lambda}^H}_2^2\right]+\sum_{t=1}^{T} \EE \left[\eta_t \norm{e^H_t}_2^2\right]\\
  \leq & \Gamma_2 +\frac{\Gamma_1+\Gamma_3}{B}.
\end{align*}
This completes the proof.
\end{proof}

\subsection{Proof of Theorem \ref{theorem: relative softmax}} \label{sec: proof of theorem relative softmax}
\begin{proof}
From Proposition \ref{lemma:Properties of PGT estimator}, we have
\begin{align*}
\sum_{t=1}^T\eta_t  \EE\left[\norm{\nabla L_{\lambda,\mu}(\theta_t)}_2^2\right]=&\sum_{t=1}^T\eta_t  \EE\left[\norm{\nabla L^H_{\lambda,\mu}(\theta_t)}_2^2+\norm{\nabla L^H_{\lambda,\mu}(\theta_t)-\nabla L_{\lambda,\mu}(\theta_t)}_2^2\right] \\
\leq&\sum_{t=1}^T\eta_t  \EE\left[\norm{\nabla L^H_{\lambda,\mu}(\theta_t)}_2^2+\norm{\nabla J^H_{\mu}(\theta_t)-\nabla J_{\mu}(\theta_t)}_2^2\right] \\
\leq&\sum_{t=1}^T\eta_t  \EE\left[\norm{\nabla L^H_{\lambda,\mu}(\theta_t)}_2^2\right] + \left(M_g\left(\frac{H+1}{1-\gamma} +\frac{\gamma}{(1-\gamma)^2}\right) \gamma^H \right)^2\sum_{t=1}^T\eta_t.
\end{align*}
In light of Lemma \ref{lemma: relative softmax upper bound on I^+}, we have
\begin{align*}
\EE\left[ \sum_{t=1}^T \left(J_\rho(\theta^\ast)-J_\rho(\theta_t)\right)\right]
&\leq \norm{\frac{d_\rho^{\pi_{\theta^\ast}}}{\mu}}^2_\infty\frac{64|\mathcal{S}|^2 |\mathcal{A}|^2\sum_{t=1}^T\eta_t  \EE[ \norm{\nabla L_{\lambda,\mu}(\theta_t)}_2^2]}{\epsilon^2\eta_T(1-\gamma)^3} +\frac{T\epsilon}{2}\\
&\leq \norm{\frac{d_\rho^{\pi_{\theta^\ast}}}{\mu}}^2_\infty\frac{64|\mathcal{S}|^2 |\mathcal{A}|^2\sum_{t=1}^T\eta_t  \EE[ \norm{\nabla L^H_{\lambda,\mu}(\theta_t)}_2^2]}{\epsilon^2\eta_T(1-\gamma)^3} +\frac{T\epsilon}{2}\\
&+\norm{\frac{d_\rho^{\pi_{\theta^\ast}}}{\mu}}^2_\infty\frac{64|\mathcal{S}|^2 |\mathcal{A}|^2\sum_{t=1}^T\eta_t }{\epsilon^2\eta_T(1-\gamma)^3} \left(M_g\left(\frac{H+1}{1-\gamma} +\frac{\gamma}{(1-\gamma)^2}\right) \gamma^H \right)^2.
\end{align*}
By choosing $\eta_t=\frac{k}{(m+t)^{1/3}}$, it results from Lemma \ref{lemma: relative softmax upper bound on eta nabla L} that
\begin{align*}
\frac{1}{T}\EE\left[ \sum_{t=1}^T \left(J_\rho(\theta^\ast)-J_\rho(\theta_t)\right)\right]
&\leq \norm{\frac{d_\rho^{\pi_{\theta^\ast}}}{\mu}}^2_\infty\frac{64|\mathcal{S}|^2 |\mathcal{A}|^2(m+T)^{1/3}}{\epsilon^2k(1-\gamma)^3 T}\left(\Gamma_2+\frac{\Gamma_1+\Gamma_3}{B}\right) +\frac{\epsilon}{2}\\
&+\norm{\frac{d_\rho^{\pi_{\theta^\ast}}}{\mu}}^2_\infty\frac{96|\mathcal{S}|^2 |\mathcal{A}|^2(m+T) }{\epsilon^2(1-\gamma)^3T} \left(M_g\left(\frac{H+1}{1-\gamma} +\frac{\gamma}{(1-\gamma)^2}\right) \gamma^H \right)^2.
\end{align*}

Notice that, in order to guarantee 
\begin{align*}
  \norm{\frac{d_\rho^{\pi_{\theta^\ast}}}{\mu}}^2_\infty\frac{96|\mathcal{S}|^2 |\mathcal{A}|^2(m+T) }{\epsilon^2(1-\gamma)^3T} \left(M_g\left(\frac{H+1}{1-\gamma} +\frac{\gamma}{(1-\gamma)^2}\right) \gamma^H \right)^2 \leq \frac{\epsilon}{4},
\end{align*}
it suffices to have
\begin{align*}
H={\mathcal{O}}\left( \log_{\gamma} \left( \frac{(1-\gamma)\epsilon }{|\mathcal{S}| |\mathcal{A}| \norm{\frac{d_\rho^{\pi_{\theta^\ast}}}{\mu}}_\infty}\right) \right).
\end{align*}
Recall that 
$\Gamma_1=
 \frac{c^2\sigma^2k^3\ln{(T+2)}}{44b^2} +\frac{m^{1/3}\sigma^2}{88b^2k} +\frac{1}{22} \left(L^H_{\lambda,\mu}(\theta^\ast)-L_{\lambda,\mu}^H(\theta_1) \right)$, $\Gamma_2=\frac{48}{11}\left(L^H_{\lambda,\mu}(\theta^\ast)-L_{\lambda,\mu}^H(\theta_1) \right) $ and  $\Gamma_3=(\frac{\sigma^2m^{1/3}}{44b^2k}+  \frac{c^2 \sigma^2k^3}{22b^2}\ln{(2+T)})$. By only considering the dependencies on the parameters $c, \sigma, b, \lambda$ and $m$, we have  
 \begin{align}\label{eq: big O of Gamma}
\nonumber\Gamma_2+\frac{\Gamma_1+\Gamma_3}{B}    =&\widetilde{\mathcal{O}}\left(\frac{c^2\sigma^2}{b^2}+\frac{m^{1/3}\sigma^2}{b^2} + \frac{\sigma^2m^{1/3}}{b^2}+ \frac{c^2\sigma^2}{b^2}+L^H_{\lambda,\mu}(\theta^\ast)-L_{\lambda,\mu}^H(\theta_1)\right)\\
     =&\widetilde{\mathcal{O}}\left(\frac{\sigma^2}{b^2} \left(c^2+m^{1/3} \right)+L^H_{\lambda,\mu}(\theta^\ast)-L_{\lambda,\mu}^H(\theta_1)\right).
 \end{align}
 In addition,  by the definitions $b^2=L_g^2+G^2C_w^2$, $L_g=M_h/(1-\gamma)^2$, $G=M_g/(1-\gamma)^2$, $C_w=\sqrt{H(2HM_g^2+M_h)(W+1)}$, $c=\frac{1}{3k^3L_\lambda} +96b^2$, $m=\max\{2,(2L_\lambda k)^3,(\frac{ck}{2L_\lambda})^3\}$, and $\lambda=\frac{\epsilon(1-\gamma)}{4\norm{\frac{d_\rho^{\pi_{\theta^\ast}}}{\mu}}_\infty}$,
 we know that 
 \begin{align*}
  b^2={\mathcal{O}}\left(\frac{1+W}{(1-\gamma)^4}\right), \ \sigma^2={\mathcal{O}}\left(\frac{1}{(1-\gamma)^4}\right), \ c={\mathcal{O}}\left(\frac{1+W}{(1-\gamma)^4} \right), \ m^{1/3}={\mathcal{O}}\left(\frac{\epsilon(1-\gamma)}{\norm{\frac{d_\rho^{\pi_{\theta^\ast}}}{\mu}}_\infty}+\frac{1}{(1-\gamma)^3} + \frac{W}{1-\gamma}\right).
 \end{align*}
In addition,
$$L^H_{\lambda,\mu}(\theta^\ast)-L_{\lambda,\mu}^H(\theta_1) \leq \mathcal{O}\left( \frac{1}{1-\gamma}+\frac{\epsilon(1-\gamma)}{\norm{\frac{d_\rho^{\pi_{\theta^\ast}}}{\mu}}_\infty}\right).$$
 Substituting  the above results into \eqref{eq: big O of Gamma} gives rise to 
\begin{align*}
\Gamma_2+\frac{\Gamma_1+\Gamma_3}{B}
=&\widetilde{\mathcal{O}}\left(\frac{1+W}{(1-\gamma)^8} + \frac{\epsilon(1+W)(1-\gamma)}{\norm{\frac{d_\rho^{\pi_{\theta^\ast}}}{\mu}}_\infty} \right).
\end{align*}
 Thus, we obtain
\begin{align*}
&\hspace{-1.5cm}\frac{1}{T}\EE\left[ \sum_{t=1}^T \left(J_\rho(\theta^\ast)-J_\rho(\theta_t)\right)\right]\\
&\leq \widetilde{\mathcal{O}} \left(\norm{\frac{d_\rho^{\pi_{\theta^\ast}}}{\mu}}^2_\infty\frac{|\mathcal{S}|^2 |\mathcal{A}|^2(m+T)^{1/3}(1+W)}{\epsilon^2(1-\gamma)^{11}T}\right)
+\frac{3\epsilon}{4}\\
&\leq \norm{\frac{d_\rho^{\pi_{\theta^\ast}}}{\mu}}^2_\infty|\mathcal{S}|^2 |\mathcal{A}|^2(1+W)\cdot\widetilde{\mathcal{O}} \left(  \frac{ m^{1/3}}{\epsilon^2(1-\gamma)^{11}T} +\frac{1}{\epsilon^2(1-\gamma)^{11}T^{2/3}} \right)+\frac{3\epsilon}{4}.
\end{align*}
Since $m^{1/3}={\mathcal{O}}\left(\frac{\epsilon(1-\gamma)}{\norm{\frac{d_\rho^{\pi_{\theta^\ast}}}{\mu}}_\infty}+\frac{1}{(1-\gamma)^3} + \frac{W}{1-\gamma}\right)$,  one can write 
\begin{align*}
&\hspace{-1.5cm}\frac{1}{T}\EE\left[ \sum_{t=1}^T \left(J_\rho(\theta^\ast)-J_\rho(\theta_t)\right)\right]\\
&\leq \norm{\frac{d_\rho^{\pi_{\theta^\ast}}}{\mu}}^2_\infty|\mathcal{S}|^2 |\mathcal{A}|^2(1+W)\cdot\widetilde{\mathcal{O}} \left(\frac{1}{\epsilon^2(1-\gamma)^{14}T}+\frac{ W}{\epsilon^2(1-\gamma)^{12}T}  +\frac{1}{\epsilon^2(1-\gamma)^{11}T^{2/3}} \right)+\frac{3\epsilon}{4}.
\end{align*}
Finally, to guarantee $\frac{1}{T}\EE\left[ \sum_{t=1}^T \left(J_\rho(\theta^\ast)-J_\rho(\theta_t)\right)\right]\leq \epsilon$, 
it suffices to have 
\begin{align*}
T=\widetilde{\mathcal{O}}\left(\frac{\norm{\frac{d_\rho^{\pi_{\theta^\ast}}}{\mu}}^3_\infty|\mathcal{S}|^3|\mathcal{A}|^3(1+W)^\frac{3}{2}}{\epsilon^{\frac{9}{2}}(1-\gamma)^{\frac{33}{2}}}+ \frac{\norm{\frac{d_\rho^{\pi_{\theta^\ast}}}{\mu}}^{2}_\infty|\mathcal{S}|^2|\mathcal{A}|^2(1+W)}{\epsilon^{3}} \cdot\frac{W}{(1-\gamma)^{12}} \right).
\end{align*}

This completes the proof.
\end{proof}

\section{Proofs of results in Section \ref{sec: Fisher-non-degenerate parameterization}}\label{app:restricted}

\subsection{Proof of Lemma \ref{lemma: restricted J start -J t}} \label{sec: Proof of lemma: restricted J start -J t}

\begin{proof}
By the performance difference lemma \cite{kakade2002approximately}, we know that 
\begin{align} \label{eq: performance difference lemma}
\mathbb{E}_{(s,a)\sim v_\rho^{\pi_{\theta^\ast}}}\left[A^{\pi_{\theta_t}}(s, a)\right]=(1-\gamma)\left(J_\rho(\theta^\star)-J_\rho\left(\theta_{t}\right)\right).
\end{align}

In addition, by Assumption \ref{ass: transferred compatible function approximation error}, we know that the advantage function is also related to the defined \textit{transferred compatible function approximation error} that measures the richness of the policy parameterization:

 \begin{align} \label{eq: epiislob bias in proof}
\nonumber \epsilon_{\text{bias}} 
\geq & \EE_{(s,a)\sim v_\rho^{\pi_{\theta^\ast}}} [(A^{\pi_{\theta_t}}(s,a) - (1-\gamma) u_t^{\ast \top} \nabla \log \pi_{\theta_t}(a|s) )^2]\\
\geq & \left(  \EE_{(s,a)\sim v_\rho^{\pi_{\theta^\ast}}} [A^{\pi_{\theta_t}}(s,a) - (1-\gamma) u_t^{\ast \top} \nabla \log \pi_{\theta_t}(a|s) ] \right)^2
 \end{align}
 where the second inequality uses the Jensen's inequality.
 Then, by combining \eqref{eq: performance difference lemma} and \eqref{eq: epiislob bias in proof}, we have
 \begin{align*}
     \sqrt{\epsilon_{\text{bias}}}\geq  & \EE_{(s,a)\sim v_\rho^{\pi_{\theta^\ast}}} [A^{\pi_{\theta_t}}(s,a) - (1-\gamma) u_t^{\ast \top} \nabla \log \pi_{\theta_t}(a|s) ] \\
     =& (1-\gamma)\left(J_\rho(\theta^\star)-J_\rho\left(\theta_{t}\right)\right) - \EE_{(s,a)\sim v_\rho^{\pi_{\theta^\ast}}} [(1-\gamma) u_t^{\ast \top} \nabla \log \pi_{\theta_t}(a|s)].
 \end{align*}
The rearrangement of the above inequality gives 
 \begin{align*}
\left(J_\rho(\theta^\star)-J_\rho\left(\theta_{t}\right)\right)\leq & \frac{1}{(1-\gamma)} \sqrt{\epsilon_{\text{bias}}} + \EE_{(s,a)\sim v_\rho^{\pi_{\theta^\ast}}} [ u_t^{\ast \top} \nabla \log \pi_{\theta_t}(a|s)]\\
\leq&  \frac{1}{(1-\gamma)} \sqrt{\epsilon_{\text{bias}}} + \EE_{(s,a)\sim v_\rho^{\pi_{\theta^\ast}}} [ \norm{u_t^{\ast}}  \norm{\nabla \log \pi_{\theta_t}(a|s)}]\\
\leq & \frac{1}{(1-\gamma)} \sqrt{\epsilon_{\text{bias}}} + M_g \norm{u_t^{\ast}}.
 \end{align*}
 
In addition, by the definition of $u_t^{\ast}$, we have
 \begin{align*}
J_\rho(\theta^\star)-J_\rho\left(\theta_{t}\right)
\leq & \frac{1}{(1-\gamma)} \sqrt{\epsilon_{\text{bias}}} + M_g \norm{F^{-1}(\theta_t) \nabla J_\rho(\theta_t)} \\
\leq & \frac{1}{(1-\gamma)} \sqrt{\epsilon_{\text{bias}}} + \frac{M_g}{\mu_F} \norm{\nabla J_\rho(\theta_t)},
 \end{align*}
 where the second inequality follows from Assumption \ref{ass: fisher info}. This completes the proof.

\end{proof}

\subsection{Proof of Lemma \ref{lemma: restricted non-adaptive, sum eta e^2}} \label{sec: C.2}
\begin{lemma}\label{lemma: restricted eta{t-1}^{-1} ||{et}||2}
Under Assumption \ref{ass: 1}, suppose that the stochastic policy gradient $u_t^H$ is generated by Algorithm \ref{alg:IS-MBPG-R} with the restricted parameterization. Let  $e^H_t=\nabla J_{\rho}^H(\theta_t)-u_t^H$. Then
\begin{align*}
\EE\left[\eta_{t-1}^{-1} ||{e^H_t}||_2^2\right]\leq&\EE\left[\eta_{t-1}^{-1} (1-\beta_t)^2||e^H_{t-1}||_2^2 +\frac{2\eta_{t-1}^{-1}\beta_t^2\sigma^2}{B}+\frac{4b^2\eta_{t-1}^{-1}(1-\beta_t)^2}{B}||\theta_t-\theta_{t-1}||_2^2\right],
\end{align*}
where $b^2=L_g^2+G^2C_w^2$, $L_g=M_h/(1-\gamma)^2$, $G=M_g/(1-\gamma)^2$ and $C_w=\sqrt{H(2HM_g^2+M_h)(W+1)}$.
\end{lemma}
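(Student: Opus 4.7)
The plan is to mirror the argument of Lemma \ref{lemma: softmax+KL eta{t-1}^{-1} ||{et}||2} almost verbatim, since the only structural difference here is the absence of the log-barrier regularization term: the error $e_t^H = \nabla J_\rho^H(\theta_t) - u_t^H$ no longer carries the extra $\frac{\lambda}{|\mathcal{A}||\mathcal{S}|}\sum_{s,a}\nabla\log\pi_{\theta_t}(a|s)$ contribution, but that term cancels in the telescoping identity used in the soft-max case anyway.

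First I would expand the STORM-style recursion from \eqref{eq: momentum-based PG}, writing
\begin{align*}
u_t^H - u_{t-1}^H &= -\beta_t u_{t-1}^H + \tfrac{\beta_t}{B}\sum_{i=1}^B g(\tau_i^H|\theta_t,\rho) \\
&\quad + \tfrac{(1-\beta_t)}{B}\sum_{i=1}^B\bigl(g(\tau_i^H|\theta_t,\rho) - w(\tau_i^H|\theta_{t-1},\theta_t) g(\tau_i^H|\theta_{t-1},\rho)\bigr).
\end{align*}
Substituting this into $e_t^H = \nabla J_\rho^H(\theta_t) - u_t^H$ and regrouping terms, I would show that $e_t^H$ decomposes as $(1-\beta_t) e_{t-1}^H$ plus two mean-zero noise contributions: (a) a ``vanilla'' gradient noise $\beta_t\bigl(\nabla J_\rho^H(\theta_t) - \tfrac{1}{B}\sum_i g(\tau_i^H|\theta_t,\rho)\bigr)$, and (b) an importance-sampling correction $\tfrac{(1-\beta_t)}{B}\sum_i\bigl(g(\tau_i^H|\theta_t,\rho) - w(\tau_i^H|\theta_{t-1},\theta_t) g(\tau_i^H|\theta_{t-1},\rho) - (\nabla J_\rho^H(\theta_t)-\nabla J_\rho^H(\theta_{t-1}))\bigr)$. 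The key fact is that the conditional expectations of (a) and (b) vanish, so taking squared norm and conditional expectation kills the cross term with $(1-\beta_t) e_{t-1}^H$.

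Next I would apply Young's inequality to separate (a) and (b), use the i.i.d. structure to pull out the $1/B$ factor (via $\EE\|\tfrac{1}{B}\sum_i(\xi_i-\EE\xi_i)\|^2 = \tfrac{1}{B}\EE\|\xi_1-\EE\xi_1\|^2$), and invoke $\text{Var}(g(\tau_H|\theta_t,\rho))\leq \sigma^2$ from Proposition \ref{lemma:Properties of PGT estimator} for term (a). For term (b), after dropping the mean (using $\EE\|\xi-\EE\xi\|^2\leq \EE\|\xi\|^2$), I would split
\[
g(\tau_i^H|\theta_t,\rho) - w(\tau_i^H|\theta_{t-1},\theta_t) g(\tau_i^H|\theta_{t-1},\rho) = \bigl(g(\tau_i^H|\theta_t,\rho)-g(\tau_i^H|\theta_{t-1},\rho)\bigr) + \bigl(1-w(\tau_i^H|\theta_{t-1},\theta_t)\bigr)g(\tau_i^H|\theta_{t-1},\rho)
\]
and bound the two pieces separately using the $L_g$-Lipschitz property of $g$ and the $G$-boundedness of $g$ from Proposition \ref{lemma:Properties of PGT estimator}, followed by Proposition \ref{prop: Lemma 6.1 in xu2019sample} to bound $\text{Var}(w(\tau_i^H|\theta_{t-1},\theta_t)) \leq C_w^2\|\theta_t-\theta_{t-1}\|^2$. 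Combining yields $2(L_g^2 + G^2 C_w^2)\|\theta_t-\theta_{t-1}\|^2 = 2b^2\|\theta_t-\theta_{t-1}\|^2$.

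Assembling the pieces and multiplying through by $\eta_{t-1}^{-1}$ delivers the stated inequality. No real obstacle arises because the Fisher-non-degenerate setting only removes the regularizer—every tool (bounded variance, Lipschitz $g$, variance bound on the IS weight) remains identical to the soft-max proof. The one subtlety worth checking is that Assumption \ref{ass: 1} (used implicitly for Proposition \ref{lemma:Properties of PGT estimator} and Proposition \ref{prop: Lemma 6.1 in xu2019sample}) is invoked here for the general Fisher-non-degenerate policy class rather than the soft-max one, but this is precisely the assumption imposed upstream.
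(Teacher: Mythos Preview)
Your proposal is correct and matches the paper's approach exactly: the paper's own proof of this lemma simply states that it is identical to the soft-max case (Lemma~\ref{lemma: softmax+KL eta{t-1}^{-1} ||{et}||2}) with $M_g$ and $M_h$ taken from Assumption~\ref{ass: 1}, and omits the details. You have faithfully reproduced that argument, including the observation that the regularizer term present in the soft-max error cancels (so the two proofs are literally the same computation on $e_t^H = \nabla J_\rho^H(\theta_t) - u_t^H$).
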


\begin{proof}
This proof is similar to the proof of Lemma \ref{lemma: softmax+KL eta{t-1}^{-1} ||{et}||2} with $M_g$ and $M_h$ defined in Assumption \ref{ass: 1}. The details are omitted for brevity. 
\end{proof}

\subsubsection{Proof of Lemma \ref{lemma: restricted non-adaptive, sum eta e^2}}
\begin{proof}
Let $e_t^H=\nabla J_\rho^H(\theta_t)-u_t^H$. The function $J^H_\rho(\theta)$ is $L$-smooth due to Lemma \ref{lemma:Properties of PGT estimator}. Moreover, because of $m\geq (2Lk)^3$, it holds that $\eta_t\leq\eta_0=\frac{k}{m^{1/3}}\leq\frac{1}{2L}$. Since $\eta_t\leq\frac{1}{2L}$, we have $\beta_{t+1}=c\eta_t^2\leq\frac{c\eta_t}{2L}\leq \frac{ck}{2Lm^{1/3}}\leq 1$.
It follows from Lemma \ref{lemma: restricted eta{t-1}^{-1} ||{et}||2} that
\begin{align*}
 &\EE\left[\eta_{t}^{-1} ||{e^H_{t+1}}||_2^2-\eta_{t-1}^{-1} ||{e^H_t}||_2^2\right]\\
 &\leq\EE\left[(\eta_{t}^{-1} (1-\beta_{t+1})^2-\eta_{t-1}^{-1}) ||e^H_{t}||_2^2 +\frac{2\eta_{t}^{-1}\beta_{t+1}^2\sigma^2}{B}+\frac{4b^2\eta_{t}^{-1}(1-\beta_{t+1})^2}{B}||\theta_{t+1}-\theta_{t}||_2^2\right]\\
 &\leq\EE\left[(\eta_{t}^{-1} (1-\beta_{t+1})-\eta_{t-1}^{-1}) ||e^H_{t}||_2^2 +\frac{2\eta_{t}^{-1}\beta_{t+1}^2\sigma^2}{B}+\frac{4b^2\eta_{t}^{-1}}{B}||\theta_{t+1}-\theta_{t}||_2^2\right],
\end{align*}
where the last inequality holds by $0<\beta_{t+1}\leq 1$. Since the function $x^{1/3}$ is concave, we have $(x+y)^{1/3}\leq x^{1/3}+yx^{-2/3}/3$. As a result
\begin{align*}
 \eta_{t}^{-1}-\eta_{t-1}^{-1}  =&\frac{1}{k}\left((m+t)^{1/3} -(m+t-1)^{1/3}\right)\leq \frac{1}{3k(m+t-1)^{2/3}}\\
 \leq &\frac{1}{3k(m/2+t)^{2/3}}\leq\frac{2^{2/3}}{3k^3}\eta_t^2\leq \frac{2^{2/3}}{6k^3L}\eta_t\leq \frac{1}{3k^3L}\eta_t,
\end{align*}
where the second inequality is due to $m\geq 2$, and the fifth inequality holds by $0<\eta\leq \frac{1}{2L}$.
Then, it holds that
\begin{align*}
\left(\eta_{t}^{-1} (1-\beta_{t+1})-\eta_{t-1}^{-1}\right) ||e^H_{t}||_2^2=&\left(\frac{1}{3k^3L}-c\right)\eta_t\norm{e^H_t}_2^2=-96b^2\eta_t\norm{e^H_t}_2^2,
\end{align*}
where the last equality holds by $c=\frac{1}{3k^3L}+96b^2$. Combining the above results leads to
\begin{align} \label{eq: restricted eta t e t+1-eta t-1 et}
  &\EE\left[\eta_{t}^{-1} ||{e^H_{t+1}}||_2^2-\eta_{t-1}^{-1} ||{e^H_t}||_2^2\right]\leq\EE\left[-96b^2\eta_{t} ||e^H_{t}||_2^2 +\frac{2\eta_{t}^{-1}\beta_{t+1}^2\sigma^2}{B}+\frac{4b^2\eta_{t}^{-1}}{B}||\theta_{t+1}-\theta_{t}||_2^2\right].
\end{align}
By summing up the above inequality and dividing both sides by $96b^2$, we obtain
\begin{align*} 
\frac{1}{96b^2} \left( \EE\left[\frac{\norm{e^H_{T+1}}_2^2}{\eta_T} - \frac{\norm{e^H_{1}}_2^2}{\eta_{0}}\right]\right)  \leq& \sum_{t=1}^T\EE\left[ \frac{c^2\eta_t^3 \sigma^2}{48b^2B} -\eta_t\norm{e^H_t}_2^2 +\frac{1}{24\eta_tB} \norm{\theta_{t+1}-\theta_t}_2^2\right]\\
 \leq& \sum_{t=1}^T\EE\left[ \frac{c^2\eta_t^3 \sigma^2}{48b^2B} -\eta_t\norm{e^H_t}_2^2 +\frac{\eta_t}{24B} \norm{u_t^H}_2^2\right].
\end{align*}
For $\eta_t \norm{u_{t}^H}_2^2$, it follows from Lemma \ref{lemma: due to smoothness} that
\begin{align*}
 \frac{\eta_t}{4} \norm{u_{t,\lambda}^H}_2^2\leq   J^H_{\rho}(\theta_{t+1})-J_{\rho}^H(\theta_t) +\frac{\eta_t}{2}  \norm{e_t^H}^2.
\end{align*}

Then, it holds that

Then, Lemma \ref{lemma: due to smoothness} can be used to obtain
\begin{align} 
&\nonumber \frac{1}{96b^2} \left( \EE\left[\frac{\norm{e^H_{T+1}}_2^2}{\eta_T} - \frac{\norm{e^H_{1}}_2^2}{\eta_{0}}\right]\right)\\
&\nonumber\leq \sum_{\tau=1}^T\EE\left[ \frac{1}{48b^2B}c^2\eta_t^3 \sigma^2 -\frac{(12B-1)\eta_t}{12B} \norm{e^H_t}_2^2+ \frac{1}{6B} \left(J_\rho^H(\theta_{t+1})-J_\rho^H(\theta_{t}) \right)\right] \\
&\nonumber\leq\sum_{t=1}^T \left(\frac{1}{48b^2B}c^2\eta_t^3 \sigma^2 -\EE\left[ \frac{11\eta_t}{12} \norm{e^H_t}_2^2\right]\right)+ \frac{1}{6B} \left(J_\rho^H(\theta_{T+1})-J_\rho^H(\theta_{1}) \right)\\
&\nonumber\leq  \frac{c^2 \sigma^2k^3}{48b^2B} \sum_{t=1}^T \frac{1}{m+t} - \sum_{t=1}^T\EE\left[\frac{11\eta_t}{12}\norm{e^H_t}_2^2\right]+ \frac{1}{6B} \left(J_\rho^H(\theta^\ast)-J_\rho^H(\theta_{1}) \right)\\
&\nonumber\leq  \frac{c^2 \sigma^2k^3}{48b^2B} \sum_{t=1}^T \frac{1}{2+t} - \sum_{t=1}^T\EE\left[\frac{11\eta_t}{12}\norm{e^H_t}_2^2\right]+ \frac{1}{6B} \left(J_\rho^H(\theta^\ast)-J_\rho^H(\theta_{1}) \right)\\
&\leq  \frac{c^2\sigma^2k^3\ln{(T+2)}}{48b^2B} - \sum_{t=1}^T\EE\left[\frac{11\eta_t}{12}\norm{e^H_t}_2^2\right]+ \frac{1}{6B} \left(J_\rho^H(\theta^\ast)-J_\rho^H(\theta_{1}) \right).
\end{align}
Rearranging the above inequality gives rise to
\begin{align*}
 \sum_{t=1}^T\EE\left[\frac{11\eta_t}{12}\norm{e^H_t}_2^2\right]\leq  & \frac{c^2\sigma^2k^3\ln{(T+2)}}{48b^2B} +\frac{1}{96b^2\eta_{0}}  \EE\left[ \norm{e^H_{1}}_2^2\right]+\frac{1}{6B} \left(J_\rho^H(\theta^\ast)-J_\rho^H(\theta_{1}) \right)\\
 \leq &\frac{1}{B}\left(
 \frac{c^2\sigma^2k^3\ln{(T+2)}}{48b^2} +\frac{m^{1/3}\sigma^2}{96b^2k} +\frac{1}{6} \left(J_\rho^H(\theta^\ast)-J_\rho^H(\theta_{1}) \right) \right).
\end{align*}
Multiplying both sides by $\frac{12}{11}$ yields that
\begin{align} \label{eq: restrcited eta e square}
  \sum_{t=1}^T\EE\left[\eta_t\norm{e^H_t}_2^2\right]\leq &\frac{1}{B}\left(
 \frac{c^2\sigma^2k^3\ln{(T+2)}}{44b^2} +\frac{m^{1/3}\sigma^2}{88b^2k} +\frac{1}{22} \left(J_\rho^H(\theta^\ast)-J_\rho^H(\theta_{1}) \right) \right)\\
 \leq &\frac{1}{B}\left(
 \frac{c^2\sigma^2k^3\ln{(T+2)}}{44b^2} +\frac{m^{1/3}\sigma^2}{88b^2k} +\frac{1}{22(1-\gamma)}\right),
\end{align}
where the last inequality holds due to $J_\rho^H(\theta^\ast)-J_\rho^H(\theta_{1}) \leq \frac{1}{1-\gamma}$.
Next, we define a Lyapunov function $\Phi_t(\theta_t)=J_\rho^H(\theta_t)-\frac{1}{192b^2\eta_{t-1}} \norm{e^H_t}^2$ for all $t\geq 1$. One can write
\begin{align*}
\EE[ \Phi_{t+1}-\Phi_{t}  ]=&\EE\left[J_\rho^H(\theta_{t+1}) -J_\rho^H(\theta_t) -\frac{1}{192b^2\eta_t} \norm{e^H_{t+1} }_2^2+\frac{1}{192b^2 \eta_{t-1}}\norm{e^H_t}_2^2 \right]\\
\geq &\EE\left[ -\frac{\eta_t}{2}\norm{e^H_t}_2^2 +\frac{\eta_t}{4} \norm{u^H_{t} }_2 -\frac{1}{192b^2\eta_t} \norm{e^H_{t+1} }_2^2+\frac{1}{192b^2 \eta_{t-1}}\norm{e^H_t}_2^2 \right]\\
\geq &\EE\left[\frac{\eta_t}{4} \norm{u^H_{t} }_2 -\frac{\beta_t^2\sigma^2}{96b^2B\eta_{t}}-\frac{\eta_{t}}{48B}||u^H_{t}||_2^2\right]\\
\geq &\EE\left[\frac{11\eta_t}{48} \norm{u^H_{t} }_2 -\frac{c^2\eta_{t}^3\sigma^2}{96b^2B}\right]
\end{align*}
where the first inequality holds by Lemma \ref{lemma: due to smoothness} and the second inequality follows from \eqref{eq: restricted eta t e t+1-eta t-1 et}. Summing the above inequality over $t$ from $1$ to $T$ yields that
\begin{align*}
\sum_{t=1}^{T} \EE \left[\eta_t \norm{u_t^H}_2^2\right]
 \leq&\EE\left[\frac{48}{11}(\Phi_{T+1} -\Phi_{1}) + \sum_{t=1}^{T} \frac{c^2\eta_t^3 \sigma^2}{22b^2B}\right]\\
    \leq &\EE\left[\frac{48}{11}(J_\rho^H(\theta_{T+1}) -J_\rho^H(\theta_{1})) +\frac{1}{44b^2\eta_0}\EE\left[||e^H_1||_2^2\right]+  \frac{c^2 \sigma^2k^3}{22b^2B}\sum_{t=1}^{T}\frac{1}{m+t}  \right]\\
   \leq &\EE\left[\frac{48}{11}(J_\rho^H(\theta_{T+1}) -J_\rho^H(\theta_{1})) +\frac{1}{44b^2\eta_0}\EE\left[||e^H_1||_2^2\right]+  \frac{c^2 \sigma^2k^3}{22b^2B}\sum_{t=1}^{T}\frac{1}{2+t}  \right]\\
 \leq &\frac{48}{11}\left(J_\rho^H(\theta^\ast) -J_\rho^H(\theta_{1})\right) +\frac{\sigma^2m^{1/3}}{44b^2kB}+  \frac{c^2 \sigma^2k^3}{22b^2B}\ln{(2+T)}  \\
  \leq &\frac{1}{1-\gamma}\frac{48}{11} +\frac{\sigma^2m^{1/3}}{44b^2kB}+  \frac{c^2 \sigma^2k^3}{22b^2B}\ln{(2+T)}  \\
=  &\Gamma_2 +\frac{\Gamma_3}{B}.
\end{align*}
where the last inequality is due to $J_\rho^H(\theta^\ast) -J_\rho^H(\theta_{1})\leq \frac{1}{1-\gamma}$. This completes the proof.

It results from \eqref{eq: restrcited eta e square} that
\begin{align*}
  \sum_{t=1}^{T} \EE \left[\eta_t \norm{\nabla J^H_\rho(\theta_t)}_2^2\right]
  \leq   \sum_{t=1}^{T} \EE \left[\eta_t \norm{u_t^H}_2^2\right]+\sum_{t=1}^{T} \EE \left[\eta_t \norm{e^H_t}_2^2\right]
  \leq  \Gamma_2 +\frac{\Gamma_1+\Gamma_3}{B}.
\end{align*}
 Since $\eta_t=\frac{k}{(m+t)^{1/3}}$ is decreasing, we have
\begin{align*}
 \sum_{t=1}^{T} \EE \left[ \norm{\nabla J^H_\rho(\theta_t)}_2^2\right]\leq& 1/\eta_T \sum_{t=1}^{T} \EE \left[\eta_t \norm{\nabla J^H_\rho(\theta_t)}_2^2\right]\\
 =&\left(\frac{\Gamma_2}{k} +\frac{\Gamma_1+\Gamma_3}{kB}\right)(m+T)^{1/3}.
\end{align*}

Finally, one can use Jensen's inequality to conclude that
\begin{align*}
    \frac{1}{T} \sum_{t=1}^{T} \EE \left[\norm{\nabla J^H_\rho(\theta_t)}_2\right] \leq & \left(\frac{1}{T} \sum_{t=1}^{T} \EE \norm{\nabla J^H_\rho(\theta_t)}_2^2\right)^{1/2}\\
    \leq & \sqrt{\frac{\Gamma_2}{k} +\frac{\Gamma_1+\Gamma_3}{kB}} \left(\frac{m^{1/6}}{\sqrt{T}} +\frac{1}{T^{1/3}}\right ),
\end{align*}
where the last inequality follows from the inequality $(a+b)^{1/2}\leq a^{1/2}+b^{1/2}$ for all $a, b>0$.

\end{proof}

\subsection{Proof of Theorem \ref{thm: theorem for general parameter}} \label{sec: proof thm: theorem for general parameter}
\begin{proof}
By Lemma \ref{lemma: restricted J start -J t} and the triangle inequality, we have
\begin{align} \label{eq: general parameter optimality}
\nonumber J_\rho\left(\theta^\ast\right)-\frac{1}{T} \sum_{t=1}^{T} J_\rho\left(\theta_{t}\right) & \leq \frac{\sqrt{\varepsilon_{\text {bias }}}}{1-\gamma} + \frac{M_g}{\mu_F} \frac{1}{T} \sum_{t=1}^{T}\norm{\nabla J^H_\rho(\theta_t)} + \frac{M_g}{\mu_F}\max_{t=1,\ldots,T} \left\{  \norm{\nabla J^H_\rho(\theta_t)-\nabla J_\rho(\theta_t)} \right\}\\
& \leq \frac{\sqrt{\varepsilon_{\text {bias }}}}{1-\gamma} + \frac{M_g}{\mu_F} \frac{1}{T} \sum_{t=1}^{T}\norm{\nabla J^H_\rho(\theta_t)} + \frac{M_g^2}{\mu_F} \left(\frac{H+1}{1-\gamma} +\frac{\gamma}{(1-\gamma)^2}\right) \gamma^H,
\end{align}
where the last inequality follows from Proposition \ref{lemma:Properties of PGT estimator}.
Then, due to Lemma \ref{lemma: restricted non-adaptive, sum eta e^2}, we know that 
\begin{align*} 
 \nonumber J_\rho\left(\theta^\ast\right)-\frac{1}{T} \sum_{t=1}^{T} J_\rho\left(\theta_{t}\right) & \leq \frac{\sqrt{\varepsilon_{\text {bias }}}}{1-\gamma} + \frac{M_g}{\mu_F} \sqrt{\frac{\Gamma_2}{k} +\frac{\Gamma_1+\Gamma_3}{kB}} \left(\frac{m^{1/6}}{\sqrt{T}} +\frac{1}{T^{1/3}}\right ) + \frac{M_g^2}{\mu_F} \left(\frac{H+1}{1-\gamma} +\frac{\gamma}{(1-\gamma)^2}\right) \gamma^H,
\end{align*}
Notice that, in order to guarantee 
\begin{align*}
 \frac{M_g^2}{\mu_F} \left(\frac{H+1}{1-\gamma} +\frac{\gamma}{(1-\gamma)^2}\right) \gamma^H \leq \frac{\epsilon}{2},
\end{align*}
it suffices to have
\begin{align*}
H={\mathcal{O}}\left( \log_{\gamma} \left( \mu_F(1-\gamma)\epsilon \right) \right).
\end{align*}
Recall that 
$\Gamma_1=\left(
 \frac{c^2\sigma^2k^3\ln{(T+2)}}{48b^2} +\frac{m^{1/3}}{96b^2k}  \sigma^2+\frac{1}{22(1-\gamma)} \right)$ , $\Gamma_2=\frac{48}{11(1-\gamma)}$, and
$\Gamma_3=\frac{\sigma^2m^{1/3}}{44b^2k^2}+  \frac{c^2 \sigma^2k^3}{22kb^2}\ln{(2+T)}$. By only considering the dependencies on the parameters $c, \sigma, b, \gamma$ and $m$, we have  
\begin{align} \label{eq: big O of Gamma general}
\nonumber\frac{\Gamma_2}{k} +\frac{\Gamma_1+\Gamma_3}{kB} =&\widetilde{\mathcal{O}}\left(\frac{c^2\sigma^2}{b^2}+\frac{m^{1/3}\sigma^2}{b^2} + \frac{\sigma^2m^{1/3}}{b^2}+ \frac{c^2\sigma^2}{b^2}+\frac{1}{1-\gamma}\right)\\
     =&\widetilde{\mathcal{O}}\left(\frac{\sigma^2}{b^2} \left(c^2+m^{1/3} \right)+\frac{1}{1-\gamma}\right).
 \end{align}
 In addition,  by the definitions $b^2=L_g^2+G^2C_w^2$, $L_g=M_h/(1-\gamma)^2$, $G=M_g/(1-\gamma)^2$, $C_w=\sqrt{H(2HM_g^2+M_h)(W+1)}$, $c=\frac{1}{3k^3L} +96b^2$, and $m=\max\{2,(2L k)^3,(\frac{ck}{2L})^3\}$,
 we know that 
 \begin{align*}
  b^2={\mathcal{O}}\left(\frac{1+W}{(1-\gamma)^4}\right), \ \sigma^2={\mathcal{O}}\left(\frac{1}{(1-\gamma)^4}\right), \ c={\mathcal{O}}\left(\frac{1+W}{(1-\gamma)^4} \right), \ m^{1/3}={\mathcal{O}}\left(\frac{1}{(1-\gamma)^3} + \frac{W}{1-\gamma}\right).
 \end{align*}
Substituting the above results into \eqref{eq: big O of Gamma general} gives rise to
\begin{align*}
\frac{\Gamma_2}{k} +\frac{\Gamma_1+\Gamma_3}{kB}
=&\widetilde{\mathcal{O}}\left(\frac{1+W}{(1-\gamma)^8} \right).
\end{align*}
 Thus, we obtain
 \begin{align*} 
 \nonumber J_\rho\left(\theta^\ast\right)-\frac{1}{T} \sum_{t=1}^{T} J_\rho\left(\theta_{t}\right) 
 & \leq \frac{\sqrt{\varepsilon_{\text {bias }}}}{1-\gamma} + \frac{M_g}{\mu_F}  \left(\frac{m^{1/6}}{\sqrt{T}} +\frac{1}{T^{1/3}}\right )\cdot \widetilde{\mathcal{O}}\left(\frac{\sqrt{1+W}}{(1-\gamma)^4} \right) + \frac{\epsilon}{2}\\
 & \leq \frac{\sqrt{\varepsilon_{\text {bias }}}}{1-\gamma} +   \widetilde{\mathcal{O}}\left(\left(\frac{1}{\sqrt{(1-\gamma)^3T}} +\frac{\sqrt{W}}{\sqrt{(1-\gamma)T}} +\frac{1}{T^{1/3}}\right )\frac{\sqrt{1+W}}{(1-\gamma)^4\mu_F} \right) + \frac{\epsilon}{2}
\end{align*}
In order to guarantee 
\begin{align*}
    \widetilde{\mathcal{O}}\left(\left(\frac{1}{\sqrt{(1-\gamma)^3T}} +\frac{\sqrt{W}}{\sqrt{(1-\gamma)T}} +\frac{1}{T^{1/3}}\right )\frac{\sqrt{1+W}}{(1-\gamma)^4\mu_F} \right)\leq \frac{\epsilon}{2},
\end{align*}
it suffices to take 
\begin{align*}
   T=\Tilde{\mathcal{O}}\left( \frac{(1+W)^{\frac{3}{2}}}{\epsilon^3\mu_F^3(1-\gamma)^{12}}+ \frac{1+W}{\epsilon^2 \mu_F^2(1-\gamma)^{11}}+ \frac{W(1+W)}{\epsilon^2 \mu_F^2(1-\gamma)^9}\right).
\end{align*}
This completes the proof.
\end{proof}

\end{document}